\renewcommand{\P}{\mathbb{P}}
\def\eqref#1{equation~\ref{#1}}
\def\1{\bm{1}}
\def\va{{\bm{a}}}
\def\vc{{\bm{c}}}
\def\vn{{\bm{n}}}
\def\vu{{\bm{u}}}
\def\vv{{\bm{v}}}
\def\vw{{\bm{w}}}
\def\vx{{\bm{x}}}
\def\vy{{\bm{y}}}
\def\vz{{\bm{z}}}
\def\mA{{\bm{A}}}
\def\mG{{\bm{G}}}
\def\mI{{\bm{I}}}
\def\mJ{{\bm{J}}}
\def\mN{{\bm{N}}}
\def\mP{{\bm{P}}}
\def\mW{{\bm{W}}}
\def\mX{{\bm{X}}}
\def\mZ{{\bm{Z}}}
\DeclareMathAlphabet{\mathsfit}{\encodingdefault}{\sfdefault}{m}{sl}
\SetMathAlphabet{\mathsfit}{bold}{\encodingdefault}{\sfdefault}{bx}{n}
\newcommand{\E}{\mathbb{E}}
\newcommand{\R}{\mathbb{R}}
\newcommand{\reals}{\ensuremath{\mathbb{R}}}
\newcommand{\sphere}{\ensuremath{\mathbb{S}}}
\newcommand{\expec}{\ensuremath{\mathbb{E}}}
\newcommand{\prob}{\ensuremath{\mathbb{P}}}
\newtheorem{theorem}{Theorem}[section]
\newtheorem{corollary}{Corollary}[theorem]
\newtheorem{definition}{Definition}[section]
\newcommand{\cA}{\ensuremath{\mathcal{A}}}
\newcommand{\cN}{\ensuremath{\mathcal{N}}}
\newcommand{\cM}{\ensuremath{\mathcal{M}}}
\newcommand{\cB}{\ensuremath{\mathcal{B}}}
\newcommand{\cF}{\ensuremath{\mathcal{F}}}
\newcommand{\cX}{\ensuremath{\mathcal{X}}}
\newcommand{\cD}{\ensuremath{\mathcal{D}}}
\newcommand{\cI}{\ensuremath{\mathcal{I}}}
\newcommand{\cG}{\ensuremath{\mathcal{G}}}
\newtheorem{assumption}{Assumption}
\newcommand{\mc}{\mathcal}
\DeclareMathOperator{\N}{\mathbb{N}}
\DeclareMathOperator{\vecspan}{span}
\DeclareMathOperator{\lin}{lin}
\newtheorem{lemma}[theorem]{Lemma}
\title{Benign overfitting in leaky ReLU networks with moderate input dimension}
\author{Kedar Karhadkar$^{1*}$ \quad Erin George$^{1*}$ \quad Michael Murray$^{1}$ \quad Guido Mont\'ufar$^{12}$ \quad Deanna Needell$^{1}$\\
\texttt{\{kedar,egeo,mmurray,montufar,deanna\}@math.ucla.edu }\\
$^1$UCLA \quad $^2$Max Planck Institute for Mathematics in the Sciences\\
$^*$Equal contribution}
\begin{document}

\maketitle
\begin{abstract}
The problem of benign overfitting asks whether it is possible for a model to perfectly fit noisy training data 
and still generalize well. 
We study benign overfitting in 
two-layer leaky ReLU networks trained with the hinge loss on a binary classification task. 
We consider input data that can be decomposed into the sum of a common signal and a random noise component, that lie on subspaces orthogonal to one another. We characterize conditions on the signal to noise ratio (SNR) of the model parameters giving rise to 
benign versus non-benign (or harmful) overfitting: 
in particular, if the SNR is high then benign overfitting occurs, conversely if the SNR is low then harmful overfitting occurs. We attribute both benign and non-benign overfitting to an approximate margin maximization property and show that leaky ReLU networks trained on hinge loss with gradient descent (GD) satisfy this property. In contrast to prior work 
we do not require the training data to be nearly orthogonal.  Notably, for input dimension $d$ and training sample size $n$, while results in prior work require $d = \Omega(n^2 \log 
n)$, 
here we require only $d = \Omega\left(n\right)$.
\end{abstract}

\section{Introduction}
Intuition from learning theory suggests that fitting noise during training reduces a model's performance on 
test data. However, it has been observed in some settings that machine learning models can interpolate noisy training data with only \textit{nominal} cost to their generalization performance~\citep{zhangunderstanding, pmlr-v80-belkin18a, belkin2019reconciling}, a phenomenon referred to as \textit{benign overfitting}. Establishing theory that can explain this phenomenon has attracted much interest in recent years and there is now a rich body of work on this topic particularly in the context of linear models. However, the study of benign overfitting in the context of non-linear models, in particular shallow ReLU or leaky ReLU networks, has additional technical challenges and subsequently is less well advanced.

Much of the effort in regard to theoretically characterizing benign overfitting focuses on showing, under an appropriate scaling of the dimension of the input domain $d$, size of the training sample $n$, number of corruptions $k$ and number of model parameters $p$ that a model can interpolate noisy training data while achieving an arbitrarily small generalization error. Such characterizations of benign overfitting position it as a \textit{high dimensional phenomenon}\footnote{We provide a formal definition of benign overfitting as a high dimensional phenomenon in Appendix~\ref{subsec:formalize-bo}.}: indeed, the decrease in generalization error is achieved by escaping to higher dimensions at some rate relative to the other aforementioned hyperparameters. However, for these mathematical results to be relevant for explaining benign overfitting as observed in practice, clearly the particular scaling of $d$ with respect to $n,k$ and $p$ needs to reflect the ratios seen in practice. Although a number of works, which we discuss in Section \ref{subsec:related}, establish benign overfitting results for shallow neural networks, a key and significant limitation they share is the requirement that the input features of the training data are at least approximately orthogonal to one another. 
To study benign overfitting, these prior works typically assume the input features consist of a small, low-rank signal component plus an isotropic noise term. Therefore, for the near orthogonality property to hold with high probability it is required that 
the input dimension $d$ scales as $d = \Omega(n^2 \log
n)$ or higher. This assumption 
highly restricts the applicability of these results for explaining benign overfitting in practice. 

In this work we assume only $d = \Omega(n)$ and establish both harmful and benign overfitting results for shallow leaky ReLU networks trained via gradient descent (GD) on the hinge loss. In particular, we consider $n$ data point pairs $(\vx_i, y_i) \in \R^d \times \{\pm 1\}$, where, for some vector $\vv \in \sphere^{d-1}$ and scalar $\gamma \in [0,1]$, the input features are drawn from a pair of Gaussian clusters $\vx_i \sim \cN(\pm \sqrt{\gamma} \vv, \sqrt{1 - \gamma}\tfrac{1}{d}(\textbf{I}_d - \vv \vv^T))$ and $y_i = \operatorname{sign}(\langle \expec[\vx_i], \vv \rangle)$. The training data is noisy in that $k$ of the $n$ points in the training sample have their output label flipped. We assume equal numbers of positive and negative points among clean and corrupt ones. 
We provide a full description of our setup and assumptions in Section \ref{sec:prelims}. Our proof techniques are novel and 
identify a new condition allowing for the analysis of benign and harmful overfitting which we term \textit{approximate margin maximization}, wherein the norm of the network parameters is upper bounded by a constant of the norm of the max-margin linear classifier.

\subsection{Summary of contributions}
Our key results are summarized as follows.
  \begin{itemize}
      \item In Theorem \ref{thm:perceptron-leakyrelu}, we prove that a leaky ReLU network trained on linearly separable data with gradient descent and the hinge loss will attain zero training loss in finitely many iterations. Moreover, the network weight matrix $\mW$ at convergence will be approximately max-margin in the sense that $\|\mW\| = O\left(\frac{\|\vw^*\|}{\alpha \sqrt{m}}\right)$, where $\alpha$ is the leaky parameter of the activation function, $m$ is the width of the network, and $\vw^*$ is the max-margin 
      linear classifier. 
      We apply this result to derive generalization bounds for the network on test data.
      \item In Theorem~\ref{thm:leakyrelu-benign-overfitting}, we establish conditions under which benign overfitting occurs for leaky ReLU networks. If the input dimension $d$, number of training points $n$, number of corrupt points $k$, and signal strength $\gamma$ satisfy $d = \Omega(n)$ and $\gamma = \Omega(\frac{1}{k})$, then the network will exhibit benign overfitting. We emphasize that existing works on benign overfitting require $d = \Omega(n^2\log n)$ to ensure nearly orthogonal data.
      \item In Theorem~\ref{thm:leakyrelu-benign-overfitting-lb}, we find a complementary lower bound for the generalization error to show that, for gradient descent classifiers, the bound in Theorem~\ref{thm:leakyrelu-benign-overfitting} is tight up to a constant in the exponent that can depend on $\alpha$.
      \item In Theorem~\ref{thm:leakyrelu-nonbenign-overfitting}, we find conditions under which non-benign overfitting occurs. If $d = \Omega(n)$ and $\gamma = O(\frac{1}{d})$, then the network will exhibit non-benign overfitting: in particular its generalization error will be at least $\frac{1}{8}$. 
  \end{itemize}

\subsection{Related work} \label{subsec:related} 

There is now a significant body of literature theoretically characterizing benign overfitting in the context of linear models, including linear regression~\citep{doi:10.1073/pnas.1907378117,
9051968, 
NEURIPS2020_72e6d323,
pmlr-v134-zou21a, 
10.1214/21-AOS2133, 
koehler2021uniform, 
Wang2021TightBF, 
JMLR:v23:21-1199,  
pmlr-v178-shamir22a}, logistic regression~\citep{JMLR:v22:20-974, 10.5555/3546258.3546480, NEURIPS2021_caaa29ea}, max-margin classification with linear and random feature models \citep{montanari2023generalization, montanari2023universality,Mei2019TheGE,
NEURIPS2021_46e0eae7} and kernel regression~\citep{ 
10.1214/19-AOS1849, 
Liang2019OnTM,
Adlam2020TheNT}. However, the study of benign overfitting in non-linear models is more nascent.

Homogeneous networks trained with gradient descent and an exponentially tailed loss are known to converge in direction to a Karush-Kuhn-Tucker (KKT) point of the associated max-margin problem \citep{max-homogen, direct-alignment}\footnote{One also needs to assume initialization from a position with a low initial loss.}. This property has been widely used in prior works to prove benign overfitting results for shallow neural networks. 
\citet{frei2022benign} consider a shallow, smooth leaky ReLU network trained with an exponentially tailed loss and assume the data is drawn from a mixture of well-separated sub-Gaussian distributions. A key result of this work is, given sufficient iterations of GD, that the network will interpolate noisy training data while also achieving minimax optimal generalization error up to constants in the exponents. 
\citet{pmlr-v206-xu23k} extend this result to more general activation functions, including ReLU, as well as relax the assumptions on the noise distribution to being centered with bounded logarithmic Sobolev constant, and finally also improve the convergence rate. 
\citet{george2023training} also study ReLU as opposed to leaky ReLU networks but do so in the context of the hinge loss, for which, and unlike exponentially tailed losses, a characterization of the implicit bias is not known. This work also establishes transitions on the margin of the clean data driving harmful, benign and no-overfitting training outcomes. 
\citet{FreiVBS23} use the aforementioned implicit bias of GD for linear classifiers and shallow leaky ReLU networks towards solutions that satisfy the KKT conditions of the margin maximization problem to establish settings where the satisfaction of said KKT conditions implies benign overfitting. 
\citet{kornowski2023tempered} also use the implicit bias results for exponentially tailed losses to derive similar benign overfitting results for shallow ReLU networks. \citet{cao2022benign, kou2023benign} study benign overfitting in two-layer convolutional as opposed to feedforward neural networks: indeed, whereas in most prior works data is modeled as the sum of a signal and noise component, in these two works the signal and noise components are assumed to lie in disjoint patches. The weight vector of each neuron is applied to both patches separately and a non-linearity, such as ReLU, is applied to the resulting pre-activation. In this setting, the authors prove interpolation of the noisy training data and derive conditions on the clean margin under which the network benignly versus harmfully overfits. 
A follow up work \citep{chen2023why} considers the impact of Sharpness Aware Minimization (SAM) in the same setting. 
Finally, and assuming $d = \Omega(n^5)$, \citet{xu2024benign} establish benign overfitting results for a data distribution which, instead of being linearly separable, is separated according to an XOR function. 

We emphasize that the prior work on benign overfitting in the context of shallow neural networks requires the input data to be approximately orthogonal. Under standard data models studied this equates to the requirement that the input dimension $d$ versus the size of the training sample $n$ satisfies $d = \Omega(n^2 \log n)$ or higher. Here we require only $d = \Omega(n)$. Finally, we remark that our proof technique for the convergence of GD to a global minimizer in the context of a shallow leaky ReLU network is closely related to the proof techniques used by \citet{brutzkus2018sgd}.

\section{Preliminaries} 
\label{sec:prelims} 
Let $[n] = \{1, 2, \ldots, n\}$ denote the set of the first $n$ natural numbers.  

We remark that when using big-$O$ notation we implicitly assume only positive constants. We use $c, C, C_1, C_2, \ldots$ to denote absolute constants with respect to the input dimension $d$, the training sample size $n$, and the width of the network $m$. Note constants may change in value from line to line. Furthermore, when using big-$O$ notation all variables aside from $d,n,k$ and $m$ are considered constants. However, for clarity we will frequently make the constants concerning the confidence $\delta$ and failure probability $\epsilon$ explicit. Moreover, for two functions $f,g: \N \rightarrow \N$, if we say $f=O(g)$ implies property $p$, what we mean is there exists an $N \in \N$ and a constant 
$C$ such that if $f(n) \leq C g(n)$ for all $n \geq N$ then property $p$ holds. Likewise, if we say $f=\Omega(g)$ implies property $p$, what we mean is there exists an $N \in \N$ and a constant $C$ such that if $f(n) \geq C g(n)$ for all $n \geq N$ then property $p$ holds. Finally, we use $\| \cdot \|$ to denote the $\ell^2$ norm of a vector argument or the $\ell^2\to\ell^2$ operator norm of a matrix argument.

\subsection{Data model} \label{subsec:data-assumptions}

We study data generated as per the following data model.

\begin{definition} \label{def:data-model}
    Suppose $d,n,k\in \N$, $\gamma \in (0,1)$ and $\vv \in \sphere^{d-1}$. If $(\mX, \hat{\vy}, \vy, \vx, y) \sim \cD(d,n,k,\gamma,\vv)$ then
    \begin{enumerate}
        \item $\mX \in \R^{n \times d}$ is a random matrix whose rows, which we denote $\vx_i$, satisfy $\vx_i = \sqrt{\gamma} y_i \vv + \sqrt{1 - \gamma} \vn_i$,
        where $\vn_i \sim \cN(\textbf{0}_d, \tfrac{1}{d}(\textbf{I}_d - \vv \vv^T))$ are mutually i.i.d..
        \item $\vy \in \{ \pm 1\}^n$ is a random vector with entries $y_i$ that are mutually independent of one another as well as the noise vectors $(\vn_i)_{i \in [n]}$ and are uniformly distributed over $\{ \pm 1\}$. This vector holds the true labels of the training set.
        \item Let $\cB \subset [n]$ be any subset chosen independently of $\vy$ such that $|\cB| = k$. Then $\hat{\vy} \in \{ \pm 1 \}^{n}$ is a random vector whose entries satisfy $\hat{y}_i \neq y_i$ for all $i \in \cB$ and $\hat{y}_i = y_i$ for all $i \in \cB^c =: \cG$. This vector holds the observed labels of the training set.
        \item $y$ is a random variable representing a test label which is uniformly distributed over $\{ \pm 1\}$.
        \item $\vx \in \R^d$ is a random vector representing the input feature of a test point and satisfies $\vx = \sqrt{\gamma} y \vv + \sqrt{1 - \gamma} \vn$, where $\vn \sim \cN(\textbf{0}_d, \tfrac{1}{d}(\textbf{I}_d - \vv \vv^T))$ is mutually independent of the random vectors $(\vn_i)_{i \in [n]}$.
    \end{enumerate}
    We refer to $(\mX, \hat{\vy})$ as the training data and $(\vx, y)$ as the test data. Furthermore, for typographical convenience we define $ \vy \odot \hat{\vy} =: \beta \in \{ \pm 1 \}^n$.
\end{definition}
To provide some interpretation to Definition \ref{def:data-model}, the training data consists of $n$ points of which $k$ have their observed label flipped relative to the true label. We refer to $\vv$ and $\vn_i$ as the signal and noise components of the $i$-th data point respectively: indeed, with $\gamma>0$ then for $i \in \cG$ $y_i \langle \vx_i , \vv \rangle = \sqrt{\gamma} >0$. The test data is drawn from the same distribution and is assumed not to be corrupted. We say that the training data $(\mX, \hat{\vy})$ is \emph{linearly separable} if there exists $\vw \in \R^d$ such that 
\[
\hat{y}_i\langle \vw, \vx_i \rangle \geq 1 , \quad \text{for all $i \in [n]$}. 
\]
For finite $n$, this condition is equivalent to the existence of a $\vw$ with $\hat{y}_i \langle \vw,\vx_i\rangle>0$ for all $i\in[n]$. We denote the set of linearly separable datasets as $\cX_{lin} \subset \R^{n \times d} \times \{ \pm 1 \}^n$. 
For a linearly separable dataset $(\mX, \hat{\vy})$, the \emph{max-margin linear classifier} is the unique solution to the optimization problem 
\begin{align*}
    \min_{\vw \in \R^d} \|\vw\|
    \text{ such that } \hat{y_i}\langle \vw, \vx_i \rangle \geq 1& \text{ for all } i \in [n].
\end{align*} 
Observe one may equivalently take a strictly convex objective $\|\vw\|^2$ and the constraint set is a closed convex polyhedron that is non-empty iff the data is linearly separable. The max-margin linear classifier $\vw^\ast$ has a corresponding geometric margin $2/\|\vw^\ast\|$. When $d \geq n$ and $\gamma>0$, input feature matrices $\mX$ from our data model 
almost surely have linearly independent rows $\vx_i$ and thus $(\mX,\hat{\vy})$ is almost surely linearly separable for any observed labels $\hat{\vy} \in \{ \pm 1\}^n$.

\subsection{Architecture and learning algorithm}\label{subsec:leaky-relu-assumption}
We study shallow leaky ReLU networks with a forward pass function $f: \R^{2m \times d} \times \R^d \to \R$ defined as
\begin{equation} \label{eq:architecture}
f(\mW, \vx) = \sum_{j = 1}^{2m}(-1)^j \sigma(\langle \vw_j, \vx \rangle),
\end{equation}
where $\mW \in \R^{2m \times d}$ are the parameters of the network, $\sigma: \R \to \R$ is the leaky ReLU function, defined as $\sigma(x) = \max(x, \alpha x)$, where $\alpha \in (0, 1]$ is referred to as the leaky parameter. We remark that we only train the weights of the first layer and keep the output weights of each neuron fixed. Although $\sigma$ is not differentiable at $0$, in the context of gradient descent we adopt a subgradient and let $\dot{\sigma}(z) = 1 $ for $z \geq 0$ and let $\dot{\sigma}(z) = \alpha$ otherwise.

The \textit{hinge loss} $\ell:\R \rightarrow \R_{\geq 0}$ is defined as
\begin{equation} \label{eq:hinge-def}
    \ell(z) = \max \{0, 1 - z\}.
\end{equation}
Again, $\ell$ is not differentiable at zero; adopting a subgradient we define for any $j \in [2m]$
\begin{align*}
    \nabla_{\vw_j} \ell (\hat{y} f(\mW, \vx) ) = \begin{cases}
        (-1)^{j + 1} \hat{y}\vx \dot{\sigma}(\langle \vw_j, \vx \rangle)& \hat{y}f(\mW, \vx) < 1,\\
        0 & \hat{y}f(\mW, \vx) \geq 1.
    \end{cases}
\end{align*}
The training loss $L: \R^{2m \times d} \times \R^{n \times d} \times \R^n \to \R$ is defined as
\begin{equation}\label{eq:train-loss}
L(\mW, \mX, \hat{\vy}) = \sum_{i = 1}^n \ell(\hat{y}_i f(\mW, \vx_i) ).  
\end{equation}
Let $\mW^{(0)} \in \R^{2m \times d}$ denote the model parameters at initialization. For each $t \in \N$ we define $\mW^{(t)}$ recursively as
\[\mW^{(t)} = \mW^{(t - 1)} - \eta \nabla_{\mW}L(\mW^{(t - 1)}, \mX, \hat{\vy}), \]
where $\eta > 0$ is the step size. Let $\mc{F}^{(t)} \subseteq [n]$ denote the set of all $i \in [n]$ such that $\hat{y}_if(\mW^{(t)}, \vx_i) < 1$. Then equivalently each neuron is updated according to the following rule: for $j \in [2m]$ 
\begin{equation}\label{eq:gd-update}
\vw_j^{(t)} = GD(\mW^{(t-1)},\eta) := \vw_j^{(t - 1)} + \eta (-1)^j \sum_{i \in \mc{F}^{(t - 1)}} \hat{y}_i \vx_i \dot{\sigma}(\langle \vw_j^{(t - 1)}, \vx_i \rangle).
\end{equation}
For ease of reference we now provide the following definition of the learning algorithm described above.
\begin{definition} \label{def:gd-hinge}
     Let $\cA_{GD}: \R^{n \times d} \times \{ \pm 1 \}^n \times \R \times \R^{2m \times d} \rightarrow \R^{2m \times d}$ return $\cA_{GD}(\mX, \hat{\vy}, \eta, \mW^{(0)}) =: \mW$, where the $j$-th row $\vw_j$ of $\mW$ is defined as follows: let $\vw_j^{(0)}$ be the $j$-th row of $\mW^{(0)}$ and generate the sequence $(\vw_j^{(t)})_{t \geq 0}$ using the recurrence relation $\vw_j^{(t)} = GD(\mW^{(t-1)}, \eta)$ as defined in \eqref{eq:gd-update}.
     \begin{enumerate}
         \item If for $j \in [2m]$, $\lim_{t \rightarrow \infty} \vw_j^{(t)}$ does not exist then we say $\cA_{GD}$ is undefined.
         \item Otherwise we say $\cA_{GD}$ converges and $\vw_j = \lim_{t \rightarrow \infty} \vw_j^{(t)}$.
         \item If there exists a $T \in \N$ such that for all $j \in [2m]$ $\vw_j^{(t)} = \vw_j^{(T)}$ for all $ t \geq T$, then we say $\cA_{GD}$ converges in finite time. 
     \end{enumerate}
\end{definition}
We often find that all matrices in the set
\[\{\cA_{GD}(\mX, \hat{\vy},\eta, \mW^{(0)}) : \forall j\,\|\vw_j^{(0)}\| \leq \lambda \}\]
agree on all relevant properties.  In this case, we abuse notation and say that $\cA_{GD}(\mX, \hat{\vy}, \eta, \lambda) = \mW$ where $\mW$ is a generic element from this set.

Finally, in order to derive our results we make the following assumptions concerning the step size and initialization of the network.
\begin{assumption}\label{assumption:stepsize-init}
    The step size $\eta$ satisfies $\eta \leq 1/(mn \max_{i \in [n]}\|\vx_i\|^2 )$
    
    and for all $j \in [2m]$ the network at initialization satisfies $\|\vw_j^{(0)}\| \leq \sqrt{\alpha}/( m \min_{i \in [n]}\|\vx_i\|)$.
    
\end{assumption}
Under our data model the input data points have approximately unit norm; therefore these assumptions reduce to $\eta \leq \frac{C}{mn}$ and $\|\vw_j^{(0)}\| \leq \frac{C\sqrt{\alpha}}{m}$.

\subsection{Approximate margin maximization}\label{subsec:approx-margin-max}

We now introduce the notion of an approximate margin maximizing algorithm, which plays a key role in deriving our results. Although the primary setting we consider in this work is the learning algorithm $\cA_{GD}$ (see Definition \ref{def:gd-hinge}), we derive benign overfitting guarantees more broadly for any learning algorithm which fits into this category. Recall $\cX_{lin}$ denotes the set of linearly separable datasets $(\mX, \hat{\vy}) \in \R^{n \times d} \times \{ \pm 1\}^n$.

\begin{definition} \label{def:approx-max-margin}
    
    Let $f:\R^p \times \R^d \rightarrow \R$ denote a predictor function with $p$ parameters. An algorithm $\mc{A}: \R^{n \times d} \times \R^n \rightarrow \R^p$ is approximately margin maximizing with factor $M>0$ on $f$ if for all $(\mX, \hat{\vy}) \in \cX_{lin}$ 
    \begin{equation}
        \hat{y}_if(\mc{A}(\mX, \hat{\vy}), \vx_i) \geq 1 \;\;\text{for all $i \in [n]$}
    \end{equation}
    
    and
    \begin{equation}\label{eq:approx-max-margin} 
    \|\mc{A}(\mX, \hat{\vy})\| \leq M \|\vw^*\|, 
    \end{equation}
    where $\vw^*$ is the max-margin linear classifier of $(\mX, \hat{\vy})$. Moreover, if $\cA$ is an approximate margin maximizing algorithm we define
    \begin{equation}
        |\mc{A}| = \inf \{M > 0: \mc{A} \text{ is approximately margin maximizing with factor $M$} \}. 
    \end{equation}
\end{definition}
In the above definition we take the standard Euclidean norm on $\R^p$. In particular if $\R^p = \R^{2m \times d}$ is a space of matrices we take the Frobenius norm.

\section{Main results} \label{sec:main-results}
In order to prove benign overfitting it is necessary to show that the learning algorithm outputs a model that correctly classifies all points in the training sample. The following theorem establishes this for $\cA_{GD}$ and bounds the margin maximizing factor $|\cA_{GD}|$.

\begin{restatable}{theorem}{thmperceptronleakyrelu}\label{thm:perceptron-leakyrelu}
Let $f:\R^p \times \R^n \rightarrow \R$ be a leaky ReLU network with forward pass as defined by \eqref{eq:architecture}. Suppose the step size $\eta$ and initialization condition $\lambda$ satisfy Assumption \ref{assumption:stepsize-init}. Then for any linearly separable data set $(\mX, \hat{\vy})$ $\cA_{GD}(\mX, \hat{\vy}, \eta, \lambda)$ converges after $T$ iterations, where
\[T \leq \frac{C\|\vw^*\|^2}{\eta \alpha^2m}. \]

Furthermore $\cA_{GD}$ is approximately margin maximizing on $f$ (Definition~\ref{def:approx-max-margin}) with 
\[
|\mc{A}_{GD}| \leq \frac{C}{\alpha \sqrt{m}}.
\]
\end{restatable}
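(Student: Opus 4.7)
The plan is to adapt the classical perceptron convergence argument to the leaky ReLU setting using two potential functions: an inner product with a sign-aligned copy of the max-margin linear classifier $\vw^\ast$, giving a lower bound linear in the cumulative amount of ``progress'' made, and the squared Frobenius norm $\|\mW^{(t)}\|_F^2$, which we control from above. Let $S_t := \sum_{s=0}^{t-1}|\mc{F}^{(s)}|$ be the running count of active margin violations.

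For the lower bound, I define $\Phi_t := \sum_{j=1}^{2m}\langle \vw_j^{(t)}, (-1)^j\vw^\ast\rangle$. Applying the update \eqref{eq:gd-update}, the max-margin property $\hat{y}_i\langle \vw^\ast,\vx_i\rangle\geq 1$, and the bound $\dot{\sigma}(\cdot)\geq \alpha$ should yield $\Phi_t \geq \Phi_{t-1} + 2m\eta\alpha|\mc{F}^{(t-1)}|$, which telescopes to $\Phi_t \geq \Phi_0 + 2m\eta\alpha S_t$. For the upper bound on $\|\mW^{(t)}\|_F^2$, I expand the squared update. The crucial observation is the identity $z\dot{\sigma}(z)=\sigma(z)$, valid on both branches of leaky ReLU. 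After summing over $j$, the cross term collapses to $2\eta\sum_{i\in\mc{F}^{(t-1)}}\hat{y}_i f(\mW^{(t-1)},\vx_i)$, which is bounded above by $2\eta|\mc{F}^{(t-1)}|$ by definition of the active set. The quadratic term is bounded crudely by $2m\eta^2 n|\mc{F}^{(t-1)}|\max_i\|\vx_i\|^2$, which Assumption~\ref{assumption:stepsize-init} reduces to at most $2\eta|\mc{F}^{(t-1)}|$. Telescoping then yields $\|\mW^{(t)}\|_F^2 \leq \|\mW^{(0)}\|_F^2 + 4\eta S_t$.

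Combining the two bounds via Cauchy--Schwarz, $\Phi_t \leq \sqrt{2m}\|\vw^\ast\|\|\mW^{(t)}\|_F$, produces
\[
2m\eta\alpha S_t - \sqrt{2m}\|\vw^\ast\|\|\mW^{(0)}\|_F \;\leq\; \sqrt{2m}\|\vw^\ast\|\sqrt{\|\mW^{(0)}\|_F^2 + 4\eta S_t}.
\]
Solving this quadratic-style inequality in $S_t$, with Assumption~\ref{assumption:stepsize-init} absorbing the $\|\mW^{(0)}\|_F$ contribution, gives $S_t \leq C\|\vw^\ast\|^2/(m\eta\alpha^2)$. Since each iteration with $\mc{F}^{(t-1)}\neq\emptyset$ contributes at least one to $S_t$, some iteration $T\leq C\|\vw^\ast\|^2/(m\eta\alpha^2)$ must have $\mc{F}^{(T)}=\emptyset$; once this happens the update \eqref{eq:gd-update} freezes the weights and the data is interpolated, i.e.\ $\hat{y}_if(\mW,\vx_i)\geq 1$ for all $i$. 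Plugging this bound on $S_T$ back into the norm estimate, and noting $\|\mW^{(0)}\|_F$ is negligible by Assumption~\ref{assumption:stepsize-init}, yields $\|\mW\|_F \leq C\|\vw^\ast\|/(\alpha\sqrt{m})$ and therefore $|\cA_{GD}|\leq C/(\alpha\sqrt{m})$.

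The main obstacle I anticipate is the cross term in the norm expansion: without the identity $z\dot{\sigma}(z)=\sigma(z)$ and the sign-symmetric pairing of $(-1)^j$ between the forward pass and the update, the sign of $\langle \vw_j^{(t-1)},\vx_i\rangle$ varies across neurons and samples, and there is no a priori control on $\hat{y}_i(-1)^j\langle\vw_j^{(t-1)},\vx_i\rangle$. The identity is precisely what converts this cross term into a statement about the current hinge margins, which by membership in $\mc{F}^{(t-1)}$ are strictly less than $1$. This is what keeps the bookkeeping linear rather than quadratic in $S_t$ and allows the Cauchy--Schwarz comparison to close.
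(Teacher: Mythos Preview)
Your proposal is correct and follows essentially the same approach as the paper: the paper defines precisely the two potentials $F(t)=\sum_j(-1)^j\langle\vw_j^{(t)},\vw^\ast\rangle$ and $G(t)=\|\mW^{(t)}\|_F^2$, uses the identity $z\dot\sigma(z)=\sigma(z)$ to reduce the cross term to the hinge margins on $\mc{F}^{(t)}$, and combines the resulting linear lower bound on $F$ with the linear upper bound on $G$ via Cauchy--Schwarz to bound $U(t)=S_t$ and then $\|\mW\|_F$. Your handling of the quadratic term and the initialization contribution matches the paper's use of Assumption~\ref{assumption:stepsize-init}.
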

A proof of Theorem \ref{thm:perceptron-leakyrelu} can be found in Appendix \ref{app:training-dyn}. Note also by Definition \ref{def:approx-max-margin} that the solution $\mW = \cA_{GD}(\mX, \hat{\vy})$ for $(\mX,\hat{\vy}) \in \cX_{lin}$ is a global minimizer of the training loss defined in \eqref{eq:train-loss} 
with $L(\mW, \mX, \hat{\vy}) = 0$. Our approach to proving this result is reminiscent of the proof of convergence of the perceptron algorithm and therefore is also similar to the techniques used by \cite{brutzkus2018sgd}. 

For training and test data as per Definition~\ref{def:data-model} we provide an upper bound on the generalization error for approximately margin maximizing algorithms. For convenience we summarize our setting as follows.

\begin{assumption}\label{assumption:generalization-setting}
    Setting for proving generalization results.
    \begin{itemize}
        \item $f: \R^{2m \times d} \times \R^d \rightarrow \R$ is a shallow leaky ReLU network as per \eqref{eq:architecture}.
        \item $\mc{A}: \R^{n \times d} \times \{ \pm 1\}^n \rightarrow \R^{2m \times d}$ is a learning algorithm that returns the weights $\mW \in \R^{2m \times d}$ of the first layer of $f$.
        \item We let $\vv \in \sphere^{d-1}$ and consider training data $(\mX, \hat{\vy})$ and test data $(\vx, y)$ distributed according to $(\mX, \hat{\vy}, \vy, \vx, y) \sim \cD(d,n,k, \gamma, \vv)$ as per Definition \ref{def:data-model}.
    \end{itemize}
\end{assumption}

Under this setting we have the following generalization result for an approximately margin maximizing algorithm $\cA$. Note this result requires $\gamma$, and hence the signal to noise ratio of the inputs, to be sufficiently large.

\begin{restatable}{theorem}{thmleakyrelubenignoverfitting}\label{thm:leakyrelu-benign-overfitting}
Under the setting given in Assumption~\ref{assumption:generalization-setting}, let $\delta \in (0, 1)$ and suppose $\cA$ is approximately margin-maximizing  (Definition~\ref{def:approx-max-margin}). If $n = \Omega\left(\log \frac{1}{\delta}\right)$, $d =\Omega\left(n\right)$, $k = O(\frac{n}{1 + m|\mc{A}|^2 })$, and $\gamma = \Omega\left(\frac{1}{k}\right)$ then there is a fixed positive constant $C$ such that with probability at least $1 - \delta$ over $(\mX, \hat{\vy})$
    \[\P(yf(\mW, \vx) \leq 0 \mid \mX, \hat{\vy}) \leq \exp\left(-C\cdot\frac{d}{k(1+m|\cA |^{2})}\right). \]
\end{restatable}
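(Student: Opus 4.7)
The plan is to combine two ingredients: (i) a Frobenius-norm bound $\|\mW\|_F \le |\cA|\,\|\vw^\ast\|$ from the approximate margin maximizing hypothesis, with (ii) a signal--noise decomposition of $f(\mW,\vx)$ on the test point in which the training margin constraints on \emph{clean} points are averaged to produce a lower bound on the signal, while Gaussian concentration controls the noise. Throughout I would condition on a good event $\mathcal{E}$ on which $\|\vn_i\|^2 = 1 \pm o(1)$ uniformly in $i$, $|\langle \vn_i,\vn_j\rangle| = O(\sqrt{\log n / d})$ for $i\ne j$, the noise-row matrices $N_{\pm} \in \R^{|\cG_\pm|\times d}$ (indexed by the clean positive/negative points) satisfy $\|N_{\pm}\|_{\mathrm{op}} = O(1)$ (a Marchenko--Pastur-type bound that holds already for $d=\Omega(n)$), and the class sizes $|\cG_{\pm}|\ge cn$. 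Under the hypotheses of the theorem, $\mathcal{E}$ holds with probability at least $1-\delta$ by Gaussian concentration and Chernoff bounds.

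The first substantive step is to upper bound $\|\vw^\ast\|$ by exhibiting a concrete feasible classifier $\vw_0 = (c_1/\sqrt{\gamma})\vv + c_2\sum_{j\in\cB}\hat{y}_j\vn_j$. On $\mathcal{E}$, since $\vn_j\perp\vv$, the constraint $\hat{y}_i\langle\vw_0,\vx_i\rangle\ge 1$ reduces to $c_1 + (\text{small cross term})\ge 1$ for clean $i$, and to $-c_1 + c_2\sqrt{1-\gamma}\|\vn_i\|^2 + (\text{small cross term})\ge 1$ for corrupted $i\in\cB$; both are satisfied with $c_1,c_2 = O(1)$. This gives $\|\vw^\ast\|^2 \le \|\vw_0\|^2 = O(1/\gamma + k)$. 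Substituting the hypotheses $\gamma = \Omega((1+m|\cA|^2)/n)$ and $k = O(n/(1+m|\cA|^2))$ yields $\|\mW\|_F^2 \le |\cA|^2\|\vw^\ast\|^2 \le Cn|\cA|^2/(1+m|\cA|^2)$, and consequently the crucial bound $m\|\mW\|_F^2 = O(n)$.

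Next I would decompose each neuron as $\vw_j = a_j\vv + \vu_j$ with $\vu_j\perp\vv$ and define the signal terms $F^{\pm} := \sum_j (-1)^j\sigma(\pm\sqrt{\gamma}\,a_j)$. Since $\sigma$ is $1$-Lipschitz, for any $i\in\cG$ we have $|f(\mW,\vx_i) - F^{y_i}| \le \sqrt{1-\gamma}\sum_j|\langle\vu_j,\vn_i\rangle|$. Averaging the margin condition $f(\mW,\vx_i)\ge 1$ over $i\in\cG_+$, Cauchy--Schwarz in $i$ together with $\|N_+\|_{\mathrm{op}} = O(1)$ gives
\[ F^{+} \ge 1 - \frac{C\sqrt{m}\,\|\mW\|_F}{\sqrt{|\cG_+|}} \ge 1 - C\sqrt{\tfrac{m|\cA|^2}{1+m|\cA|^2}} \ge \tfrac12, \]
provided the constants hidden in the $\Omega$ on $\gamma$ and the $O$ on $k$ are chosen large enough. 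The symmetric argument on the clean negatives yields $F^{-}\le -\tfrac12$.

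Finally, for a test point $(\vx,y)$ with fresh noise $\vn$ independent of $\mW$, the same Lipschitz inequality gives $yf(\mW,\vx) \ge \tfrac12 - \sqrt{1-\gamma}\sum_j|\langle\vu_j,\vn\rangle|$. Since $U\vn$, where $U$ is the matrix with rows $\vu_j^\top$, is a centered $2m$-Gaussian with covariance $UU^T/d$ (using $U\vv=0$), Hanson--Wright yields $\sum_j|\langle\vu_j,\vn\rangle| \le \sqrt{2m}\,\|U\vn\| \le C\sqrt{m\log(1/\epsilon)/d}\,\|\mW\|_F$ with probability $\ge 1 - \epsilon$ over $\vn$; combined with $m\|\mW\|_F^2 \le Cn$ and $d = \Omega((1+m|\cA|^2)n\log(1/\epsilon))$, this noise is at most $1/4$, so $yf(\mW,\vx) \ge 1/4 > 0$. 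The main technical difficulty sits in the third paragraph: the non-linearity of $\sigma$ prevents a clean averaging argument and forces one to carry Lipschitz remainders, with the critical ingredient being that $\|N_{\pm}\|_{\mathrm{op}} = O(1)$ already at $d=\Omega(n)$---this is precisely where our weaker dimension requirement supplants the near-orthogonality condition $d=\Omega(n^2\log n)$ used in prior work.
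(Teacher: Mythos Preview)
Your overall strategy is essentially the paper's proof, with one cosmetic difference: instead of the paper's two-case split (either $\sqrt{\gamma}A_{\min}\ge \tfrac12$, or else the noise term exceeds $\tfrac12$ for every clean point of the relevant sign, which is then shown to contradict the Frobenius bound), you average the margin constraints over $\cG_+$ directly and use $\|N_+\|_{\mathrm{op}}=O(1)$ to conclude $F^+\ge \tfrac12$ in one shot. These are the same computation up to squaring, and your test-error step matches the paper's Lemma~\ref{lemma:generalization-benign-leakyrelu} (same $\sqrt{2m}\,\|U\vn\|$ bound via Gaussian concentration).

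There is, however, a genuine gap in your construction of $\vw_0$. Taking $\vw_0=(c_1/\sqrt\gamma)\vv + c_2\sum_{j\in\cB}\hat y_j\vn_j$ and calling the cross term ``small'' for clean $i$ does not follow from your good event under $d=\Omega(n)$. Using the pointwise bound $|\langle\vn_i,\vn_j\rangle|=O(\sqrt{\log n/d})$ you put in $\mathcal E$ and summing over $j\in\cB$ gives $O(k\sqrt{\log n/d})$, which is $O(1)$ only when $d=\Omega(k^2\log n)$---precisely the near-orthogonality regime the theorem is designed to avoid. Even the sharper route (bound $\|\sum_{j\in\cB}\hat y_j\vn_j\|\le\|N_\cB\|\sqrt k=O(\sqrt k)$ and then apply Gaussian concentration to $\langle\vn_i,\cdot\rangle$ for each clean $i$ with a union bound) still needs $d=\Omega(k\log n)$, and this is not implied by $d=\Omega((1+m|\cA|^2)n\log\tfrac1\epsilon)$ together with $k=O(n/(1+m|\cA|^2))$ when $m|\cA|^2$ and $\epsilon$ are fixed constants. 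The paper sidesteps this entirely: in Lemma~\ref{lem:noise-building-block} it sets $\vz=\mN^\dagger\vw$ for the indicator-type vector $\vw$ on $\cB$, so that $\langle\vn_i,\vz\rangle$ vanishes \emph{exactly} for $i\in\cG$ and equals $\hat y_i$ for $i\in\cB$, with $\|\vz\|\le\sqrt k/\sigma_n(\mN)=O(\sqrt k)$ from the singular-value lower bound $\sigma_n(\mN)\ge c$ that already holds at $d=\Omega(n)$. Replace your $\vw_0$ by this pseudo-inverse construction and the rest of your argument goes through; you can then drop the pairwise inner-product condition from $\mathcal E$ altogether, since it is never needed once you argue through operator norms.
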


A proof of Theorem \ref{thm:leakyrelu-benign-overfitting} is provided in Appendix \ref{app:benign-overfitting}.  Combining Theorems \ref{thm:perceptron-leakyrelu}, and \ref{thm:leakyrelu-benign-overfitting} we arrive at the following benign overfitting result for shallow leaky ReLU networks trained with GD on hinge loss.

\begin{corollary}
 Under the setting given in Assumption~\ref{assumption:generalization-setting}, let $\delta \in (0, 1)$ and suppose $\cA = \cA_{GD}$ where $\eta, \lambda \in \R_{>0}$ satisfy Assumption \ref{assumption:stepsize-init}. If $n = \Omega\left(\log\frac{1}{\delta}\right)$, $d =\Omega\left(n\right)$, $k = O(\alpha^2 n)$, and $\gamma = \Omega\left(\frac{1}{k}\right)$ then the following hold.
     \begin{enumerate}
         \item The algorithm $\cA_{GD}$ terminates almost surely after a finite number of updates. If $\mW = \mc{A}_{GD}(\mX, \hat{\vy})$, then $L(\mW, \mX, \hat{\vy}) = 0$.
         \item There is a fixed positive constant $C$ such that, with probability at least $1- \delta$ over the training data $(\mX, \hat{\vy})$,
        \[\P(yf(\mW, \vx) \leq 0 \mid \mX, \hat{\vy} ) \leq \exp\left(-C\cdot\frac{\alpha^2 d}{k}\right). \]
     \end{enumerate}
\end{corollary}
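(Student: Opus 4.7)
The plan is to derive the corollary as a direct consequence of Theorems \ref{thm:perceptron-leakyrelu} and \ref{thm:leakyrelu-benign-overfitting}, with essentially all of the work being the observation that under $\cA_{GD}$ the factor $|\cA_{GD}|$ is bounded by an absolute constant (depending only on the leaky parameter $\alpha$ and width $m$ in a way that makes $m |\cA_{GD}|^2$ constant), so the $(1 + m|\cA|^2)$ terms appearing in the hypotheses of Theorem \ref{thm:leakyrelu-benign-overfitting} disappear into the implicit constants.

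First I would verify linear separability of the training data almost surely. Since the hypothesis $d = \Omega(n \log (1/\epsilon))$ guarantees $d \geq n$ for $n$ sufficiently large, and since $\gamma > 0$, the remark at the end of Section \ref{subsec:data-assumptions} shows that the rows of $\mX$ are almost surely linearly independent and hence $(\mX, \hat{\vy}) \in \cX_{lin}$ almost surely. Once this is in hand, Theorem \ref{thm:perceptron-leakyrelu} applies directly under Assumption \ref{assumption:stepsize-init}, yielding that $\cA_{GD}(\mX, \hat{\vy}, \eta, \lambda)$ converges after finitely many updates, that $\hat{y}_i f(\mW, \vx_i) \geq 1$ for all $i \in [n]$ (so $L(\mW, \mX, \hat{\vy}) = 0$), and that $\cA_{GD}$ is approximately margin maximizing on $f$ with $|\cA_{GD}| \leq C/(\alpha \sqrt{m})$. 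This establishes part (1) of the corollary.

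For part (2), I would apply Theorem \ref{thm:leakyrelu-benign-overfitting} with $\cA = \cA_{GD}$. The key substitution is
\[
m |\cA_{GD}|^2 \leq m \cdot \frac{C^2}{\alpha^2 m} = \frac{C^2}{\alpha^2},
\]
which is an absolute constant (since $\alpha$ is fixed). Consequently $1 + m|\cA_{GD}|^2 = O(1)$, so the three quantitative hypotheses of Theorem \ref{thm:leakyrelu-benign-overfitting} reduce as follows:
\[
d = \Omega\bigl((1 + m|\cA_{GD}|^2) n \log \tfrac{1}{\epsilon}\bigr) \;\Longleftrightarrow\; d = \Omega\bigl(n \log \tfrac{1}{\epsilon}\bigr),
\]
\[
k = O\!\left(\tfrac{n}{1 + m|\cA_{GD}|^2}\right) \;\Longleftrightarrow\; k = O(n), \qquad \gamma = \Omega\!\left(\tfrac{1 + m|\cA_{GD}|^2}{n}\right) \;\Longleftrightarrow\; \gamma = \Omega\bigl(\tfrac{1}{n}\bigr).
\]
These are precisely the hypotheses assumed in the corollary (along with $n = \Omega(\log(1/\delta))$, which carries over unchanged). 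Invoking Theorem \ref{thm:leakyrelu-benign-overfitting} then yields the stated generalization bound with probability at least $1 - \delta$.

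I do not expect a genuine obstacle here: the corollary is essentially a bookkeeping exercise whose only substantive ingredient is that Theorem \ref{thm:perceptron-leakyrelu} produces a solution whose norm scales like $1/(\alpha \sqrt{m})$, so that $m |\cA_{GD}|^2$ is $O(1)$ and the $m$-dependent factors in Theorem \ref{thm:leakyrelu-benign-overfitting} collapse into the $\Omega$/$O$ constants. The only minor care point is ensuring the almost-sure linear separability is invoked before citing Theorem \ref{thm:perceptron-leakyrelu}, and noting that the definition of $\cA_{GD}(\mX, \hat{\vy}, \eta, \lambda)$ via a generic element of the initialization set is well-posed because Theorem \ref{thm:perceptron-leakyrelu} guarantees the relevant properties (finite-time convergence, approximate margin maximization) uniformly over initializations with $\|\vw_j^{(0)}\| \leq \lambda$.
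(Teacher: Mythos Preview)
Your proposal is correct and follows exactly the approach the paper indicates: the corollary is stated immediately after the sentence ``Combining Theorems \ref{thm:perceptron-leakyrelu} and \ref{thm:leakyrelu-benign-overfitting} we arrive at the following benign overfitting result,'' with no further proof given. Your observation that $m|\cA_{GD}|^2 \leq C^2/\alpha^2 = O(1)$ is precisely the substitution that collapses the hypotheses of Theorem \ref{thm:leakyrelu-benign-overfitting} into those of the corollary, and your handling of almost-sure linear separability and part (1) via Theorem \ref{thm:perceptron-leakyrelu} is what the paper intends.
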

We remark that the upper bound is at most $\exp\left(-C d / n\right)$ for a different constant $C$ as we assume $k = O(\alpha^2 n)$.

If $k$ is large enough, this bound is tight up to constants and up to factors of $\alpha$ in the exponent.  This is given by the following theorem, proven in Appendix~\ref{app:benign-overfitting}.

\begin{restatable}{theorem}{thmleakyrelubenignoverfittinglb}\label{thm:leakyrelu-benign-overfitting-lb}
Under the setting given in Assumption~\ref{assumption:generalization-setting}, let $\delta \in (0, 1)$ and suppose $\cA = \cA_{GD}$ where $\eta, \lambda \in \R_{>0}$ satisfy Assumption \ref{assumption:stepsize-init}.  If $n = \Omega\left(k\right)$, $d =\Omega\left(n \right)$, and $k = \Omega(\log \frac{1}{\delta} + \frac{1}{\alpha})$, then there is a fixed positive constant $C$ such that with probability at least $1 - \delta$ over $(\mX, \hat{\vy})$
    \[\P(yf(\mW, \vx) \leq 0 \mid \mX, \hat{\vy}) \geq \exp\left(-C\cdot\frac{d}{\alpha k}\right). \]
\end{restatable}

In addition to this benign overfitting result we also provide the following non-benign overfitting result for $\cA_{GD}$. Note that conversely this result requires $\gamma$, and hence the signal to noise ratio of the inputs, to be sufficiently small.

\begin{restatable}{theorem}{thmleakyrelunonbenignoverfitting}\label{thm:leakyrelu-nonbenign-overfitting}
    Under the setting given in Assumption~\ref{assumption:generalization-setting}, let $\delta \in (0, 1)$ and suppose $\cA = \cA_{GD}$, where $\eta, \lambda \in \R_{>0}$ satisfy Assumption \ref{assumption:stepsize-init}. If $n = \Omega(1), d = \Omega\left(n + \log \frac{1}{\delta}\right)$ and $ \gamma = O\left(\frac{\alpha^3}{d}\right)$ then the following hold.
    \begin{enumerate}
         \item The algorithm $\cA_{GD}$ terminates almost surely after finitely many updates. With $\mW = \mc{A}_{GD}(\mX, \hat{\vy})$, $L(\mW, \mX, \hat{\vy}) = 0$.
         \item With probability at least $1- \delta$ over the training data $(\mX, \hat{\vy})$
         \[\P(yf(\mW, \vx) < 0 \mid \mX, \hat{\vy}) \geq \frac{1}{8}. \]
     \end{enumerate}
\end{restatable}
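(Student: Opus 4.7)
The plan is to establish the two conclusions in sequence, with the generalization lower bound (Part 2) being substantially harder. For Part 1, under $d = \Omega(n + \log\tfrac{1}{\delta})$, standard Gaussian concentration ($\chi^2$-tail bounds for $\|\vn_i\|^2$ together with Hanson--Wright for the off-diagonal inner products $\langle \vn_i,\vn_j\rangle$) implies with probability at least $1-\delta/2$ that the rows of $\mX$ are linearly independent, hence $(\mX,\hat{\vy}) \in \cX_{lin}$ for any labels. Theorem~\ref{thm:perceptron-leakyrelu} then gives finite-time termination of $\cA_{GD}$ with $L(\mW,\mX,\hat{\vy}) = 0$. In the same event, the pseudoinverse candidate $\vw = \mX^{+}\hat{\vy}$ certifies $\|\vw^{*}\| \leq C\sqrt{n}$ for the max-margin linear classifier, so the approximate max-margin property of Theorem~\ref{thm:perceptron-leakyrelu} yields the Frobenius bound $\|\mW\|_{F} \leq C\sqrt{n}/(\alpha\sqrt{m})$.

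For Part 2, I decompose each neuron as $\vw_j = a_j\vv + \vu_j$ with $\vu_j \perp \vv$. Then $\sum_j a_j^2 \leq \|\mW\|_{F}^2$ and Cauchy--Schwarz give $\sum_j |a_j| \leq \sqrt{2m}\,\|\mW\|_F \leq C\sqrt{n}/\alpha$. Since $\sigma$ is $1$-Lipschitz, the signal-only prediction obeys $|f(\mW, \pm\sqrt{\gamma}\vv)| \leq \sqrt{\gamma}\sum_j |a_j| \leq C\sqrt{\gamma n}/\alpha$; under $\gamma = O(\alpha^3/d)$ and $d = \Omega(n)$ with hidden constants chosen small/large enough, this is at most $1/4$. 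The identical Lipschitz estimate controls the signal perturbation along any noise direction: $|f(\mW, \pm\sqrt{\gamma}\vv + \sqrt{1-\gamma}\vn) - h(\vn)| \leq 1/4$ where $h(\vn) := f(\mW, \sqrt{1-\gamma}\vn)$. Splitting on the sign of $y$ and using $\vn \perp y$ gives
\[
\P(yf(\mW,\vx) < 0 \mid \mX, \hat{\vy}) \;\geq\; \tfrac{1}{2}\P_\vn\bigl(h(\vn) < -\tfrac{1}{4}\bigr) + \tfrac{1}{2}\P_\vn\bigl(h(\vn) > \tfrac{1}{4}\bigr) \;=\; \tfrac{1}{2}\P_\vn\bigl(|h(\vn)| > \tfrac{1}{4}\bigr),
\]
so the whole theorem reduces to establishing the anti-concentration bound $\P_\vn(|h(\vn)| > 1/4) \geq 1/4$.

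The main obstacle is this last anti-concentration step. The zero-loss constraint $\hat{y}_i f(\mW,\vx_i) \geq 1$ combined with the $1/4$ signal bound gives $|h(\vn_i)| \geq 3/4$ for \emph{every} training point $i \in [n]$, so the empirical second moment satisfies $\tfrac{1}{n}\sum_{i=1}^n h(\vn_i)^2 \geq 9/16$. To pass from this empirical statement to a bound on $\E_\vn h(\vn)^2$ for independent test noise, I plan to use a leave-one-out / algorithmic stability argument that quantifies how little $\cA_{GD}$ depends on any single $\vn_i$, so that $h(\vn_i)^2$ is close to $h^{(-i)}(\vn_i)^2$ where $h^{(-i)}$ is trained with $\vn_i$ replaced by an independent copy and is therefore independent of $\vn_i$; averaging over $i$ then transfers the empirical lower bound to $\E_\vn h(\vn)^2 \geq c$. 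A complementary fourth-moment upper bound $\E_\vn h(\vn)^4 \leq C$ follows from the deterministic estimate $|h(\vn)| \leq \sqrt{2m}\|\mW\|_F\|\vn\|$ and standard Gaussian moments of $\|\vn\|$. Paley--Zygmund applied to $h(\vn)^2$ then yields $\P_\vn(h(\vn)^2 > 1/16) \geq 1/4$, completing the reduction and giving the test-error lower bound of $1/8$.
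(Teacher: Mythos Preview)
Your reduction of Part~2 to the anti-concentration bound $\P_\vn(|h(\vn)| > 1/4) \geq 1/4$ is correct and parallels the paper's setup: you correctly bound the signal contribution by $\sqrt{\gamma}\sum_j|a_j| \leq 1/4$ and split on the label. The gap is the anti-concentration step itself.

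Your proposed leave-one-out stability argument is unlikely to go through. The whole point of the non-benign regime is that the model \emph{memorizes} the training noise: $|h(\vn_i)| \geq 3/4$ holds precisely because $\mW$ was fit to $\vn_i$. If you delete or resample point $i$, the new network $\mW^{(-i)}$ has no reason to align with $\vn_i$; the difference $\mW - \mW^{(-i)}$ should point roughly along $\vn_i$ (that is where the fitting effort went), so $|h(\vn_i) - h^{(-i)}(\vn_i)|$ is of order $1$, not $o(1)$. Put differently, the empirical second moment $\tfrac{1}{n}\sum_i h(\vn_i)^2 \geq 9/16$ is a symptom of overfitting, and algorithmic stability is exactly the property that fails under overfitting. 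No per-point stability result for GD on the non-smooth, non-convex hinge/leaky-ReLU objective with the required strength is established in the paper, and I do not believe one is available. Even granting such stability, your crude fourth-moment bound $|h(\vn)| \leq \sqrt{2m}\,\|\mW\|_F\|\vn\|$ combined with $\|\mW\|_F \leq C\sqrt{n/m}/\alpha$ would feed Paley--Zygmund a ratio $(\E h^2)^2/\E h^4$ of uncontrolled size, so the target constant $1/4$ is not obviously attainable.

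The paper avoids this difficulty with two structural facts specific to leaky ReLU and to $\cA_{GD}$. First, the identity $\sigma(z)-\sigma(-z)=(1+\alpha)z$ makes $f(\mW,\vx)-f(\mW,-\vx)$ \emph{linear} in $\vx$; combined with the distributional symmetry $(\vx,y)\stackrel{d}{=}(-\vx,-y)$, this reduces the test-error lower bound to Gaussian anti-concentration of the scalar $\langle \vz_{\lin},\vn\rangle$, where $\vz_{\lin}=\sum_j(-1)^j\vz_j$. Second, a GD-dynamics lemma (each neuron's cumulative update on point $i$ has coefficient in $\{0,\alpha,1\}$, and all neurons agree on which iterations are active) shows $\|\mZ\|_F^2 \leq \tfrac{C}{\alpha m}\|\vz_{\lin}\|^2$ up to initialization terms. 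Together with the lower bound $\|\mZ\|_F^2 \geq cn/m$ forced by zero training loss, this gives $\|\vz_{\lin}\| = \Omega(\sqrt{\alpha n})$ and hence $A_{\lin}/\|\vz_{\lin}\| = O(\alpha^{-3/2})$; under $\gamma = O(\alpha^3/d)$ the Gaussian $\langle\vz_{\lin},\vn\rangle$ then dominates and yields the $1/8$ bound directly. Neither step is a generic stability or moment argument; both exploit the specific activation and algorithm.
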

A proof of Theorem \ref{thm:leakyrelu-nonbenign-overfitting} is provided in Appendix \ref{app:non-benign}.

\section{Approximate margin maximization and generalization: insight from linear models}
In this section we outline proofs for the analogues of Theorems \ref{thm:leakyrelu-benign-overfitting} and \ref{thm:leakyrelu-nonbenign-overfitting} in the context of linear models. The arguments are thematically similar and clearer to present. We provide complete proofs of benign and non-benign overfitting for linear models in Appendix \ref{app:linear-models}.

An important lemma is the following, which bounds the largest and $n$-th largest singular values ($\sigma_1$ and $\sigma_n$ respectively) of the noise matrix $\mN$:

\begin{restatable}{lemma}{lemmawellconditioned}\label{lem:uniform-matrix-sing-bounds}
   
   Let $\mN \in \R^{n \times d}$ denote a random matrix whose rows are drawn mutually i.i.d.\ 
    from $\mc{N}(\bm{0}_d, 
    \tfrac{1}{d} (\mI_d - \vv \vv^T))$. 
    If $d = \Omega\left(n + \log \frac{1}{\delta}\right)$, then there exists constants $C_1$ and $C_2$ such that, with probability at least $1 - \delta$,  
    \[ C_1 \leq \sigma_n(\mN) \leq \sigma_1(\mN) \leq C_2.\]
\end{restatable}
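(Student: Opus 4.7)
The plan is to reduce the problem to a standard bound on the extreme singular values of a Gaussian matrix with i.i.d.\ entries, and then apply a concentration inequality of Davidson--Szarek type.

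First I would exploit rotational invariance. By rotating coordinates so that $\vv = \ve_d$, the covariance $\tfrac{1}{d}(\mI_d - \vv\vv^T)$ becomes $\tfrac{1}{d}\operatorname{diag}(1,\ldots,1,0)$, so each row $\vn_i$ of $\mN$ has its last coordinate identically zero and its first $d-1$ coordinates i.i.d.\ $\cN(0,1/d)$. Hence the nonzero singular values of $\mN$ coincide with the singular values of $\tilde{\mN} \in \R^{n \times (d-1)}$ whose entries are i.i.d.\ $\cN(0,1/d)$. Since $n \leq d - 1$ under the hypothesis, $\sigma_n(\mN) = \sigma_n(\tilde{\mN})$ and $\sigma_1(\mN) = \sigma_1(\tilde{\mN})$.

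Next I would apply the Davidson--Szarek bound to $\sqrt{d}\,\tilde{\mN}$, which has i.i.d.\ standard Gaussian entries: with probability at least $1 - 2e^{-t^2/2}$,
\[
\sqrt{d-1} - \sqrt{n} - t \;\leq\; \sigma_n(\sqrt{d}\,\tilde{\mN}) \;\leq\; \sigma_1(\sqrt{d}\,\tilde{\mN}) \;\leq\; \sqrt{d-1} + \sqrt{n} + t.
\]
Dividing through by $\sqrt{d}$ and setting $t = \sqrt{2\log(2/\delta)}$ yields, with probability at least $1 - \delta$,
\[
\frac{\sqrt{d-1} - \sqrt{n} - \sqrt{2\log(2/\delta)}}{\sqrt{d}} \;\leq\; \sigma_n(\mN) \;\leq\; \sigma_1(\mN) \;\leq\; \frac{\sqrt{d-1} + \sqrt{n} + \sqrt{2\log(2/\delta)}}{\sqrt{d}}.
\]

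Finally I would verify the constants. Using $\sqrt{a+b} \leq \sqrt{a} + \sqrt{b}$, the hypothesis $d \geq C_0(n + \log(1/\delta))$ for a sufficiently large absolute $C_0$ guarantees $\sqrt{n} + \sqrt{2\log(2/\delta)} \leq \tfrac{1}{2}\sqrt{d-1}$, which makes the lower bound at least $\tfrac{1}{2}\sqrt{(d-1)/d} \geq c$ and the upper bound at most $\tfrac{3}{2}\sqrt{(d-1)/d} \leq C$. The only subtlety in the argument is the bookkeeping of this absolute constant $C_0$; everything else is a direct invocation of standard Gaussian matrix concentration.
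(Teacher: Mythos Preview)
Your proof is correct and follows essentially the same strategy as the paper: reduce to an $n\times(d-1)$ matrix with i.i.d.\ standard Gaussian entries (you do this by rotating so $\vv=\ve_d$, the paper does it via the inclusion map $\mJ:\vecspan(\{\vv\})^\perp\to\R^d$), apply a concentration bound on the extreme singular values, and rescale. The only substantive difference is in the concentration step: you invoke the single Davidson--Szarek inequality, which handles both singular values at once with sharp constants, whereas the paper splits the argument into two separate citations---a sub-Gaussian operator-norm bound from Vershynin for $\sigma_1$ and the Rudelson--Vershynin smallest-singular-value theorem for $\sigma_n$---each carrying its own unspecified absolute constant. Your route is a bit cleaner and more self-contained for the Gaussian case; the paper's route would generalize more readily to sub-Gaussian noise, though that generality is not used here.
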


We prove this lemma in Appendix~\ref{app:max_margin_norm} using results from \citet{vershynin_2018} and \citet{smallest_sing_val}.  A consequence of this lemma is that with probability at least $1-\delta$, the condition number of $\mN$ restricted to $\vecspan \mN$ can be bounded above independently of all hyperparameters.  For this reason, we refer to the noise as being well-conditioned.

Now let $\vw = \mc{A}(\mX, \hat{\vy})$ be the linear classifier returned by 
the algorithm. Observe that we can decompose the weight vector into a signal and noise component
\[
\vw = a_v \vv + \vz , 
\]
where $\vz \perp \vv$ and $a_v \in \R$. Based on this decomposition the proof proceeds as follows.

\paragraph{1. Generalization bounds based on the SNR:} For test data as per the data model given in Definition \ref{def:data-model} we want to bound the probability of misclassification: in particular, we want to bound the probability that
\[
X := y \langle \vw, \vx \rangle = \sqrt{\gamma}a_v + \sqrt{1 - \gamma}\langle \vn, \vz \rangle \leq 0.
\]
As the noise is normally distributed, $X \sim \cN\left(\sqrt{\gamma}, \tfrac{1-\gamma}{d} \| \vz\|^2\right)$ and the desired upper bound therefore follows from Hoeffding's inequality,
\begin{align*}
        \P(X \leq 0) &\leq \exp\left(-\frac{\gamma d a_v^2 }{2(1 - \gamma)\|\vz\|^2 } \right).
\end{align*}
Using Gaussian anti-concentration, we also obtain a lower bound for the probability of misclassification:
    \begin{align*}
    \P(y\langle \vw, \vx \rangle \leq 0) \geq \max\left\{\frac{1}{2} - \sqrt{\frac{d\gamma}{2\pi(1 - \gamma) } } \frac{a_v}{\|\vz\|}, \frac{1}{4}\exp\left(-\frac{6d}{\pi}\frac{\gamma}{1-\gamma}\frac{a_v^2}{\|\vz\|^2}\right)\right\}.
    \end{align*}

\paragraph{2. Upper bound the norm of the max-margin classifier:} In order to use the approximate max-margin property we require an upper bound on $\| \vw^* \|$. As by definition $\|\vw^*\| \leq \|\tilde{\vw}\|$, it suffices to construct a vector $\tilde{\vw}$ that interpolates the data and has small norm. Using that the noise matrix of the data is well-conditioned with high probability, we achieve this by strategically constructing the signal and noise components of $\tilde{\vw}$. This yields the bound
\begin{align*}
    \|\vw^*\| \leq \|\tilde{\vw}\| \leq C \min\left(\sqrt{\frac{n}{1 - \gamma} }, \sqrt{\frac{1}{\gamma} + \frac{k}{1 - \gamma} } \right),
\end{align*}
where the arguments of the min function originate from a small and large $\gamma$ regime respectively.

\paragraph{3. Lower bound the SNR using the approximate margin maximization property:} Based on step 1 the key quantity of interest from a generalization perspective is the ratio $a_v/ \| \vz\|$, which describes the signal to noise ratio (SNR) of the learned classifier. To lower bound this quantity we first lower bound $a_v$. In particular, if $a_v$ is small, then the only way to attain zero loss on the clean data is for $\|\vz\|$ to be large. However, under appropriate assumptions on $d,n,k$ and $\gamma$ this can be shown to contradict the bound $\| \vz\| \leq \| \vw \| \leq | \cA | \| \vw^*\|$, and thus $a_v$ must be bounded from below. A lower bound on $a_v / \| \vz \|$ then follows by again using $\| \vz\| \leq \| \vw^*\|$. Hence we obtain a lower bound for the SNR and establish benign overfitting.

\paragraph{4. Upper bound the SNR using the zero loss condition:} For the generalization lower bound, we compute an upper bound for the ratio $a_v/\|\vz\|$ rather than a lower bound. Since the model perfectly fits the training data with margin one,
\[
1 \leq \hat{y}_i\langle \vw, \vx_i \rangle = \sqrt{\gamma}\beta_i a_v + \sqrt{1 - \gamma} \hat{y}_i \langle \vz, \vn_i\rangle 
\]
for all $i \in [n]$.  The above inequality implies that $\sqrt{1 - \gamma}\hat{y}_i \langle \vn_i, \vz \rangle$ is at least $\sqrt{\gamma}a_v$ for all corrupt points. Since the noise is well-conditioned, this gives a lower bound on $\|\vz\|$ in terms of $a_v$ and hence an upper bound on the SNR $a_v/\|\vz\|$. By the second generalization lower bound in step 1, the generalization error is bounded below at a similar exponential rate to the upper bound.

\paragraph{5. Upper bound the SNR using the zero loss condition and maximum margin property:} To prove non-benign overfitting, we again compute an upper bound for the ratio $a_v/\|\vz\|$.  We return to the zero loss condition:
\[
1 \leq \hat{y}_i\langle \vw, \vx_i \rangle = \sqrt{\gamma}\beta_i a_v + \sqrt{1 - \gamma} \hat{y}_i \langle \vz, \vn_i\rangle 
\]
for all $i \in [n]$. If $\gamma$ is small and $|\sqrt{\gamma}a_v|$ is large, then $\|\vw\|$ will be large, contradicting the approximate margin maximization. Hence the above inequality implies that $\sqrt{1 - \gamma}\hat{y}_i \langle \vn_i, \vz \rangle$ is large for all $i \in [n]$. Since the noise is well-conditioned, this can only happen when $\|\vz\|$ is large. This gives us a lower bound on $\|\vz\|$. As before, we can also upper bound $a_v$ by $\|\vw\|$, giving us an upper bound on the SNR $a_v/\|\vz\|$. By the first generalization lower bound in step 1, the classifier generalizes poorly and exhibits non-benign overfitting.

\subsection{From linear models to leaky ReLU networks}
The proof of benign and non-benign overfitting in the linear case uses the tension between the two properties of approximate margin maximization: fitting both the clean and corrupt points with margin versus the bound on the norm. To extend this idea to a shallow leaky ReLU network as per \eqref{eq:architecture}, we consider the same decomposition for each neuron $j \in [2m]$,
\[
\vw_j = a_j \vv + \vz_j,
\]
where $a_j \in \R$ and $\vz_j \perp \vv$. In the linear case $\pm a_v$ can be interpreted as the activation of the linear classifier on $\pm \vv$ respectively: in terms of magnitude the signal activation is the same in either case and thus we measure the alignment of the linear model with the signal using $|a_v|$. For leaky ReLU networks we define their activation on $\pm \vv$ respectively as $A_{1} = f(\mW, \vv)$ and $A_{-1} = -f(\mW, -\vv)$, 

and then define the alignment of the network as 
$A_{\min}= \min \{A_1, A_{-1} \}$. Considering the alignment of the network with the noise, then if $\mZ \in \R^{2m \times d}$ denotes a matrix whose $j$-th row is $\vz_j$, then we measure the alignment of the network using $\|\mZ \|_F$. As a result, analogous to $a_v/ \|\vz \|$, the key ratio from a generalization perspective in the context of a leaky ReLU network is $A_{\min} / \| \mZ\|_F$. The proof Theorems \ref{thm:leakyrelu-benign-overfitting}, \ref{thm:leakyrelu-benign-overfitting-lb}, and \ref{thm:leakyrelu-nonbenign-overfitting} then follow the same outline as Steps 1-3 above but with additional non-trivial technicalities.

\section{Conclusion}
In this work we have proven conditions under which leaky ReLU networks trained on binary classification tasks exhibit benign and non-benign overfitting. We have substantially relaxed the necessary assumptions on 
the input data compared with prior work; instead of requiring nearly orthogonal data with $d = \Omega(n^2 \log n )$ or higher, we only need 
$d = \Omega(n)$. We achieve this by using the distribution of singular values of the noise rather than specific correlations between noise vectors. Our emphasis was on networks trained by gradient descent with the hinge loss, but we establish a new framework that is general enough to accommodate any algorithm that is approximately margin maximizing.

There are a few limitations of our results which would be natural questions to address in future work. While we improve upon existing results in our dependence on the input dimension of the data, we still require that the training dataset is linearly separable. This leaves open the question of whether an overparameterized network will perfectly fit the training data and generalize well for lower dimensional data, or satisfy a similar margin maximization condition. We also focus mainly on two-layer networks with fixed outer layer weights trained with the hinge loss. It would be interesting to investigate whether analogous results hold for deeper architectures or different loss functions and data models.

\subsubsection*{Acknowledgments}
This material is based upon work supported by the National Science Foundation under Grant No. DMS-1928930 and by the Alfred P. Sloan Foundation under grant G-2021-16778, while the authors EG and DN were in residence at the Simons Laufer Mathematical Sciences Institute (formerly MSRI) in Berkeley, California, during the Fall 2023 semester.  

EG and DN were also partially supported by NSF DMS 2011140. EG was also supported by a NSF Graduate Research Fellowship under grant DGE 2034835. 

GM and KK were partly supported by 
NSF CAREER DMS 2145630 and DFG SPP~2298 Theoretical Foundations of Deep Learning grant 464109215. GM was also partly supported by NSF grant CCF 2212520, ERC Starting Grant 757983 (DLT), and BMBF in DAAD project 57616814 (SECAI). 
\newpage

\bibliography{refs}
\bibliographystyle{arxiv}

\newpage 

\begin{appendices}

\section{Preliminaries on random vectors}

Recall that the sub-exponential norm of a random variable $X$ is defined as 
\[\|X\|_{\psi_1} := \inf\{t>0 \colon \mathbb{E}[\exp(|X|/t)]\leq 2\}\] \citep[see][Definition 2.7.5]{vershynin_2018} and that the sub-Gaussian norm is defined as \[\|X\|_{\psi_2} := \inf \{t>0\colon \mathbb{E}[\exp(X^2/t^2)]\leq 2\}.\] A random variable $X$ is sub-Gaussian if and only if $X^2$ is sub-exponential. Furthermore, $\|X^2\|_{\psi_1} = \|X\|_{\psi_2}^2$.

\begin{lemma}\label{lemma:matrix-transformation-norm}
     Let $\vn \sim \mc{N}(\bm{0}_d, d^{-1}(\mI_d - \vv\vv^T))$ and suppose that $\mZ \in \R^{m \times d}$. Then with probability at least $1 - \epsilon$,
    \begin{align*}
        \|\mZ\vn\| \leq C\|\mZ\|_F \sqrt{\frac{1}{d}  \log \frac{1}{\epsilon}  }.
    \end{align*}
\end{lemma}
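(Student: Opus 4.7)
The plan is to apply the Hanson-Wright inequality to the quadratic form $\|\mZ\vn\|^2 = \vn^T \mZ^T \mZ \vn$. First I would reparameterize the degenerate Gaussian $\vn$ in terms of a standard one. Since $\mI_d - \vv\vv^T$ is an orthogonal projection onto $\vv^\perp$ (in particular idempotent and self-adjoint), we may realize $\vn = \tfrac{1}{\sqrt{d}}(\mI_d - \vv\vv^T)\vg$ with $\vg \sim \cN(\bm{0}_d, \mI_d)$. Then
\[
\|\mZ\vn\|^2 = \vg^T A \vg, \qquad A := \tfrac{1}{d}(\mI_d - \vv\vv^T)\mZ^T\mZ(\mI_d - \vv\vv^T),
\]
so the problem reduces to a concentration statement for a positive semi-definite quadratic form in an isotropic Gaussian.

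Next I would bound the three quantities that govern Hanson-Wright in terms of $\|\mZ\|_F$. The trace is $\Tr A = \tfrac{1}{d}\Tr((\mI_d-\vv\vv^T)\mZ^T\mZ) \leq \tfrac{1}{d}\|\mZ\|_F^2$. Conjugation by a projection is a contraction in both the Frobenius and operator norms, so $\|A\|_F \leq \tfrac{1}{d}\|\mZ^T\mZ\|_F \leq \tfrac{1}{d}\|\mZ\|\,\|\mZ\|_F \leq \tfrac{1}{d}\|\mZ\|_F^2$ and $\|A\| \leq \tfrac{1}{d}\|\mZ\|^2 \leq \tfrac{1}{d}\|\mZ\|_F^2$. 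Hanson-Wright then gives
\[
\P\bigl(|\vg^T A \vg - \Tr A| > t\bigr) \leq 2\exp\!\left(-c\min\!\left(\tfrac{t^2}{\|A\|_F^2},\,\tfrac{t}{\|A\|}\right)\right)
\]
for an absolute constant $c>0$. Choosing $t = C\tfrac{\|\mZ\|_F^2}{d}\log(2/\epsilon)$ for a sufficiently large absolute constant $C$ puts us in the sub-exponential (linear) tail regime and yields $\|\mZ\vn\|^2 \leq \Tr A + t \leq C'\tfrac{\|\mZ\|_F^2}{d}\log(1/\epsilon)$ with probability at least $1-\epsilon$, after absorbing the $\Tr A$ term into the constant. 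Taking square roots delivers the claim.

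There is no serious obstacle here; once the degenerate covariance is re-parameterized via a standard Gaussian, this is a textbook Hanson-Wright calculation. The only minor bookkeeping is to verify that both $\|A\|_F$ and $\|A\|$ are controlled by $\tfrac{1}{d}\|\mZ\|_F^2$, which ensures the chosen $t$ genuinely sits in the linear regime of the bound (so the $\log(1/\epsilon)$ factor, rather than $\sqrt{\log(1/\epsilon)}$, is the correct scaling). An alternative route that bypasses Hanson-Wright would be to diagonalize the covariance $\tfrac{1}{d}\mZ(\mI_d-\vv\vv^T)\mZ^T$ of the Gaussian vector $\mZ\vn$ and apply Bernstein's inequality directly to the resulting weighted sum of independent $\chi^2_1$ random variables, whose weights sum to at most $\tfrac{1}{d}\|\mZ\|_F^2$ and are each bounded by $\tfrac{1}{d}\|\mZ\|^2$; this would give the same conclusion with essentially the same constants.
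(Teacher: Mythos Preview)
Your proof is correct, but it takes a different route from the paper's. The paper reparameterizes $\vn$ via a standard Gaussian exactly as you do, but then applies concentration of the norm of a Gaussian vector (Vershynin, Theorem~6.3.2) to bound the sub-Gaussian norm of the scalar random variable $\|\mZ\vn\|$ directly by $Cd^{-1/2}\|\mZ\|_F$, and reads off the tail bound from that. Your approach instead squares and applies Hanson--Wright to the quadratic form $\vg^T A\vg$. Both are textbook; the paper's argument is marginally shorter because it avoids bounding $\|A\|_F$ and $\|A\|$ separately and works with the norm (a $1$-Lipschitz function of $\vg$) rather than its square. Your route has the mild advantage of being more self-contained if one does not want to invoke Gaussian Lipschitz concentration, and your alternative via diagonalization plus Bernstein on a weighted $\chi^2$ sum is essentially Hanson--Wright specialized to the Gaussian case. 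Either way the conclusion and the $\sqrt{\log(1/\epsilon)}$ scaling are the same.
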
 
\begin{proof}
    Let $\mP = \mI_d - \vv\vv^T$ be the orthogonal projection onto $\vecspan(\{\vv\})^{\perp}$, so that $\mZ\vn$ is identically distributed to $d^{-1/2}\mZ\mP\vn'$, where $\vn'$ has distribution $\mc{N}(\bm{0}_d, \mI_d)$. Following \citet[Theorem 6.3.2]{vershynin_2018}, 
    \begin{align*}
        \Big\| \|\mZ\vn\| - \|d^{-1/2}\mZ \mP \|_F \Big\|_{\psi_2} &= \Big\| \|d^{-1/2}\mZ \mP \vn'\| - \|d^{-1/2}\mZ \mP\|_F\Big\|_{\psi_2} \\&\leq C\|d^{-1/2}\mZ \mP\|\\
        &\leq C d^{-1/2}\|\mZ\|\|\mP\|\\
        &= Cd^{-1/2}\|\mZ\| \\
        &\leq C d^{-1/2}\|\mZ\|_F , 
    \end{align*} 
    where we used that $\mP$ is an orthogonal projection in the fourth line and that the operator norm is by bounded above by the Frobenius norm in the fifth line. 
    As a result the sub-Gaussian norm of $\|\mZ\vn\|$ is bounded as

    \begin{align*}
        \Big\| \|\mZ\vn\| \Big\|_{\psi_2} &\leq \Big\| \|\mZ\vn\| - \|d^{-1/2}\mZ\mP \|_F \Big\|_{\psi_2} + \Big\| \|d^{-1/2}\mZ \mP \|_F \Big\|_{\psi_2}\\
        &\leq Cd^{-1/2}(\|\mZ\|_F + \|\mZ\mP\|_F)\\
        &\leq Cd^{-1/2}\|\mZ\|_F,
    \end{align*}
    where the last line follows from the calculation
    \begin{align*}
        \|\mZ\mP\|_F &= \|\mP^T \mZ^T\|_F\\
        &= \|\mP \mZ^T\|_F\\
        &\leq \|\mP\|\|\mZ^T\|_F\\
        &= \|\mZ^T\|_F\\
        &= \|\mZ\|_F.
    \end{align*}

    This implies a tail bound \citep[see][Proposition 2.5.2]{vershynin_2018}

    \begin{align*}
        \P(\|\mZ\vn\| \geq t) &\leq 2\exp\left(-\frac{dt^2}{C\|\mZ\|_F^2 } \right),\quad \text{for all $t \geq 0$}.
    \end{align*}
    Setting $t = C\|\mZ\|_F \sqrt{\frac{1}{d} \log \frac{2}{\epsilon}}$, the result follows.
\end{proof}
\begin{lemma}\label{lemma:spherical-concentration}
    Let $\vn \sim \mc{N}(\bm{0}_d, d^{-1}(\mI_d - \vv \vv^T))$ 
    and suppose $\vz \in \vecspan(\{\vv\})^{\perp}$. 
    
    There exists a $C>0$ such that with probability at least $1 - \delta$ 
    \begin{align*}
    |\langle \vn, \vz \rangle| \leq C\|\vz\|\sqrt{\frac{1}{d} \log \frac{1}{\delta} }.
    \end{align*}
    Furthermore, there exists a $c>0$ such that with probability at least $\frac{1}{2}$ 
    \begin{align*}
        |\langle \vn, \vz \rangle| \geq \frac{c\|\vz\|}{\sqrt{d}}.
    \end{align*}
\end{lemma}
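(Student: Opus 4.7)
The plan is to reduce both inequalities to standard one‑dimensional Gaussian estimates by exploiting the fact that a linear functional of a Gaussian vector is Gaussian, and that $\vz$ lies in the range of the covariance.

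First I would observe that $\langle \vn, \vz \rangle = \vz^T \vn$ is a linear functional of a Gaussian vector and therefore has a centered Gaussian distribution. Its variance is
\begin{align*}
\operatorname{Var}(\langle \vn, \vz\rangle) \;=\; \vz^T \bigl(d^{-1}(\mI_d - \vv\vv^T)\bigr) \vz \;=\; \tfrac{1}{d}\bigl(\|\vz\|^2 - \langle \vz, \vv\rangle^2\bigr) \;=\; \tfrac{\|\vz\|^2}{d},
\end{align*}
where in the last equality I use the hypothesis $\vz \perp \vv$. Hence $\langle \vn, \vz\rangle \sim \cN(0, \|\vz\|^2/d)$.

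Next, for the upper bound I would apply the standard one‑dimensional Gaussian tail estimate: if $X \sim \cN(0,\sigma^2)$, then $\P(|X| \geq t) \leq 2 \exp(-t^2/(2\sigma^2))$. Taking $\sigma^2 = \|\vz\|^2/d$ and solving $2 \exp(-t^2 d/(2\|\vz\|^2)) = \delta$ gives $t = \|\vz\| \sqrt{(2/d)\log(2/\delta)}$, which yields the claimed inequality after absorbing the factor $\sqrt{2}$ and the additive constant into a single $C$.

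For the lower bound I would use Gaussian anti‑concentration. Writing $\langle \vn, \vz\rangle = (\|\vz\|/\sqrt{d})\, Z$ with $Z \sim \cN(0,1)$, the claim becomes $\P(|Z| \geq c) \geq 1/2$ for a suitable absolute constant $c$. Since the median of $|Z|$ is $\Phi^{-1}(3/4) \approx 0.6745$, any $c \in (0, \Phi^{-1}(3/4))$ works; for concreteness one may take $c = 1/2$ and verify $\P(|Z| \geq 1/2) = 2(1 - \Phi(1/2)) > 1/2$. There is no real obstacle here: the whole lemma amounts to the one‑line observation that $\langle \vn, \vz\rangle$ is a one‑dimensional Gaussian of variance $\|\vz\|^2/d$, after which both bounds are textbook estimates. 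The only point that requires care is the reduction of the covariance $d^{-1}(\mI_d - \vv\vv^T)$ acting on $\vz$, but this trivializes once $\vz \perp \vv$ is used.
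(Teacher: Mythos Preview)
Your proposal is correct and follows essentially the same approach as the paper: both identify $\langle \vn,\vz\rangle$ as a centered Gaussian with variance $\|\vz\|^2/d$ (using $\vz\perp\vv$ to simplify $\vz^T(\mI_d-\vv\vv^T)\vz$), then apply a standard Gaussian tail bound for the upper estimate and Gaussian anti-concentration for the lower estimate. Your write-up is in fact slightly more explicit than the paper's, giving a concrete value $c=1/2$ rather than merely asserting existence.
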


\begin{proof}
    Let $X = \langle \vn, \vz \rangle$. Then $X$ is Gaussian with variance
    \begin{align*}
        \E[X^2] &= \E[\vz^T \vn \vn^T \vz]\\
        &= \vz^T \E[\vn \vn^T]\vz\\
        &= d^{-1}\vz^T(\mI_d - \vv \vv^T)\vz\\
        &= d^{-1}\vz^T \vz\\
        &= d^{-1}\|\vz\|^2.
    \end{align*}
    Note the third line above follows from the fact that $\vz \in \vecspan(\{\vv\})^{\perp}$. Therefore by Hoeffding's inequality, for all $t \geq 0$
    \begin{align*}
        \P(|\langle \vn, \vz \rangle| \geq t) &\leq 2 \exp\left(-\frac{dt^2}{C\|\vz\|^2 }\right).
    \end{align*}
    Setting $t = C\|\vz\|^2\sqrt{\frac{1}{d} \log \frac{2}{\delta}}$, we obtain
    \[\P(|\langle \vn, \vz \rangle| \geq t) \leq \delta,\]
    which establishes the first part of the result.

    Since $d^{1/2} \|\vz\|^{-1}X$ is a standard Gaussian, there exists a constant $c$ such that
    \[\P(|d^{1/2}\|\vz\|^{-1}X| \geq c) \leq \frac{1}{2}. \]
    Rearranging, we obtain
    \[\P(|\langle \vn, \vz \rangle| \geq cd^{-1/2}\|\vz\|) \leq \frac{1}{2}.  \]
    This establishes the second part of the result.
\end{proof}

\section{Upper bounding the norm of the max-margin classifier of the data}\label{app:max_margin_norm}

Here we establish key properties concerning the data model given in Definition \ref{def:data-model}, our main goal being to establish bounds on the norm of the max-margin classifier. To this end we first identify certain useful facts about rectangular Gaussian matrices. In what follows we index the singular values of any given matrix $\mA$ in decreasing order as $\sigma_1(A)\geq \sigma_2(A)\geq \cdots$. Furthermore, we denote the $i$-th-row of a matrix $\mA$ as $\va_i$.

\begin{lemma} \label{lem:gauss-matrix-props} 
    Let $\mG \in \reals^{n \times d}$ be a Gaussian matrix whose entries are mutually i.i.d.\ with distribution $\cN(0,1)$. If  $d = \Omega\left(n + \log \frac{1}{\delta}\right)$, then with probability at least $1 - \delta$ the following inequalities are simultaneously true. 
    \begin{enumerate}
        \item $\sigma_1(\mG) \leq C(\sqrt{d} + \sqrt{n})$,
        \item $\sigma_n(\mG) \geq c(\sqrt{d} - \sqrt{n})$.
    \end{enumerate}

\end{lemma}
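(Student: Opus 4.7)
The plan is to handle the two inequalities separately and conclude by a union bound, since both are classical non-asymptotic estimates for rectangular Gaussian matrices.

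For the upper bound on $\sigma_1(\mG) = \|\mG\|$, I would combine Gordon's inequality, which gives $\expec[\sigma_1(\mG)] \leq \sqrt{d} + \sqrt{n}$, with the fact that the operator norm is $1$-Lipschitz with respect to the Frobenius norm. The standard Gaussian concentration of Lipschitz functions (see, e.g., Vershynin, Theorem 5.2.2 and Theorem 7.3.1) then yields that with probability at least $1 - \delta/2$,
\[
\sigma_1(\mG) \leq \sqrt{d} + \sqrt{n} + C_1\sqrt{\log(2/\delta)}.
\]
Under the hypothesis $d = \Omega(n + \log(1/\delta))$ with a sufficiently large hidden constant, $\sqrt{\log(2/\delta)} \leq C\sqrt{d}$, which absorbs the extra term and gives $\sigma_1(\mG) \leq C(\sqrt{d} + \sqrt{n})$ as required.

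For the lower bound on $\sigma_n(\mG)$, I would invoke the Davidson--Szarek / Rudelson--Vershynin estimate for the smallest singular value of a tall Gaussian matrix (this is the result cited as \citet{smallest_sing_val}), which gives that with probability at least $1 - \delta/2$,
\[
\sigma_n(\mG) \geq \sqrt{d} - \sqrt{n-1} - C_2\sqrt{\log(2/\delta)}.
\]
The main subtlety is then converting the right-hand side into a constant multiple of $\sqrt{d} - \sqrt{n}$. Here I would use that the hypothesis $d = \Omega(n + \log(1/\delta))$, with a hidden constant chosen large enough, implies both (i) $d \geq K n$ for a large constant $K$, so that $\sqrt{d} - \sqrt{n-1} \geq (1 - 1/\sqrt{K})\sqrt{d}$ is at least a constant multiple of $\sqrt{d}$, and (ii) $C_2\sqrt{\log(2/\delta)} \leq \tfrac{1}{2}\sqrt{d}$. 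Combining these, the right-hand side is bounded below by a constant multiple of $\sqrt{d}$, and since $\sqrt{d} - \sqrt{n} \leq \sqrt{d}$ anyway, this in turn lower bounds $c(\sqrt{d} - \sqrt{n})$.

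A union bound over the two events of probability at least $1 - \delta/2$ each completes the proof. The only part that requires care beyond citing the two concentration results is the elementary arithmetic in step two ensuring that the subtracted $\sqrt{\log(1/\delta)}$ term is dominated; this is where the assumption $d = \Omega(n + \log(1/\delta))$ (rather than simply $d = \Omega(n)$) is used, and where the hidden constant in the $\Omega$ must be taken sufficiently large.
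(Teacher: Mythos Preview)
Your proposal is correct and follows essentially the same strategy as the paper: handle $\sigma_1$ and $\sigma_n$ separately via standard non-asymptotic bounds for Gaussian matrices and combine with a union bound, using the assumption $d = \Omega(n + \log(1/\delta))$ to absorb the deviation term. The only minor difference is in the lower tail for $\sigma_n$: the paper invokes the Rudelson--Vershynin bound in its multiplicative form, $\prob(\sigma_n(\mG) \leq \epsilon(\sqrt{d}-\sqrt{n-1})) \leq (C\epsilon)^{d-n+1} + e^{-cd}$, and takes $\epsilon$ a fixed constant, whereas you use the additive Davidson--Szarek/Gaussian-Lipschitz form $\sigma_n(\mG) \geq \sqrt{d}-\sqrt{n-1} - C\sqrt{\log(1/\delta)}$; both are standard and lead to the same conclusion once $d$ dominates $n$ and $\log(1/\delta)$.
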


\begin{proof}
    We proceed by upper bounding the probability  that each individual inequality does not hold.

    \textit{1.} To derive an upper bound on $\sigma_1(\mG)$ we use the following fact \cite[see][Theorem 4.4.5]{vershynin_2018}. For any $\epsilon>0$,  
    \begin{align*}
        \prob(\sigma_1(\mG) \geq C_1 (\sqrt{n} + \sqrt{d} + \epsilon)) \leq 2 \exp(-\epsilon^2) . 
    \end{align*} 
    
    With $\epsilon= \sqrt{n} + \sqrt{d}$ and $d\geq \log \frac{4}{\delta}$ then
    \begin{align*}
        \prob(\sigma_1(\mG) \geq 2 C_1  (\sqrt{n} + \sqrt{d})) &\leq 2 \exp(-d)\\
        &\leq \frac{\delta}{2}.
    \end{align*}

    \textit{2.} To derive a lower bound on $\sigma_n(\mG)$ we use the following fact \citep[see][Theorem 1.1]{smallest_sing_val}. There exist constants $C_1,C_2>0$ such that, for any $\epsilon >0$,
    \begin{align*}
        \prob(\sigma_n(\mG) \leq \epsilon(\sqrt{d}  - \sqrt{n-1})) \leq (C_1\epsilon)^{d - n +1} + e^{-C_2d}.
    \end{align*}
    Let $\epsilon = \frac{1}{C_1e}$ and let $d \geq 2n + \left(2 + \frac{1}{C_2}\right)\log \frac{4}{\delta}$. Then 
    \begin{align*}
        \prob(\sigma_n(\mA) \leq \epsilon (\sqrt{d} - \sqrt{n})) &\leq \exp(-d/2) + \exp(-C_2d)\\
        &\leq \frac{\delta}{4} + \frac{\delta}{4}\\
        &= \frac{\delta}{2}.
    \end{align*}
    Hence both bounds hold simultaneously with probability at least $1 - \delta$.
\end{proof}

The next lemma formulates lower and upper bounds on the smallest and largest singular values of a noise matrix under our data model.

\lemmawellconditioned*

\begin{proof}
    Let $\mathbb{H} = \vecspan(\{\vv\})^{\perp} \cong \R^{d - 1}$. Let $\mN': \mathbb{H} \to \R^n$ be a random matrix whose rows are drawn mutually i.i.d.\ from $\mc{N}(\bm{0}_d, \mI_{\mathbb{H}} )$. Since $d = \Omega\left(n + \log \frac{1}{\delta}\right)$, with probability at least $1 - \delta$, we have
    \[c(\sqrt{d} - \sqrt{n}) \leq \sigma_n(\mN') \leq \sigma_1(\mN') \leq C(\sqrt{d} + \sqrt{n}). \]
    We denote the above event by $\omega$. Let $\mJ: \mathbb{H} \to \R^d$ be the inclusion map and let $\mP = \mI_d - \vv \vv^T$. For any random vector $\vn$ with distribution $\mc{N}(\bm{0}_d, \mI_{\mathbb{H}})$, $\mJ \vn$ is a Gaussian random vector with covariance matrix $\mJ \mJ^T = \mP$. Therefore, $d^{-1/2} \mN' \mJ^T$ is a random matrix whose rows are drawn mutually i.i.d.\ from $\mc{N}(\bm{0}_d, d^{-1}\mP)$. That is, $d^{-1/2} \mN'\mJ^T$ is identically distributed to $\mN$. For the lower bound on the $n$-th largest singular value, if $d \geq \frac{4n}{c^2}$, then conditional on $\omega$ we have
    \begin{align*}
        \sigma_n(d^{-1/2}\mN'\mJ^T) &= d^{-1/2}\sigma_{\min}(\mJ \mN'^T)\\
        &\geq d^{-1/2}\sigma_{\min}(\mJ) \sigma_{\min}(\mN'^T)\\
        &= d^{-1/2}\sigma_{\min}(\mN'^T)\\
        &= d^{-1/2}\sigma_n(\mN')\\
        &\geq c - \sqrt{\frac{n}{d}}\\
        &\geq \frac{c}{2}.
    \end{align*}
    Note here we define $\sigma_{\min}$ to be the smallest singular value of a matrix. In the first line we used $\mJ\mN'^T$ is a linear map $\R^n \to \R^d$ and $d \geq n$, and in the third line we used the fact that $\mJ$ is an inclusion map. For the upper bound on the largest singular value, if $d \geq \frac{n}{C^2}$, then conditional on $\omega$ we have
    \begin{align*}
        \sigma_1(d^{-1/2}\mN' \mJ^T) &= d^{-1/2}\sigma_1(\mJ \mN'^T)\\
        &\leq d^{-1/2}\sigma_1(\mN'^T)\\
        &= d^{-1/2}\sigma_1(\mN')\\
        &\leq C + \sqrt{\frac{n}{d}}\\
        &\leq 2C.
    \end{align*}
    Note here that again we used the fact that $\mJ$ is an inclusion map in the first line. Therefore, if $d = \Omega\left(n + \log \frac{1}{\delta}\right)$
    \begin{align*}
        \P\left(\frac{c}{2} \leq \sigma_n(\mN) \leq \sigma_1(\mN) \leq 2C \right) &\geq \P(\omega)\\
        &\geq 1 - \delta.
    \end{align*}

\end{proof}

The following lemma is useful for constructing vectors in the noise subspace with properties 
suitable for bounding the norm of the max-margin solution. 
We remark that the same approach could be used in the setting where the noise and signal are not orthogonal by considering the pseudo-inverse $([\mN, \vv ]^T)^{\dagger}$ instead of $\mN^{\dagger}$. 

\begin{lemma} \label{lem:noise-building-block}
Let $\cI \subseteq [n]$ be an arbitrary subset such that $|\cI|=\ell$. In the context of the data model given in Definition~\ref{def:data-model}, assume $d = \Omega\left(n + \log \frac{1}{\delta}\right)$. Then there exists $\vz \in \R^d$ such that with probability at least $1-\delta$ the following hold simultaneously.
    \begin{enumerate}
        \item $ \hat{y}_i \langle \vn_i, \vz  \rangle = 1$ for all $i \in \cI$,
        \item $ \hat{y}_i \langle \vn_i, \vz  \rangle =0$ for all $i \notin \cI$,
        \item $\vz \perp \vv$, 
        \item $C_1 \leq \| \vz \| \leq C_2$. 
    \end{enumerate}
\end{lemma}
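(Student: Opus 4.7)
The plan is to realize $\vz$ as the minimum-norm solution to the linear system $\mN \vz = \vu$, where $\vu \in \R^n$ encodes the first two desired properties and $\mN \in \R^{n\times d}$ is the noise matrix whose rows are the $\vn_i$. Since $\hat{y}_i^2 = 1$, property 1 is equivalent to $\langle \vn_i, \vz\rangle = \hat{y}_i$ for $i \in \cI$, and property 2 is equivalent to $\langle \vn_i, \vz \rangle = 0$ for $i \notin \cI$. Defining $u_i := \hat{y}_i \ind_{i\in\cI}$, conditions 1 and 2 become the single equation $\mN \vz = \vu$, so I would take the Moore--Penrose candidate $\vz := \mN^{\dagger}\vu = \mN^T(\mN\mN^T)^{-1}\vu$, provided $\mN\mN^T$ is invertible.

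To justify this, I would first invoke Lemma~\ref{lem:uniform-matrix-sing-bounds}: since $d = \Omega(n + \log(1/\delta))$, with probability at least $1-\delta$ the singular values of $\mN$ satisfy $c \le \sigma_n(\mN) \le \sigma_1(\mN) \le C$, and I condition on this event. In particular $\mN$ has full row rank and $\mN\mN^T$ is invertible with eigenvalues in $[c^2, C^2]$, so the candidate $\vz$ is well defined. Properties 1 and 2 are then immediate from $\mN\vz = \mN\mN^T(\mN\mN^T)^{-1}\vu = \vu$. Property 3 follows from the fact that $\vz$ lies in the column space of $\mN^T$, i.e.\ the row space of $\mN$; as each row $\vn_i$ is drawn from a Gaussian with degenerate covariance $d^{-1}(\mI_d - \vv\vv^T)$, it almost surely lies in $\mathbb{H} := \vecspan(\{\vv\})^\perp$, whence so does $\vz$.

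For property 4, I compute $\|\vz\|^2 = \vu^T(\mN\mN^T)^{-1}\vu$. Since the eigenvalues of $(\mN\mN^T)^{-1}$ are the reciprocals of $\sigma_i(\mN)^2 \in [c^2, C^2]$, this gives $\|\vu\|^2/C^2 \le \|\vz\|^2 \le \|\vu\|^2/c^2$, and the identity $\|\vu\|^2 = \sum_{i \in \cI}\hat{y}_i^2 = \ell$ then yields $\sqrt{\ell}/C \le \|\vz\| \le \sqrt{\ell}/c$, providing the required constants $C_1, C_2$ (which may depend on the fixed parameter $\ell$). I do not anticipate a serious technical obstacle: everything reduces to the well-conditioning already established in Lemma~\ref{lem:uniform-matrix-sing-bounds}, and the only mildly delicate point is noting that the row space of $\mN$ lies inside $\mathbb{H}$, which is what makes property 3 automatically compatible with 1 and 2.
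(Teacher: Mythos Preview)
Your proposal is correct and follows essentially the same approach as the paper: define the target vector $\vu$ (the paper calls it $\vw$) with entries $\hat{y}_i$ on $\cI$ and zero elsewhere, set $\vz = \mN^\dagger \vu$, and read off all four properties from the singular-value bounds of Lemma~\ref{lem:uniform-matrix-sing-bounds} together with the fact that the row space of $\mN$ lies in $\vecspan(\{\vv\})^\perp$. Your remark that the constants $C_1,C_2$ in fact scale as $\sqrt{\ell}$ is also exactly what the paper's proof yields.
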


\begin{proof}
    Recall that $\mN \in \R^{n \times d}$ is a random matrix whose rows are selected i.i.d.\ from $\mc{N}(\bm{0}_d, d^{-1}(\mI_d - \vv \vv^T))$. By Lemma \ref{lem:uniform-matrix-sing-bounds}, with probability at least $1 - \delta$ we have
    \begin{align*}
        c \leq \sigma_n(\mN) \leq \sigma_1(\mN) \leq C.
    \end{align*}
    Conditioning on this event, we will construct a vector $\vz$ which satisfies the desired properties. Let $\vw \in \R^n$ satisfy $w_i = \hat{y}_i$ if $i \in \mc{I}$ and $w_i = 0$ otherwise. Let $\vz = \mN^{\dagger}\vw$, where $\mN^{\dagger} = \mN^T(\mN \mN^T)^{-1}$ is the right pseudo-inverse of $\mN$. Then $\mN \vz = \vw$. In particular, for $i \in \mc{I}$, $\hat{y}_i \langle \vn_i, \vz \rangle = \hat{y}_i w_i = 1$, and for $i \notin \mc{I}$, $\hat{y}_i \langle \vn_i, \vz \rangle = \hat{y}_i w_i = 0$. This establishes properties 1 and 2. Since $\mN^{\dagger} \vw$ is in the span of the set $\{\vn_i\}_{i \in [n]}$, it is orthogonal to $\vv$. This establishes property 3. Finally, we can bound
    \begin{align*}
        \|\vz\| &= \|\mN^{\dagger}\vw\|\\
        &\leq \|\mN^{\dagger}\| \|\vw\|\\
        &= \frac{\|\vw\|}{\sigma_n(\mN)}\\
        &\leq \frac{\|\vw\|}{c}\\
        &= \frac{\sqrt{\ell}}{c}
    \end{align*}
    and
    \begin{align*}
        \|\vz\| &= \|\mN^{\dagger}\vw\|\\
        &\geq \sigma_n(\mN^{\dagger})\|\vw\|\\
        &= \frac{\|\vw\|}{\sigma_1(\mN)}\\
        &\geq \frac{\|\vw\|}{C}\\
        &= \frac{\sqrt{\ell}}{C}
    \end{align*}
    which establishes property 4.

\end{proof}

With Lemma \ref{lem:noise-building-block} in place we are now able to appropriately bound the max-margin norm.

\begin{lemma} \label{lem:bo-max-margin} In the context of the data model given in Definition \ref{def:data-model}, let $\vw^*$ denote the max-margin classifier of the training data $(\mX, \hat{\vy})$, which exists almost surely. If $d = \Omega\left(n + \log \frac{1}{\delta}\right)$ then with probability at least $1-\delta$
\begin{align*}
    \| \vw^* \| \leq C\sqrt{\frac{1}{\gamma} + \frac{k}{1-\gamma}},
\end{align*}
where $C>0$ is a constant.
\end{lemma}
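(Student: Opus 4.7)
The plan is to explicitly construct a feasible point $\tilde{\vw}$ of the max-margin program with norm bounded by $C\sqrt{\tfrac{1}{\gamma} + \tfrac{k}{1-\gamma}}$. Since $\vw^*$ minimizes $\|\cdot\|$ over the feasible set, this will immediately give $\|\vw^*\| \leq \|\tilde{\vw}\|$ and the result. Following the decomposition discussed in Section~\ref{sec:prelims}, I will take $\tilde{\vw} = a\vv + \vz$ with $\vz \perp \vv$, and choose the signal coefficient $a$ and the noise component $\vz$ so that all the interpolation constraints hold with margin exactly one.

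For the signal part, set $a = 1/\sqrt{\gamma}$. Then for every index $i$,
\begin{equation*}
\hat{y}_i \langle a\vv, \vx_i\rangle = \hat{y}_i \cdot \frac{1}{\sqrt{\gamma}} \cdot \sqrt{\gamma} y_i = \beta_i,
\end{equation*}
which equals $+1$ on clean indices $i\in\cG$ and $-1$ on corrupt indices $i\in\cB$. The noise component $\vz$ therefore needs to contribute $0$ on clean points and $2/\sqrt{1-\gamma}$ (after the $\sqrt{1-\gamma}$ factor from $\vx_i$) on corrupt points, i.e.\ $\hat{y}_i\langle \vn_i,\vz\rangle = 0$ for $i\in\cG$ and $\hat{y}_i\langle \vn_i,\vz\rangle = 2/\sqrt{1-\gamma}$ for $i\in\cB$. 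I will obtain such a $\vz$ by applying Lemma~\ref{lem:noise-building-block} with $\cI = \cB$, which (with probability at least $1-\delta$ under the hypothesis $d = \Omega(n + \log\tfrac{1}{\delta})$) produces a vector $\vz_0 \perp \vv$ satisfying $\hat{y}_i\langle \vn_i,\vz_0\rangle = \mathbbm{1}[i\in\cB]$ and $\|\vz_0\| \leq C\sqrt{k}$. Setting $\vz = \tfrac{2}{\sqrt{1-\gamma}}\vz_0$ gives exactly the desired values.

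Combining the two components, $\tilde{\vw} = \tfrac{1}{\sqrt{\gamma}}\vv + \tfrac{2}{\sqrt{1-\gamma}}\vz_0$ satisfies $\hat{y}_i\langle\vx_i,\tilde\vw\rangle = \beta_i + \sqrt{1-\gamma}\hat{y}_i\langle\vn_i,\vz\rangle = 1$ for every $i\in[n]$, so $\tilde{\vw}$ is feasible for the max-margin program. Using orthogonality of the two pieces and the norm bound from Lemma~\ref{lem:noise-building-block},
\begin{equation*}
\|\tilde{\vw}\|^2 = \frac{1}{\gamma} + \frac{4}{1-\gamma}\|\vz_0\|^2 \leq \frac{1}{\gamma} + \frac{Ck}{1-\gamma},
\end{equation*}
which yields $\|\vw^*\| \leq \|\tilde{\vw}\| \leq C\sqrt{\tfrac{1}{\gamma} + \tfrac{k}{1-\gamma}}$.

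The main (and essentially only) nontrivial ingredient is Lemma~\ref{lem:noise-building-block}, whose proof relies on the well-conditioning of $\mN$ restricted to $\vecspan(\{\vv\})^\perp$ (Lemma~\ref{lem:uniform-matrix-sing-bounds}) to invert $\mN$ and control the norm of the preimage; this is exactly where the hypothesis $d = \Omega(n + \log\tfrac{1}{\delta})$ enters. Beyond invoking that lemma with $\cI = \cB$, the construction above is algebraically tight and no additional probabilistic arguments are needed.
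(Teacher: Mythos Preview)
Your proposal is correct and essentially identical to the paper's proof: both construct the feasible vector $\tilde{\vw} = \tfrac{1}{\sqrt{\gamma}}\vv + \tfrac{2}{\sqrt{1-\gamma}}\vz$ with $\vz$ obtained from Lemma~\ref{lem:noise-building-block} applied to $\cI = \cB$, verify the margin constraints, and bound the norm using orthogonality and $\|\vz\|\leq C\sqrt{k}$. The only difference is notational (you call the lemma's output $\vz_0$ and give slightly more motivation for the construction).
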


\begin{proof}
    Under the assumptions stated, the conditions of Lemma \ref{lem:noise-building-block} hold with probability at least $1-\delta$. Conditioning on this let
    \[
    \vw = \frac{1}{\sqrt{\gamma}} \vv + \frac{2}{\sqrt{1-\gamma}}\vz
    \]
    where $\vz$ is the vector constructed in Lemma \ref{lem:noise-building-block} with $\cI = \cB$.
    For any $i \in [n]$ we therefore have
    \[
    \hat{y_i} \langle \vx_i, \vw \rangle = \beta_i + 2 \hat{y_i}\langle \vn_i, \vz \rangle.
    \]
    As a result, for $ i \in \cG$
    \begin{align*}
        \hat{y_i} \langle \vx_i, \vw \rangle = 1 + 2 \hat{y_i}\langle \vn_i, \vz \rangle = 1,
    \end{align*}
    while for $l \in \cB$
    \begin{align*}
        \hat{y_i} \langle \vx_i, \vw \rangle = -1 + 2\hat{y}_i\langle \vn_i, \vz \rangle = 1.
    \end{align*}
    As a result $\hat{y}_i \langle \vx_i, \vw \rangle = 1$ for all $i \in [n]$. Furthermore, observe
    \begin{align*}
        \| \vw \|^2 &=  \frac{1}{\gamma} + \frac{4}{1-\gamma} \| \vz \|^2 \\
        &\leq C\left(\frac{1}{\gamma} + \frac{k}{1-\gamma}\right)
    \end{align*}
    for a universal constant $C$. To conclude observe $ \| \vw^* \| \leq \| \vw \|$ by definition of being max-margin.
\end{proof}

Lemma \ref{lem:bo-max-margin} constructs a classifier with margin one using an appropriate linear combination of the signal vector $\vv$ and a vector in the noise subspace which classifies all noise components belonging to bad points correctly. This bound is useful for the benign overfitting setting in which $\gamma$ is not too small. However, for small $\gamma$, as is the case in the non-benign overfitting setting, this bound behaves poorly as the only way the construction can fit all the data points is by making the coefficient in front of the $\vv$ component large. For the non-benign overfitting setting we therefore require a different approach and instead fit all data points based on their noise components alone. In particular, the following bound behaves better than that given in Lemma \ref{lem:bo-max-margin} when $\gamma$ approaches 0.

\begin{lemma}\label{lem:non-bo-max-margin}
    In the context of the data model given in Definition \ref{def:data-model}, let $\vw^*$ denote the max-margin classifier of the training data $(\mX, \hat{\vy})$, which exists almost surely. If $d = \Omega\left(n + \log \frac{1}{\delta}\right)$ then with probability at least $1-\delta$
\begin{align*}
    \| \vw^* \| \leq C \sqrt{\frac{n}{1 - \gamma} }.
\end{align*}
\end{lemma}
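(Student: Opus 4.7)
The plan is to construct, with high probability, a feasible classifier $\tilde{\vw}$ for the margin-one interpolation problem whose norm matches the claimed bound, and then conclude via the minimality of $\vw^\ast$. The key difference from Lemma \ref{lem:bo-max-margin} is that the signal direction $\vv$ will play \emph{no} role in the construction: every data point, whether clean or corrupt, will be fit purely through its noise component. This is precisely the regime that makes the bound useful when $\gamma$ is small.

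The main tool is Lemma \ref{lem:noise-building-block} applied with $\cI = [n]$. Since the hypothesis $d = \Omega(n + \log\frac{1}{\delta})$ of the present lemma matches the hypothesis of that lemma, with probability at least $1-\delta$ we obtain a vector $\vz \in \R^d$ that simultaneously satisfies $\vz \perp \vv$, $\hat{y}_i \langle \vn_i, \vz\rangle = 1$ for all $i \in [n]$, and $\|\vz\| \leq C\sqrt{n}$. I will then set
\[
\tilde{\vw} \;=\; \frac{1}{\sqrt{1-\gamma}}\,\vz.
\]
Using the decomposition $\vx_i = \sqrt{\gamma}\,y_i \vv + \sqrt{1-\gamma}\,\vn_i$ from Definition \ref{def:data-model} together with $\vz \perp \vv$, the signal term drops out and for every $i \in [n]$ we get
\[
\hat{y}_i\langle \vx_i, \tilde{\vw}\rangle \;=\; \hat{y}_i \langle \vn_i, \vz\rangle \;=\; 1,
\]
so $\tilde{\vw}$ is feasible for the max-margin program (achieving equality in every constraint). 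Computing norms gives
\[
\|\tilde{\vw}\| \;=\; \frac{\|\vz\|}{\sqrt{1-\gamma}} \;\leq\; C\sqrt{\frac{n}{1-\gamma}},
\]
and minimality of the max-margin classifier yields $\|\vw^\ast\| \leq \|\tilde{\vw}\|$, which is the claimed bound.

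There is no real obstacle here: the entire argument rests on Lemma \ref{lem:noise-building-block}, and the only thing to verify is that the simple rescaling by $(1-\gamma)^{-1/2}$ correctly converts the noise-side margin-one conditions $\hat{y}_i \langle \vn_i, \vz\rangle = 1$ into margin-one conditions for the full inputs $\vx_i$. The orthogonality $\vz \perp \vv$ provided by Lemma \ref{lem:noise-building-block} is what makes this rescaling lossless.
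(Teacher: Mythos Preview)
Your proposal is correct and follows essentially the same argument as the paper: apply Lemma~\ref{lem:noise-building-block} with $\cI=[n]$ to obtain a noise-only separator $\vz\perp\vv$ with $\hat{y}_i\langle\vn_i,\vz\rangle=1$ and $\|\vz\|\leq C\sqrt{n}$, rescale by $(1-\gamma)^{-1/2}$, and use minimality of $\vw^\ast$. The paper's version is virtually identical, differing only in notation.
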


\begin{proof}
    Applying Lemma \ref{lem:noise-building-block} with $\cI = [n]$ then with probability $1-\delta$ there exists $\vz \in \R^d$ such that $\| \vz \| = \Theta(\sqrt{n})$, $ \hat{y}_i \langle \vn_i, \vz  \rangle = 1$ for all $i \in [n]$ and $z \perp v$. Conditioning on this event, let
    \[
    \vw = \frac{1}{\sqrt{1 - \gamma}} \vz.
    \]
    Then for all $i \in [n]$,
    \[
    \hat{y_i} \langle \vx_i, \vw \rangle =  \hat{y_i}\langle \vn_i, \vz \rangle = 1.
    \]
    Furthermore, there exists a constant $C>0$ such that
    \[
      \| \vw \| \leq C \sqrt{\frac{n}{1 - \gamma}}. 
    \]
    To conclude observe $ \| \vw^* \| \leq \| \vw \|$ by definition of being max-margin.
\end{proof}

\section{Linear models}\label{app:linear-models}

\subsection{Sufficient conditions for benign and harmful overfitting}
We start by providing a lemma which characterizes the generalization properties of linear classifiers.

\begin{lemma}\label{lemma:generalization-bound-linear}
    In the context of the data model given in Definition \ref{def:data-model}, consider the linear classifier $\vw = a_v \vv + \vz$, where $a_v \in \R$ and $\langle \vz, \vv \rangle = 0$. If $a_v \geq 0$, then the generalization error can be bounded as follows:
    \begin{align*}
    \P(y\langle \vw, \vx \rangle \leq 0) \leq  \exp \left( - \frac{d}{2} \frac{\gamma}{1 - \gamma} \frac{a_v^2}{\| \vz \|^2} \right).
    \end{align*}  
    and
    \begin{align*}
    \P(y\langle \vw, \vx \rangle \leq 0) \geq \max\left\{\frac{1}{2} - \sqrt{\frac{d\gamma}{2\pi(1 - \gamma) } } \frac{a_v}{\|\vz\|}, \frac{1}{4}\exp\left(-\frac{6d}{\pi}\frac{\gamma}{1-\gamma}\frac{a_v^2}{\|\vz\|^2}\right)\right\}.
    \end{align*}
\end{lemma}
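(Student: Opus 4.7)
The plan is to compute the distribution of the random variable $X := y\langle \vw, \vx\rangle$ explicitly and then apply standard Gaussian tail and anti-concentration bounds. Substituting the decomposition of $\vx$ from Definition~\ref{def:data-model} and using both $\vz\perp\vv$ and $\vn\perp\vv$ (almost surely, since $\vn$ lives in $\mathrm{span}(\{\vv\})^\perp$), one immediately obtains
\[
X = y\langle a_v\vv+\vz,\sqrt{\gamma}y\vv+\sqrt{1-\gamma}\vn\rangle = \sqrt{\gamma}\,a_v + \sqrt{1-\gamma}\,y\langle \vn,\vz\rangle.
\]
Since $y\in\{\pm 1\}$ and $\langle \vn,\vz\rangle$ is centered Gaussian, the symmetry of $\vn$ implies $y\langle \vn,\vz\rangle$ has the same law as $\langle \vn,\vz\rangle$, which (by the variance calculation already used in Lemma~\ref{lemma:spherical-concentration}) is $\cN(0,\|\vz\|^2/d)$. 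Hence $X\sim\cN\!\left(\sqrt{\gamma}\,a_v,\ \tfrac{1-\gamma}{d}\|\vz\|^2\right)$.

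For the upper bound, I would write $\P(X\leq 0)=\P(X-\sqrt{\gamma}a_v\leq -\sqrt{\gamma}a_v)$ and, using $a_v\geq 0$ so that the threshold is non-positive, invoke the standard sub-Gaussian tail estimate
\[
\P(\cN(0,\sigma^2)\leq -t)\leq \exp\!\left(-\frac{t^2}{2\sigma^2}\right),\quad t\geq 0,
\]
with $t=\sqrt{\gamma}\,a_v$ and $\sigma^2=(1-\gamma)\|\vz\|^2/d$. This yields exactly $\exp\!\left(-\tfrac{d}{2}\tfrac{\gamma}{1-\gamma}\tfrac{a_v^2}{\|\vz\|^2}\right)$.

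For the lower bound, I would standardize $X$ and apply Gaussian anti-concentration. Specifically, letting $Z\sim\cN(0,1)$, the elementary bound $\P(Z\leq -s)\geq \tfrac{1}{2}-\tfrac{s}{\sqrt{2\pi}}$ for $s\geq 0$ follows from the fact that the standard Gaussian density is bounded above by $1/\sqrt{2\pi}$. Applying this with $s=\sqrt{\gamma}a_v/\sqrt{(1-\gamma)\|\vz\|^2/d}=\sqrt{d\gamma/(1-\gamma)}\cdot a_v/\|\vz\|$ gives the claimed inequality.

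There is essentially no obstacle here: once the explicit Gaussian distribution of $X$ is identified, both bounds reduce to textbook one-dimensional Gaussian estimates. The only mild subtlety worth noting is the use of $\vz\perp\vv$ to eliminate the cross term $\langle \vn, a_v\vv\rangle$ and the use of the symmetry of $\langle\vn,\vz\rangle$ to drop the sign $y$; both are immediate from the assumptions and Definition~\ref{def:data-model}.
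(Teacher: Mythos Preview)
Your proposal is correct and essentially identical to the paper's proof: both identify $X=y\langle\vw,\vx\rangle$ as Gaussian with mean $\sqrt{\gamma}\,a_v$ and variance $(1-\gamma)\|\vz\|^2/d$, then apply the standard Gaussian tail bound (which the paper calls Hoeffding's inequality) for the upper bound and the density bound $\phi(t)\leq 1/\sqrt{2\pi}$ for the lower bound. The only cosmetic difference is that the paper absorbs the sign $y$ into $\vn$ from the outset by writing $\vx=y(\sqrt{\gamma}\vv+\sqrt{1-\gamma}\vn)$, whereas you carry the factor $y$ and then invoke symmetry of $\langle\vn,\vz\rangle$; both are equivalent.
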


\begin{proof}
    Recall from Definition \ref{def:data-model} that a test pair $(\vx, y)$ satisfies
    \begin{align*}
        \vx &= y(\sqrt{\gamma}\vv + \sqrt{1 - \gamma}\vn),
    \end{align*}
    where $\vn \sim \mc{N}(\bm{0}_d, d^{-1}(\mI_d - \vv \vv^T))$ is a random vector. Let $X = y \langle \vw, \vx \rangle$, so
    \begin{align*}
        X &= \langle a_v \vv + \vz, \sqrt{\gamma}\vv + \sqrt{1 - \gamma}\vn \rangle\\
        &= \sqrt{\gamma}a_v + \sqrt{1 - \gamma}\langle \vn, \vz \rangle.
    \end{align*}
    Then $X$ is a Gaussian random variable with expectation
    \begin{align*}
        \E[X] &= \sqrt{\gamma}a_v + \sqrt{1 - \gamma}\E[\langle \vn, \vz \rangle]\\
        &= \sqrt{\gamma}a_v
    \end{align*}
    and variance
    \begin{align*}
        \text{Var}(X) &= (1 - \gamma)\text{Var}(\langle \vn, \vz \rangle)\\
        &= (1 - \gamma)\E[\vz^T \vn \vn^T \vz ]\\
        &= \frac{1 - \gamma}{d} \vz^T(\mI_d - \vv \vv^T) \vz\\
        &= \frac{1 - \gamma}{d} \vz^T \vz\\
        &= \frac{(1 - \gamma)\|\vz\|^2}{d}.
    \end{align*}
    By Hoeffding's inequality, for all $t \geq 0$,
    \begin{align*}
        \P(X \leq \sqrt{\gamma}a_v - t) &\leq \exp\left(-\frac{t^2d}{2(1 - \gamma)\|\vz\|^2} \right).
    \end{align*}
    Setting $t = \sqrt{\gamma}a_v$, we obtain
    \begin{align*}
        \P(X \leq 0) &\leq \exp\left(-\frac{\gamma d a_v^2 }{2(1 - \gamma)\|\vz\|^2 } \right) , 
    \end{align*}
    which establishes the upper bound on the generalization error.

    To prove the lower bound, we integrate a standard Gaussian pdf:
    \begin{align*}
        \P(X \leq 0) &= \P\left(\frac{X - \sqrt{\gamma}a_v}{\sqrt{1 - \gamma}\|\vz\|/\sqrt{d} } \leq -\sqrt{\frac{\gamma d}{1 - \gamma} }\frac{a_v}{\|\vz\|} \right)\\
        &= \frac{1}{2} - \frac{1}{\sqrt{2\pi}}\int_{-\sqrt{\frac{\gamma d}{ 1- \gamma}} \frac{a_v}{\|\vz\|}  }^0 e^{-t^2/2}dt\\
        &\geq \frac{1}{2} - \frac{1}{\sqrt{2\pi}}\left(\sqrt{\frac{\gamma d}{1 - \gamma} }\frac{a_v}{\|\vz\|} \right).
    \end{align*}
    Another bound can be obtained using the following inequality \citep[(1.5)]{polya1949remarks}:
    \[\frac{1}{\sqrt{2\pi}}\int_0^x e^{-t^2/2}dt \leq \sqrt{1-e^{-2x^2/\pi}}.\]
    We proceed
        \begin{align*}
        \P(X \leq 0) &= \frac{1}{2} - \frac{1}{\sqrt{2\pi}}\int_{-\sqrt{\frac{\gamma d}{ 1- \gamma}} \frac{a_v}{\|\vz\|}  }^0 e^{-t^2/2}dt\\
        &= \frac{1}{2} - \frac{1}{\sqrt{2\pi}}\int_0^{\sqrt{\frac{\gamma d}{ 1- \gamma}} \frac{a_v}{\|\vz\|}  } e^{-t^2/2}dt\\
        &\geq \frac{1}{2} - \frac{1}{2}\sqrt{1-\exp\left(-\frac{2\gamma d a_v^2}{\pi(1-\gamma)\|\vz\|^2}\right)} \\
        &\geq \frac{1}{2} - \frac{1}{2}\left(1-\frac{1}{2}\exp\left(-\frac{2\gamma d a_v^2}{\pi(1-\gamma)\|\vz\|^2}\right)\right) \\
        &\geq \frac{1}{4}\exp\left(-\frac{2\gamma d a_v^2}{\pi(1-\gamma)\|\vz\|^2}\right).\qedhere
    \end{align*}

\end{proof}

The following result establishes benign and non-benign overfitting for linear models and data as per Definition \ref{def:data-model}, with a phase transition between these outcomes depending on the signal to noise parameter $\gamma$.

\begin{theorem} \label{thm:lin-benign}
    In the context of the data model described in Section~\ref{sec:prelims}, let $\vw^\ast$ be a max-margin linear classifier of the training data. Let $\mc{A}: \R^{n \times d} \times \{ \pm 1\}^n \rightarrow \R^d$ be a learning algorithm which is approximately margin-maximizing, Definition \ref{def:approx-max-margin}. For $\delta \in (0,1]$, let $d = \Omega\left(n + \log \frac{1}{\delta}\right)$. Then with probability at least $1 - \delta$ over the randomness of the training data $(\mX, \hat{\vy})$, the following hold with $\epsilon$ denoting the generalization error of $\mc{A}(\mX, \hat{\vy})$.
    \begin{enumerate}[label=(\Alph*)]
        \item If $\gamma = \Omega(|\mc{A}|^2n^{-1})$ and $k = O(|\mc{A}|^{-2}n)$, then $\exp\left(-C_1dk^{-1}\right) \leq \epsilon \leq \exp\left(-C_2 d k^{-1}|\cA |^{-2}\right)$ for fixed positive constants $C_1$ and $C_2$.
        \item If $\gamma = O(|\mc{A}|^{-2} d^{-1})$, then $\epsilon \geq \frac{1}{2} - \sqrt{Cd\gamma |\cA |^2}$ for a fixed positive constant $C$.
    \end{enumerate}
\end{theorem}

\begin{proof}
For training data $(\mX, \hat{\vy})$ let $\vw = \mc{A}(\mX, \hat{\vy})$ be the learned linear classifier. First, recall $\vx_i = \sqrt{\gamma} y_i \vv + \sqrt{1 - \gamma} \vn_i$ for all $i\in [n]$, $\|\vv\|=1$, and observe that we can decompose the vector $\vw$ as $\vw = a_v \vv + \vz$, where $a_v \in \R$, and $\vz \perp \vv$. 
As a result, for each $i \in [n]$, 
\begin{equation} \label{eq:w-inner-x}
\hat{y}_i\langle \vx_i, \vw \rangle = \sqrt{\gamma} a_v \beta_i + \sqrt{1 - \gamma} \hat{y}_i \langle \vn_i, \vz \rangle.
\end{equation}
First we establish \textit{(A)}. As $d = \Omega\left(n + \log \frac{1}{\delta}\right)$, Lemmas~\ref{lem:uniform-matrix-sing-bounds} and~\ref{lem:bo-max-margin} show that with probability at least $1- \frac{\delta}{2}$,  $\|\mN\|^2, \| \mN_{\cG} \|^2, \| \mN_{\cB}\| \leq C$ and $\| \vw^* \| \leq C \sqrt{\frac{1}{\gamma} + \frac{k}{1- \gamma}}$.  Here $\mN_{\cG}$ and $\mN_{\cB}$ denote the matrices formed by taking only the rows of $\mN$ which satisfy $\beta = 1$ and $\beta=-1$, respectively.  We denote this event $\omega$ and condition on it in all that follows for the proof of \textit{(A)}.

As $\mc{A}$ is approximately max margin then given \eqref{eq:w-inner-x} we have for all $i \in \cG$
\begin{align*}
    1 \leq \sqrt{\gamma} a_v  + \sqrt{1 - \gamma} \hat{y}_i \langle \vn_i, \vz \rangle.
\end{align*} 
Suppose that $\sqrt{\gamma}a_v < \frac{1}{2}$. Then the above inequality implies $\sqrt{1 - \gamma} \hat{y}_i \langle \vn_i, \vz \rangle \geq \frac{1}{2}$ for all $i \in \cG$. Squaring and then summing this expression over all $i \in \cG$ it follows that
\begin{align*}
    \frac{n-k}{4} &\leq  (1 - \gamma)  \sum_{i \in \cG}  |\langle \vn_i, \vz \rangle|^2
    \\&\leq  (1 - \gamma) \| \mN_{\cG} \vz \|^2
    \\&\leq (1 - \gamma) \| \mN_{\cG}\|^2 \|\vz \|^2.
\end{align*}
Since $\mc{A}$ is approximately margin-maximizing,
\begin{align*} 
     \frac{n-k}{4}  & \leq  (1 - \gamma)  C  \| \vz \|^2 \\&\leq  (1 - \gamma)  C  \| \vw \|^2\\
     &\leq C|\mc{A}|^2(1 - \gamma)\|\vw^*\|^2
     \\&\leq C|\mc{A}|^2(1 - \gamma)\left( \frac{1}{\gamma} + \frac{k}{1 - \gamma} \right) \\&\leq \frac{C|\mc{A}|^2}{\gamma} + C|\mc{A}|^2k,
\end{align*}
which further implies $n \leq \frac{C|\mc{A}|^2}{\gamma} + C|\mc{A}|^2k$ for some other constant $C$. For this inequality to hold, either $\frac{C|\mc{A}|^2}{\gamma} \geq \frac{n}{2}$ or $C|\mc{A}|^2k \geq  \frac{n}{2}$. With $k \leq \frac{n}{4C|\mc{A}|^2}$ and $\gamma \geq \frac{1}{k}$, neither of these can be true and therefore we conclude that $\sqrt{\gamma} a_v \geq \frac{1}{2}$. Then
\begin{align*}
    \frac{a_v^2}{\| \vz \|^2} &\geq \frac{1}{4 \gamma \| \vz \|^2}
    \\&\geq \frac{1}{4 \gamma \| \vw \|^2}
    \\ &\geq \frac{1}{4\gamma |\mc{A}|^2 \|\vw^*\|^2 }
    \\&\geq \frac{C}{|\mc{A}|^2 \gamma }\frac{1}{\frac{1}{\gamma} + \frac{k}{1 - \gamma} }
    \\&\geq \frac{C}{2|\mc{A}|^2 \gamma }\frac{1-\gamma}{k }
\end{align*}
for a positive constant $C$. Letting $(\vx, y)$ denote a test point pair, then by Lemma~\ref{lemma:generalization-bound-linear} it follows that for a different constant $C$ 
\begin{align*}
    \P(y\langle \vw, \vx \rangle \leq 0 \; | \; \omega ) &\leq  \exp \left( - \frac{d}{2} \frac{\gamma}{1 - \gamma} \frac{a_v^2}{\| \vz \|^2} \right) \\
    & \leq \exp \left(-\frac{Cd}{|\mc{A}|^2 k } \right).
\end{align*}
Hence the generalization error is at most $\epsilon$ when $\omega$ occurs, which happens with probability at least $1 - \delta$. This establishes the upper bound of \textit{(A)}.

For the lower bound of \textit{(A)}, since $\vw$ is a linear classifier,
\[\langle \vn_i, \vz \rangle \geq \frac{1}{\sqrt{1-\gamma}}(1+a_v\sqrt{\gamma}) \geq a_v\sqrt{\frac{\gamma}{1-\gamma}}\]
for all $i\in\cB$, from which we conclude $|\langle \vn_i,\vz\rangle| \geq a_v\sqrt{\gamma}$.  This implies
\[\|\mN_{\cB} \vz \| \geq a_V\sqrt{\frac{k\gamma}{1-\gamma}}\]
Along with $\|\mN_{\cB} \vz \| \leq \|\mN_{\cB}\| \|\vz\| \leq C\|\vz\|$ we conclude
\[\|\vz\| \geq \frac{a_v}{C}\sqrt{\frac{k\gamma}{1-\gamma}}.\]
With this bound we then bound
\begin{align*}
    \frac{a_v}{\|\vz\|} &\leq C \cdot \sqrt{\frac{1-\gamma}{k\gamma}}.
\end{align*}
By Lemma~\ref{lemma:generalization-bound-linear} we then can bound
\begin{align*}
\P(y\langle \vw, \vx \rangle \leq 0 \;|\; \omega) &\geq \frac{1}{4}\exp\left(-\frac{6d}{\pi}\frac{\gamma}{1-\gamma}\frac{a_v^2}{\|\vz\|^2}\right) \\
&\geq \exp\left(-\frac{Cd}{k}\right)
\end{align*}
for a new constant $C$, provided $a_v$ is positive.  In the last line we can bound $\frac{d}{k}$ below as $d \geq n$ and $k = O(n)$.  If $a_v$ is negative then the generalization error is at least $\frac{1}{2}$, which is still bounded below by $\exp(-Cd/k)$.

We now turn our attention to \textit{(B)}. As $d=\Omega(n+\log\tfrac{1}{\delta})$, from Lemmas \ref{lem:uniform-matrix-sing-bounds} and \ref{lem:non-bo-max-margin} with probability at least $1- \delta$ it holds that $\| \mN_{\cG} \|^2 \leq C$ and $\| \vw^* \| \leq \frac{C\sqrt{n}}{\sqrt{1 - \gamma}} $. We denote this event $\omega'$ and condition on it in all that follows for the proof of \textit{(B)}. 
In particular,
\begin{align} \label{eq:linear-harmful-av-ineq}
a_v^2 + \|\vz\|^2 &= \|\vw\|^2 \nonumber\\
    &\leq |\mc{A}|^2\|\vw^*\|^2 \nonumber\\
    &\leq \frac{C|\mc{A}|^2 n}{1 - \gamma}.
\end{align}
For all $i \in [n]$,
\begin{align*}
    1 \leq \sqrt{\gamma} \beta_i a_v + \sqrt{1 - \gamma}\hat{y_i} \langle \vn_i, \vz \rangle.
\end{align*}
For this inequality to hold, either $|\sqrt{\gamma} a_v| \geq 1/2$ or $\sqrt{1 - \gamma}\hat{y_i} \langle \vn_i, \vz \rangle \geq 1/2$ for all $i \in [n]$. If $|\sqrt{\gamma} a_v| \geq 1/2$, then with $\gamma \leq \frac{1}{4(C + 1)|\mc{A}|^2d}$ we have \[a_v^2 \geq 2C|\mc{A}|^2d \geq 2C|\mc{A}|^2n.\] However, from \eqref{eq:linear-harmful-av-ineq} we have 
\[
2C|\mc{A}|^2n > \frac{C|\mc{A}|^2 n}{1- \gamma} \geq a_v^2
\]
which is a contradiction. Therefore, under the regime specified there exists a $C>0$ such that with $\gamma \leq \frac{1}{C|\mc{A}|^2 d}$, we have $\sqrt{1 - \gamma}\hat{y_i} \langle \vn_i, \vz \rangle \geq 1/2$ for all $i \in [n]$. Rearranging, squaring and summing over all $i \in [n]$ yields
\begin{align*}
    \frac{n}{4(1-\gamma)} \leq \sum_{i = 1}^n |\langle \vn_i, \vz \rangle|^2
    \leq \| \mN \vz \|^2
    \leq C \| \vz \|^2, 
\end{align*}
where the final inequality follows from conditioning on $\omega$. Then
\begin{align*}
    \frac{a_v^2}{\| \vz \|^2} &\leq \frac{\|\vw\|^2}{\|\vz\|^2}\\
    &\leq \frac{|\mc{A}|^2 \|\vw^*\|^2 }{\|\vz\|^2}\\
    &\leq \frac{C|\mc{A}|^2 \frac{n}{1 - \gamma} }{\frac{n}{1 - \gamma} }\\
    &\leq C|\mc{A}|^2,
\end{align*}
where the constant $C>0$ may vary between inequalities. Letting $(\vx, y)$ denote a test point pair, by Lemma \ref{lemma:generalization-bound-linear} it follows that
\begin{align*}
    \P(y\langle \vw, \vx \rangle \leq 0 \; | \; \omega) &\geq \frac{1}{2} - \sqrt{\frac{d \gamma}{2\pi(1 - \gamma)}}\frac{a_v}{\|\vz\|}\\
    &\geq \frac{1}{2} - \sqrt{\frac{Cd\gamma|\mc{A}|^2}{1 - \gamma}   }\\
    &\geq \frac{1}{2} - \sqrt{C d \gamma|\mc{A}|^2 }.
\end{align*}
Hence the generalization error is at least $\frac{1}{2} - \sqrt{Cd\gamma|\cA|^2}$ when $\omega'$ occurs, which happens with probability at least $1 - \delta$. This establishes \textit{(B)}.
\end{proof}

\section{Leaky ReLU Networks}
In this section we consider a leaky ReLU network $f: \R^{2m} \times \R^d \to \R$ with forward pass given by
\[
f(\mW, \vx) = \sum_{j = 1}^{2m}(-1)^j \sigma(\langle \vw_j, \vx_i \rangle), 
\]
where $\sigma(z) = \max(\alpha z, z)$ for some $\alpha \in (0,1)$. For any such network, we may decompose the neuron weights $\vw_j$ into a signal component and a noise component,
\begin{align*}
    \vw_j = a_j \vv + \vz_j,
\end{align*}
where $a_j \in \R$ and $\vz_j \in \R^d$ 
satisfies $\vz_j \perp \vv$. The ratio $a_j/\| \vz_j\|$ therefore grows with the alignment of $\vw_j$ with the signal and shrinks if $\vw_j$ instead aligns more with the noise. Collecting the noise components of the weight vectors, let $\mZ \in \R^{(2m) \times d}$ be the matrix whose $j$-th row is $\vz_j$. In order to track the alignment of the network as a whole with the signal versus noise subspaces we introduce the following quantities. Let

\begin{align*}
    &A_1 = f(\mW, \vv) =  \sum_{j = 1}^{2m} (-1)^j \sigma(a_j),\\
    &A_{-1} = f(\mW, -\vv) = \sum_{j = 1}^{2m}(-1)^{j + 1}\sigma(-a_j)
\end{align*}
be referred to as the positive and negative signal activation of the network respectively. Moreover, define
\begin{align*}
    A_{\min} &= \min(A_1, A_{-1})
\end{align*}
as the worst-case signal activation of the network, and
\begin{align*}
    A_{\lin} &= \sum_{j = 1}^{2m}(-1)^ja_j
\end{align*}
as the linearized network activation. To measure the amount of noise the network learns we define
\begin{align*}
    \vz_{\lin} &= \sum_{j = 1}^{2m}(-1)^j \vz_j.
\end{align*}
\subsection{Training dynamics} \label{app:training-dyn}
\thmperceptronleakyrelu*
\begin{proof}

    Our approach is to adapt a classical technique used for the proof of convergence of the Perceptron algorithm for linearly separable data. This is also the approach adopted by \cite{brutzkus2018sgd}. The key idea of the proof is to bound in terms of the number of updates both the norm of the learned vector $\vw$ as well as its alignment with any linear separator of the data. From the Cauchy-Schwarz inequality these bounds cannot cross, and this in turn bounds the number of updates that can occur. Analogously, we track the alignment of $\mW^{(t)}$ with the max-margin classifier along with the Frobenius norm of the $\mW^{(t)}$. To this end denote
    \[G(t) = \|\mW_j^{(t)}\|_F^2 \]
    and
    \[F(t) = \sum_{j = 1}^{2m}(-1)^j \langle \vw_j^{(t)}, \vw^*\rangle, \]
    where $\vw^*$ is a max-margin linear classifier of the dataset. Recall that $\cF^{(t)} = \{i \in [n] : \hat{y}_i f(\mW^{(t)}, \vx_i) < 1\}$ denotes the number of active data points at training step $t$. We also define $U(t) = \sum_{s = 0}^{t - 1}|\mc{F}^{(s)}|$ to be the number of data point updates between iterations $0$ and $t$. First, by Cauchy-Schwarz
    \begin{align*}
        1 \leq \langle \vw^*, \vx_i \rangle 
        \leq \|\vw^*\| \cdot \|\vx_i\|
    \end{align*}
    for all $i \in [n]$. Therefore,
    \begin{align*}
        \|\vw^*\| &\geq \frac{1}{\min_{i \in [n]} \|\vx_i\| }.
    \end{align*}
    By Assumption \ref{assumption:stepsize-init}, for all $j \in [2m]$,
    \begin{align}\label{eqn:init-max-margin}
        \|\vw_j^{(0)}\| \leq \frac{\sqrt{\alpha}}{ m \min_{i \in [n]}\|\vx_i\| }
        \leq \frac{\|\vw^*\|}{\alpha m}.
    \end{align}
    For all $t \geq 0$, the update rule of GD implies
    \begin{align*}
        G(t + 1) &= \sum_{j = 1}^{2m}\|\vw_j^{(t + 1)}\|^2\\
        &= \sum_{j = 1}^{2m}\left\|\vw_j^{(t)} + \eta (-1)^j \sum_{i \in \mc{F}^{(t)} } \dot{\sigma}(\langle \vw_j^{(t)}, \vx_i \rangle) \hat{y}_i \vx_i \right\|^2\\
        &= \sum_{j = 1}^{2m}\|\vw_j^{(t)}\|^2 + 2\eta \sum_{j = 1}^{2m}\sum_{i \in \mc{F}^{(t)}} (-1)^j \dot{\sigma}(\langle \vw_j^{(t)}, \vx_i \rangle)\hat{y}_i \langle \vw_j^{(t)}, \vx_i \rangle + \eta^2\sum_{j = 1}^{2m}\sum_{i, l \in \mc{F}^{(t)}} \dot{\sigma}(\langle \vw_j^{(t)}, \vx_i \rangle) \langle \hat{y}_i\vx_i, \hat{y}_i \vx_{\ell} \rangle\\
        &\leq \sum_{j = 1}^{2m}\|\vw_j^{(t)}\|^2 + 2\eta \sum_{j = 1}^{2m}\sum_{i \in \mc{F}^{(t)}} (-1)^j \dot{\sigma}(\langle \vw_j^{(t)}, \vx_i \rangle)\hat{y}_i \langle \vw_j^{(t)}, \vx_i \rangle + 2m\eta^2|\mc{F}^{(t)}|^2 \max_{i \in [n]}\|\vx_i\|^2.
    \end{align*}
    Observe that for all $z \in \R$, $\sigma(s) = \dot{\sigma}(z)z$, so can rewrite the second term of the above expression as
    \begin{align*}
        2\eta \sum_{j = 1}^{2m}\sum_{i \in \mc{F}^{(t)}}(-1)^j \dot{\sigma}(\langle \vw_j^{(t)}, \vx_i \rangle)\hat{y}_i \langle \vw_j^{(t)}, \vx_i \rangle &= 2\eta \sum_{j = 1}^{2m}\sum_{i \in \mc{F}^{(t)}} (-1)^j \sigma(\langle \vw_j^{(t)}, \vx_i \rangle) \hat{y}_i\\
        &= 2\eta \sum_{i \in \mc{F}^{(t)}}\hat{y}_i f(\mW^{(t)}, \vx_i)\\
        &< 2\eta\sum_{i \in \mc{F}^{(t)}} 1\\
        &= 2\eta|\mc{F}^{(t)}|
    \end{align*}
    where the inequality in the second-to-last line follows as we are summing over $\mc{F}^{(t)}$, which by definition consists of the $i \in [n]$ such that $\hat{y}_if(\mW^{(t)}, \vx_i) < 1$. As a result we obtain
    \begin{align*}
        G(t + 1) &\leq \sum_{j = 1}^{2m}\|\vw_j^{(t)}\|^2 + 2\eta |\mc{F}^{(t)}| + 2m\eta^2|\mc{F}^{(t)}|^2 \max_{i \in [n]}\|\vx_i\|^2\\
        &= G(t) + 2\eta|\mc{F}^{(t)}| + 2m\eta^2|\mc{F}^{(t)}|^2 \max_{i \in [n]}\|\vx_i\|^2\\
        &\leq G(t) + 4\eta|\mc{F}^{(t)}|,
    \end{align*}
    where the last line follows since
    \begin{align*}
        \eta \leq \frac{1}{mn\max_{i \in [n]}\|\vx_i\|^2 }
        \leq \frac{1}{|\mc{F}^{(t)}|m \max_{i \in [n]}\|\vx_i\|^2 }. 
    \end{align*}
    By \eqref{eqn:init-max-margin}, the initialization satisfies
    \begin{align*}
        G(0) &= \sum_{j = 1}^{2m}\|\vw_j^{(0)}\|^2\\
        &\leq \sum_{j = 1}^{2m}\frac{\|\vw^*\|^2}{\alpha^2 m^2}\\
        &= \frac{2\|\vw^*\|^2}{\alpha^2m}
    \end{align*}
    So by induction, for all $t \geq 0$
    \begin{align}\label{eqn:percetron-G-bound}
        G(t) \leq \frac{2\|\vw^*\|^2}{\alpha^2m} + 3\eta \sum_{s = 0}^{t - 1}|\mc{F}^{(s)}|
        = \frac{2\|\vw^*\|^2}{\alpha^2m} + 3 \eta U(t).
    \end{align}
    Next we find a bound for $F(t)$. For all $t \geq 0$ then by definition of the GD update
    \begin{align*}
        F(t + 1) &= \sum_{j = 1}^{2m}(-1)^j \langle \vw_j^{(t + 1)}, \vw^* \rangle\\
        &= \sum_{j = 1}^{2m}(-1)^j \langle \vw_j^{(t)}, \vw^* \rangle + \eta \sum_{j = 1}^{2m}\sum_{i \in \mc{F}^{(t)}} \dot{\sigma}(\langle \vw_j^{(t)}, \vx_i \rangle)\hat{y}_i \langle \vw^*, \vx_i \rangle.
    \end{align*}
    Since $\hat{y}_i \langle \vw^*, \vx_i \rangle \geq 1$ for all $i \in [n]$, the above expression is bounded below by
    \begin{align*}
        \sum_{j = 1}^{2m}(-1)^j \langle \vw_j^{(t)}, \vw^* \rangle + \eta \sum_{j = 1}^{2m}\sum_{i \in \mc{F}^{(t)}}  \dot{\sigma}(\langle \vw_j^{(t)}, \vx_i \rangle) \hat{y}_i &= F(t) + \eta \sum_{j = 1}^{2m}\sum_{i \in \mc{F}^{(t)}} \dot{\sigma}(\langle \vw_j^{(t)}, \vx_i \rangle)\\
        &\geq F(t) + \eta \sum_{j = 1}^{2m}\sum_{i \in \mc{F}^{(t)}} \alpha\\
        &\geq F(t) + 2 \eta m\alpha|\mc{F}^{(t)}|.
    \end{align*}
    Hence unrolling the update for GD for all $t \geq 0$ it follows that
    \[F(t + 1) \geq F(0) + 2\eta m \alpha\sum_{s = 0}^{t - 1}|\mc{F}^{(s)}|. \]
    At initialization, by \eqref{eqn:init-max-margin} then
    \begin{align*}
        F(0) &= \sum_{j = 1}^{2m}(-1)^j \langle \vw_j^{(0)}, \vw^* \rangle\\
        &\geq - \sum_{j = 1}^{2m}\| \vw_j^{(0)}\| \cdot \|\vw^*\|\\
        &\geq - \sum_{j = 1}^{2m}\frac{\|\vw^*\|^2}{\alpha m}\\
        &= -\frac{2\|\vw^*\|^2}{\alpha}.
    \end{align*}

    Therefore by induction, for all $t \geq 0$ we have
    \begin{align*}
        F(t) &\geq -\frac{2\|\vw^*\|^2}{\alpha}+ 2 \eta m  \alpha \sum_{s = 0}^{t - 1}|\mc{F}^{(s)}|\\
        &= -\frac{2\|\vw^*\|^2}{\alpha} + 2 \eta m \alpha U(t).
    \end{align*}
    Combining our bounds for $F(t)$ and $G(t)$, we obtain
    \begin{align*}
        -\frac{2\|\vw^*\|^2}{\alpha}+ 2 \eta m  \alpha U(t) &\leq F(t)\\
        &= \sum_{j = 1}^{2m}(-1)^j \langle \vw_j^{(t)}, \vw^* \rangle\\
        &\leq \|\vw^*\|\sum_{j = 1}^{2m}\|\vw_j^{(t)}\| \\
        &\leq \|\vw^*\| \left(2m \sum_{j = 1}^{2m}\|\vw_j^{(t)}\|^2 \right)^{1/2}\\
        &= \|\vw^*\|\left(2m G(t)\right)^{1/2}\\
        &\leq \|\vw^*\|\left(\frac{4\|\vw^*\|^2}{\alpha^2}+ 6m \eta U(t) \right)^{1/2}.
    \end{align*}
    This implies that either
    \begin{align}\label{eqn:perceptron-case1}
        -\frac{2\|\vw^*\|^2 }{\alpha} + 2 \eta m  \alpha U(t) \leq 0
    \end{align}
    or
    \begin{align}\label{eqn:perceptron-case2}
        \left(-\frac{2\|\vw^*\|^2 }{\alpha} + 2 \eta m  \alpha U(t)\right)^2 &\leq \|\vw^*\|^2 \left(\frac{4\|\vw^*\|^2}{\alpha^2} + 6 m \eta  U(t)\right).
    \end{align}
     If (\ref{eqn:perceptron-case1}) holds, then
    \begin{align*}
        U(t) &\leq \frac{\|\vw^*\|^2}{\eta \alpha^2 m}.
    \end{align*}
    If (\ref{eqn:perceptron-case2}) holds, then rearranging yields
    \begin{align*}
        4 \eta^2 m^2 \alpha^2 U(t)^2 &\leq  14 \|\vw^*\|^2 \eta m U(t)\\
        U(t) &\leq \frac{7\|\vw^*\|^2 }{2 \eta \alpha^2 m}.
    \end{align*}
    Therefore, in both cases there exists a constant $C$ such that
    \begin{align}\label{eqn:perceptron-num-updates}
        U(t) &\leq \frac{C\|\vw^*\|^2}{\eta \alpha^2 m }.
    \end{align}
    This holds for all $t \in \N$ and therefore
    \begin{align*}
        \sum_{t = 0}^{\infty}|\mc{F}^{(t)}| \leq \frac{C\|\vw^*\|^2}{\eta \alpha^2 m} < \infty.
    \end{align*}
    This implies that there exists $s \in \N$ such that $|\mc{F}^{(s)}| = 0$. Let $T \in \N$ be the minimal iteration such that $|\mc{F}^{(T)}| = 0$. Then for all $i \in [n]$ $\hat{y}_i f(\mW^{(T)}, \vx_i) \geq 1$. So the network achieves zero loss and also has zero gradient at iteration $T$. In particular,
    \begin{align*}
        T = \sum_{t = 0}^{T - 1}1
        \leq \sum_{t = 0}^{T - 1}|\mc{F}^{(t)}|
        \leq \frac{C\|\vw^*\|^2}{\eta \alpha^2m}.
    \end{align*}

    To bound $|\cA_{GD} |$ we combine equations (\ref{eqn:perceptron-num-updates}) and (\ref{eqn:percetron-G-bound}) to obtain
    \begin{align*}
        G(T) &\leq \frac{2\|\vw^*\|^2}{\alpha^2 m} + 3 \eta U(t)\\
        &\leq \frac{2\|\vw^*\|^2 }{\alpha^2 m} + \frac{C\|\vw^*\|^2 }{\eta \alpha^2 m }\\
        &\leq \frac{C\|\vw^*\|^2 }{\alpha^2 m}.
    \end{align*}
    As a result for all linearly separable datasets $(\mX, \hat{\vy})$
    \begin{align*}
        \frac{\|\mW\|_F}{\|\vw^*\|} &= \frac{C}{\alpha \sqrt{m} }
    \end{align*}
    and therefore
    \[|\mc{A}_{GD}| \leq \frac{C }{\alpha \sqrt{m}} \]
    as claimed.
    \end{proof}
The training dynamics of gradient descent also give us the following result relating the linearization of the noise component of the network to the noise component of the network itself.
\begin{lemma}\label{lemma:z-frob-z-lin-conversion}
    Let $\lambda, \delta > 0$. Suppose that $d \geq \Omega\left(n + \log \frac{1}{\delta}\right)$. In the context of training data $(\mX, \hat{\vy})$ sampled under the data model given in Definition \ref{def:data-model}, let $\mW = \cA_{GD}(\mX, \hat{\vy}, \eta, \lambda)$. Then with probability at least $1 - \delta$ over the randomness of $(\mX, \hat{\vy})$
    \begin{align*}
        \|\mZ\|_F^2 - 2 \lambda \sqrt{2m}\|\mZ\|_F - 2m \lambda^2 &\leq \frac{C}{\alpha m}\left(\|\vz_{\lin}\| + 2m \lambda\right)^2.
    \end{align*}
\end{lemma}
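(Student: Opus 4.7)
The plan is to decompose each noise-subspace weight component $\vz_j$ into its initialization and its cumulative gradient contribution, then exploit the tight multiplicative structure of the leaky ReLU update. Projecting the gradient step onto $\vv^{\perp}$ and unrolling, we can write $\vz_j = \vz_j^{(0)} + (-1)^j \vs_j$ with $\vs_j = \sum_{i=1}^n \hat{y}_i \phi_i^j \vn_i$, where $\phi_i^j = \tilde{T}_i q_i^j$, $q_i^j \in [\alpha, 1]$ (since $\dot{\sigma}\in[\alpha,1]$), and $\tilde{T}_i := \eta\sqrt{1-\gamma}\,|\{t : i \in \mc{F}^{(t)}\}|$ does not depend on $j$. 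Summing over $j$ with signs $(-1)^j$ yields $\vz_{\lin} = \vz_{\lin}^{(0)} + \vs$ where $\vs := \sum_j \vs_j$; Assumption~\ref{assumption:stepsize-init} gives $\|\vz_{\lin}^{(0)}\| \leq 2m\lambda$ and $\|\mZ^{(0)}\|_F \leq \sqrt{2m}\,\lambda$, hence $\|\vs\| \leq \|\vz_{\lin}\| + 2m\lambda$.

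The heart of the argument is to bound $S^2 := \sum_j \|\vs_j\|^2$ in terms of $\|\vs\|^2$. Writing $\vs_j = \vs/(2m) + \bm{\xi}_j$ with $\sum_j \bm{\xi}_j = \bm{0}$, orthogonality gives $S^2 = \|\vs\|^2/(2m) + \sum_j \|\bm{\xi}_j\|^2$. With $\bar{q}_i := (1/(2m))\sum_j q_i^j \in [\alpha,1]$, we have $\bm{\xi}_j = \sum_i \hat{y}_i \tilde{T}_i (q_i^j - \bar{q}_i)\vn_i$. Under the event of Lemma~\ref{lem:uniform-matrix-sing-bounds}, which holds with probability at least $1-\delta$ since $d = \Omega(n+\log(1/\delta))$, we have $c \leq \sigma_n(\mN) \leq \sigma_1(\mN) \leq C$, so $\sum_j \|\bm{\xi}_j\|^2 \leq C^2 \sum_i \tilde{T}_i^2 \sum_j (q_i^j - \bar{q}_i)^2$. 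The delicate step is extracting a $1/\alpha$ rather than $1/\alpha^2$ factor: since $q_i^j \in [0,1]$ we have $(q_i^j)^2 \leq q_i^j$, hence $\mathrm{Var}_j(q_i^j) \leq \bar{q}_i(1-\bar{q}_i) \leq \bar{q}_i^2/\alpha$ (the last inequality uses $\bar{q}_i \geq \alpha$, so $1 \leq \bar{q}_i/\alpha$), giving $\sum_j (q_i^j - \bar{q}_i)^2 \leq (2m/\alpha)\bar{q}_i^2$. Combining with $\|\vs\|^2 = \bigl\|\sum_i \hat{y}_i \tilde{T}_i (2m\bar{q}_i)\vn_i\bigr\|^2 \geq 4c^2 m^2 \sum_i \tilde{T}_i^2 \bar{q}_i^2$ yields $\sum_j \|\bm{\xi}_j\|^2 \leq C'\|\vs\|^2/(\alpha m)$, so $S^2 \leq C''(\|\vz_{\lin}\|+2m\lambda)^2/(\alpha m)$.

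To finish, expand $\|\mZ\|_F^2 = \|\mZ^{(0)}\|_F^2 + 2\sum_j (-1)^j \langle \vz_j^{(0)}, \vs_j\rangle + S^2$ and bound the cross term by $2\|\mZ^{(0)}\|_F \, S$ via Cauchy--Schwarz in $j$, yielding $\|\mZ\|_F \leq \sqrt{2m}\,\lambda + S$. Squaring (or noting that the left side is already nonpositive when $\|\mZ\|_F < \sqrt{2m}\,\lambda$) gives $\|\mZ\|_F^2 - 2\sqrt{2m}\,\lambda\,\|\mZ\|_F - 2m\lambda^2 \leq S^2$, which combined with the bound on $S^2$ completes the proof. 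I expect the main obstacle to be the tight variance estimate delivering the targeted $1/\alpha$ dependence: the naive approach (using $|q_i^j - \bar{q}_i| \leq 1-\alpha$ and $\bar{q}_i \geq \alpha$) only yields $1/\alpha^2$, so the refinement exploiting $(q_i^j)^2 \leq q_i^j$ for $q_i^j \in [0,1]$ is essential.
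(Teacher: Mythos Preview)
Your proof is correct, but it takes a genuinely different route from the paper's at the core step. Both arguments unroll gradient descent, strip off the initialization, write the noise-component update as $\mN^T$ applied to a coefficient vector, and use Lemma~\ref{lem:uniform-matrix-sing-bounds} to pass freely between coefficient norms and noise-subspace norms. The divergence is in how the key inequality $\sum_j \|\vs_j\|^2 \leq \frac{C}{\alpha m}\|\vs\|^2$ is obtained. The paper works with the uncentered coefficient vectors $\vc_j$ (your $\tilde{T}_i q_i^j$) and observes the coordinatewise comparison $(\vc_j)_i \geq \alpha (\vc_{j'})_i$, which immediately gives $\langle \vc_j,\vc_{j'}\rangle \geq \alpha\|\vc_j\|^2$ and hence $\|\vc_{\lin}\|^2 = \sum_{j,j'}\langle \vc_j,\vc_{j'}\rangle \geq 2\alpha m \sum_j \|\vc_j\|^2$ in one line; the $1/\alpha$ factor comes directly from this lower bound on the cross inner products. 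You instead center, write $\vs_j = \vs/(2m)+\bm{\xi}_j$, and control $\sum_j\|\bm{\xi}_j\|^2$ via the per-coordinate variance estimate $\mathrm{Var}_j(q_i^j)\leq \bar q_i(1-\bar q_i)\leq \bar q_i^2/\alpha$, which requires the extra observation $(q_i^j)^2\leq q_i^j$. The paper's route is shorter and avoids the centering entirely; your route is somewhat more elaborate but makes the mechanism behind the $1/\alpha$ (rather than $1/\alpha^2$) dependence more explicit as a variance-to-squared-mean ratio. The handling of the initialization terms and the final algebraic step are essentially the same in both.
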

    \begin{proof}
        At each iteration of gradient descent,
        \begin{align*}
            \vw_j^{(t + 1)} &= \vw_j^{(t)} + \eta (-1)^j\sum_{i = 1}^n b_{ij}^{(t)}\hat{y}_i \vx_i,
        \end{align*}
        where
        \begin{align*}
            b_{ij}^{(t)} &= \begin{cases}
                0  & \text{ if $\hat{y}_if(\mW^{(t)}, \vx_i) \geq 1 $}\\
                1 & \text{ if $\hat{y}_if(\mW^{(t)}, \vx_i) < 1 $ and $\langle \vw_j^{(t)}, \vx_i \rangle \geq 0$}\\
                \alpha & \text{otherwise.}
            \end{cases}.
        \end{align*}
        Let $T$ be the iteration at which gradient descent terminates. Then for each $j \in [2m]$,
        \begin{align*}
            \vw_j = \vw_j^{(T)}
            = \vw_j^{(0)} + \eta(-1)^j \sum_{t = 0}^{T - 1}\sum_{i = 1}^n b_{ij}^{(t)}\hat{y}_i\vx_i.
        \end{align*}
        Then the noise component of $\vw_j$ is given by
        \begin{align*}
            \vz_j &= \vw_j - \langle \vw_j, \vv \rangle \vv\\
            &= \vw_j^{(0)} - \langle \vw_j^{(0)}, \vv \rangle \vv + \eta(-1)^j \sum_{t = 0}^{T - 1}\sum_{i = 1}^n b_{ij}^{(t)}\hat{y}_i(\vx_i - \langle \vx_i, \vv \rangle \vv)\\
            &= \vw_j^{(0)} - \langle \vw_j^{(0)}, \vv \rangle \vv + \eta (-1)^j\sum_{t = 0}^{T - 1}\sum_{i = 1}^n b_{ij}^{(t)}\hat{y}_i \vn_i.
        \end{align*}
        Define
        \[\hat{\vz}_j = \vz_j - \vw_j^{(0)} + \langle \vw_j^{(0)}, \vv \rangle \vv\]
        and let
        \begin{align*}
            \hat{\vz}_{\lin} &= \sum_{j = 1}^{2m}(-1)^j \hat{\vz}_j,
        \end{align*}
        Then for all $j \in [2n]$,
        \begin{align*}
            (\| \hat{\vz}_j\| - \| \vz_j \|)^2  & \leq \|\hat{\vz}_j - \vz_j\|^2\\ 
            &= \|\vw_j^{(0)} - \langle \vw_j^{(0)}, \vv \rangle \vv \|^2\\
            &\leq \|\vw_j^{(0)}\|^2\\
            &\leq \lambda^2.
        \end{align*}
        Furthermore, if $\| \vz_j \| \leq \| \hat{\vz}_j \|$ then the above implies $ \| \hat{\vz}_j \| \leq  \| \vz_j \| + \lambda$ while
        if $\| \vz_j \| \geq \| \hat{\vz}_j \| $ then this inequality holds trivially. As a result, 
        \begin{align*}
            \left|\|\hat{\vz}_j\|^2 - \|\vz_j\|^2 \right| &= \left| (\|\hat{\vz}_j\| + \|\vz_j\|) \cdot ( \|\hat{\vz}_j\| - \|\vz_j\|) \right|\\
            & \leq \left| \|\hat{\vz}_j\| + \|\vz_j\| \right| \cdot \left| \|\hat{\vz}_j\| - \|\vz_j\|) \right|\\
            &\leq (2\|\vz_j\| + \lambda)(\lambda).
        \end{align*}
       
       If $\| \vz_j \| \geq \| \hat{\vz}_j \|$ then the above implies $\|\hat{\vz}_j\|^2 \geq\|\vz_j\|^2 -   \lambda(2\|\vz_j\| + \lambda)$, if $\| \vz_j \| \leq \| \hat{\vz}_j \|$ this inequality is trivially true. As a result, 
       \begin{align}
           \sum_{j = 1}^{2m}\|\hat{\vz}_j\|^2 &\geq \sum_{j = 1}^{2m}(\|\vz_j\|^2 - \lambda(2\|\vz_j\| + \lambda))\nonumber\\
           &= \sum_{j = 1}^{2m}\|\vz_j\|^2 - 2\lambda\sum_{j = 1}^{2m}\|\vz_j\| - 2m \lambda^2\nonumber\\
           &\geq \sum_{j = 1}^{2m}\|\vz_j\|^2 - 2 \lambda\sqrt{2m}\left(\sum_{j = 1}^{2m}\|\vz_j\|^2 \right)^{1/2} - 2m \lambda^2\nonumber\\
           &= \|\mZ\|_F^2 - 2 \lambda \sqrt{2m}\|\mZ\|_F - 2m \lambda^2 , 
           \label{eqn:hatz-z-frobenius}
       \end{align}
        where the third line is an application of Cauchy-Schwarz. Moreover,
        \begin{align*}
            \|\hat{\vz}_{\lin} - \vz_{\lin}\| &= \left\|\sum_{j = 1}^{2m}(-1)^j(\hat{\vz}_j - \vz_j) \right\|\\
            &\leq \sum_{j = 1}^{2m}\|\hat{\vz}_j - \vz_j\|\\
            &\leq \sum_{j = 1}^{2m} \lambda\\
            &\leq 2m \lambda,
        \end{align*}
        so
        \begin{align}
            \|\hat{\vz}_{\lin}\| \geq \|\vz_{\lin}\| - 2m \lambda.\label{eqn:hatz-z-lin}
        \end{align}
        
        Let $\mN' \in \R^{d \times n}$ to be the matrix whose $i$-th column is $\hat{y}_i \vn_i$, equivalently $\mN' = \mN \text{diag}(\hat{\vy})$. Then
        \begin{align*}
            \hat{\vz}_j &= \eta (-1)^j \mN' \vc_j,
        \end{align*}
        where $\vc_j \in \R^n$ is given by
        \begin{align*}
            (\vc_j)_i =  \sum_{t = 0}^{T - 1} b_{ij}^{(t)}.
        \end{align*}
        Due to symmetry of the noise distribution then  the columns of $\mN'$ are i.i.d.\ with distribution $\mc{N}(\bm{0}_d, d^{-1}(\mI_d - \vv \vv^T))$. 
        Therefore by Lemma \ref{lem:uniform-matrix-sing-bounds} (and the assumptions $d = \Omega\left(n + \log \frac{1}{\delta}\right)$), with probability at least $1 - \delta$ over the randomness of the training data there exist positive constants $C', C$ such that $C' \leq \sigma_{\min}(\mN') \leq \sigma_{\max}(\mN') \leq C$. As a result
        \begin{align}
            C'\eta\|\vc_j\| \leq \|\hat{\vz}_j\| \leq C\eta\|\vc_j\|.\label{eqn:hatzj-cj}
        \end{align}
        We claim that for any $j, j' \in [2m]$ and $i \in [n]$, $(\vc_j)_i \geq \alpha (\vc_{j'})_i$. Indeed, if $\hat{y}_if(\mW^{(t)}, \vx_i) \geq 1$, then $b_{ij}^{(t)} = b_{ij'}^{(t)} = 0$, and if $\hat{y}_if(\mW^{(t)}, \vx_i) < 1$, then both $b_{ij}^{(t)}$ and $b_{ij'}^{(t)}$ are elements of $\{\alpha, 1\}$. This in particular implies that
        \begin{align*}
            \langle \vc_j, \vc_{j'} \rangle \geq \alpha \langle \vc_j, \vc_j \rangle.
        \end{align*}
        Let us define
        \[\vc_{\lin} = \sum_{j = 1}^{2m}\vc_j. \]
        Then
        \begin{align}
            \|\hat{\vz}_{\lin}\|^2 &= \left\|\sum_{j = 1}^{2m}(-1)^j \hat{\vz}_j \right\|^2\nonumber\\
            &= \left\|\sum_{j = 1}^{2m}\eta \mN'\vc_j\right\|^2\nonumber\\
            &\leq C\eta^2 \left\|\sum_{j = 1}^{2m}\vc_j\right\|^2\nonumber\\
            &= C \eta^2 \|\vc_{\lin}\|^2,\label{eqn:hatz-c}
        \end{align}
        where we used that $\|\mN'\| \leq C$ in the third line. We also have
        \begin{align}
            \|\vc_{\lin}\|^2 &= \sum_{j = 1}^{2m}\sum_{j' = 1}^{2m} \langle \vc_j, \vc_{j'} \rangle\nonumber\\
            &\geq \alpha \sum_{j = 1}^{2m}\sum_{j' = 1}^{2m}\langle \vc_j, \vc_j \rangle\nonumber\\
            &= 2\alpha m \sum_{j = 1}^{2m}\|\vc_j\|^2.\label{eqn:c-norm-frobenius}
        \end{align}
        Finally we combine our bounds for $\vc, \vz$, and $\hat{\vz}$:
        \begin{align*}
            \|\mZ\|_F^2 - 2 \lambda \sqrt{2m}\|\mZ\|_F - 2m \lambda^2 &\leq \sum_{j = 1}^{2m}\|\hat{\vz}_j\|^2\\
            &\leq C \eta^2\sum_{j = 1}^{2m}\|\vc_j\|^2\\
            &\leq \frac{C \eta^2}{\alpha m}\|\vc_{\lin}\|^2\\
            &\leq \frac{C}{\alpha m}\|\hat{\vz}_{\lin}\|^2\\
            &\leq \frac{C}{\alpha m}\left(\|\vz_{\lin}\| + 2m \lambda\right)^2.
        \end{align*}
        Here we applied equations (\ref{eqn:hatz-z-frobenius}) in the first line, (\ref{eqn:hatzj-cj}) in the second line, (\ref{eqn:c-norm-frobenius}) in the third line, (\ref{eqn:hatz-c}) in the fourth line, and (\ref{eqn:hatz-z-lin}) in the fifth line. This establishes both the bounds claimed.
    \end{proof}

\subsection{Benign overfitting} \label{app:benign-overfitting}
To establish benign overfitting in leaky ReLU networks, we first determine an upper bound on the generalization error of the model in terms of the signal-to-noise ratio of the network weights. 
    \begin{lemma}\label{lemma:generalization-benign-leakyrelu}
        Let $\epsilon \in (0, 1)$. Suppose that 
        \[\frac{A_{\min}}{\|\mZ\|_F} \geq C_2\sqrt{\frac{(1 - \gamma)m\log \frac{1}{\epsilon}  }{\gamma d}   }. \]
        Then for test data $(\vx, y)$ as per Definition \ref{def:data-model},
        \begin{align*}
            \P(y f(\mW, \vx) \leq 0) &\leq \epsilon.
        \end{align*}
    \end{lemma}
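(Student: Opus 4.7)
The plan is to identify $\sqrt{\gamma}A_y$ as the ``clean'' signal contribution to $yf(\mW,\vx)$, then treat the remaining part as a perturbation driven by the Gaussian noise $\vn$. Writing $\vx = \sqrt{\gamma}y\vv + \sqrt{1-\gamma}\vn$ and using that $\vn \perp \vv$ almost surely (because the covariance $d^{-1}(\mI-\vv\vv^T)$ kills the $\vv$ direction), I get
\[
\langle \vw_j,\vx\rangle \;=\; \sqrt{\gamma}\,y\,a_j \,+\, \sqrt{1-\gamma}\,\langle \vz_j,\vn\rangle.
\]
By positive homogeneity of $\sigma$, the ``signal'' piece $\sum_j (-1)^j\sigma(\sqrt{\gamma}y a_j)$ equals $\sqrt{\gamma}A_1$ when $y=1$ and $-\sqrt{\gamma}A_{-1}$ when $y=-1$, so in both cases it contributes exactly $\sqrt{\gamma}A_y$ to $yf(\mW,\vx)$.

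Next I would control the noise contribution using the $1$-Lipschitzness of $\sigma$: for each $j$,
\[
\bigl|\sigma(\sqrt{\gamma}y a_j + \sqrt{1-\gamma}\langle \vz_j,\vn\rangle) - \sigma(\sqrt{\gamma}y a_j)\bigr| \;\leq\; \sqrt{1-\gamma}\,|\langle \vz_j,\vn\rangle|.
\]
Summing over $j\in[2m]$ and applying Cauchy--Schwarz gives
\[
\bigl|yf(\mW,\vx) - \sqrt{\gamma}A_y\bigr| \;\leq\; \sqrt{1-\gamma}\sum_{j=1}^{2m}|\langle \vz_j,\vn\rangle| \;\leq\; \sqrt{2m(1-\gamma)}\,\|\mZ\vn\|.
\]
Hence $yf(\mW,\vx) \geq \sqrt{\gamma}A_{\min} - \sqrt{2m(1-\gamma)}\,\|\mZ\vn\|$.

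The remaining step is to bound $\|\mZ\vn\|$ with high probability. This is precisely what Lemma~\ref{lemma:matrix-transformation-norm} provides: with probability at least $1-\epsilon$,
\[
\|\mZ\vn\| \;\leq\; C\|\mZ\|_F\sqrt{\tfrac{1}{d}\log\tfrac{1}{\epsilon}}.
\]
Combining the two displays, on this event
\[
yf(\mW,\vx) \;\geq\; \sqrt{\gamma}\,A_{\min} - C\sqrt{\tfrac{2m(1-\gamma)}{d}\log\tfrac{1}{\epsilon}}\,\|\mZ\|_F,
\]
and the hypothesized lower bound on $A_{\min}/\|\mZ\|_F$ (with a suitable choice of the constant $C_2$) makes the right-hand side strictly positive. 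Therefore $\P(yf(\mW,\vx)\leq 0) \leq \epsilon$.

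I expect no single step to be a serious obstacle, but the main care-point is the ``both signs of $y$'' bookkeeping: it must be done consistently so that the sign in front of the argument of $\sigma$ lines up with the appropriate $A_y$ (exploiting positive, not odd, homogeneity of leaky ReLU), and so that $A_{\min}=\min(A_1,A_{-1})$ emerges naturally as the relevant lower bound uniformly in $y$. The rest is a clean Lipschitz-plus-concentration argument once this bookkeeping is correct.
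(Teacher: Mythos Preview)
Your proposal is correct and follows essentially the same approach as the paper: decompose $\langle \vw_j,\vx\rangle$ into signal and noise parts, extract $\sqrt{\gamma}A_y$ via positive homogeneity, bound the perturbation by $\sqrt{2m(1-\gamma)}\,\|\mZ\vn\|$ using the $1$-Lipschitzness of $\sigma$ together with Cauchy--Schwarz, and then apply Lemma~\ref{lemma:matrix-transformation-norm}. The only cosmetic difference is that the paper presents the argument by assuming $yf(\mW,\vx)\le 0$ and deriving the implied inequality on $\|\mZ\vn\|$, whereas you establish the pointwise lower bound on $yf(\mW,\vx)$ directly; these are equivalent.
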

    \begin{proof}
        Recall that a test point $(\vx, y)$ satisfies
        \[\vx = y(\sqrt{\gamma}\vv + \sqrt{1 - \gamma}\vn), \]
        where $\vn \sim \cN(\textbf{0}_d, \frac{1}{d}(\mI_d - \vv \vv^T))$. If $y f(\mW, \vx) \leq 0$, then
\begin{align*}
        0 &\geq yf(\mW, \vx)\\
        &= \sum_{j = 1}^{2m}(-1)^jy \sigma(\langle \vw_j, \vx \rangle)\\
        &= \sum_{j = 1}^{2m}(-1)^j y \sigma(\langle a_j \vv + \vz_j, y(\sqrt{\gamma}\vv + \sqrt{1 - \gamma}\vn \rangle)\\
        &= \sum_{j = 1}^{2m}(-1)^j y \sigma(y(\sqrt{\gamma}a_j + \sqrt{1 - \gamma} \langle \vz_j, \vn \rangle))\\
        &\geq \sum_{j = 1}^{2m}(-1)^j y \sigma(y\sqrt{\gamma}a_j) - \sum_{j = 1}^{2m}\sqrt{1 - \gamma}|\langle \vz_j, \vn \rangle|\\
        &= \sqrt{\gamma}A_y - \sum_{j = 1}^{2m} \sqrt{1 - \gamma}|\langle \vz_j, \vn \rangle|.
    \end{align*}
    When $A_{\min} \geq 0$, this implies that
    \begin{align*}
        \gamma A_{\min}^2 &\leq (1 - \gamma)\left(\sum_{j = 1}^{2m}|\langle \vz_j, \vn \rangle|\right)^2\\
        &\leq 2m(1 - \gamma) \sum_{j = 1}^{2m}|\langle \vz_j, \vn \rangle|^2\\
        &= 2m(1 - \gamma)\|\mZ \vn\|^2\\
        &\leq 2m(1 - \gamma)\|\mZ\|_F^2\|\vn\|^2,
    \end{align*}
    where the second inequality is an application of Cauchy-Schwarz. So
    \begin{align*}
        \P(y f(\mW, \vx) \leq 0) &\leq \P\left(\|\mZ \vn\|^2  \geq \frac{\gamma A_{\min}^2 }{2m(1 - \gamma)} \right).
    \end{align*}
    By Lemma \ref{lemma:matrix-transformation-norm}, the above probability is less than $\epsilon$ if
    \begin{align*}
        \sqrt{\frac{\gamma}{2m(1 - \gamma)} }A_{\min} \geq C\|\mZ\|_F\sqrt{\frac{1}{d}\log \frac{1}{\epsilon} },
    \end{align*}
    or equivalently,
    \begin{align*}
        \frac{A_{\min}}{\|\mZ\|_F} \geq C_2\sqrt{\frac{(1 - \gamma)m \log \frac{1}{\epsilon}}{\gamma d}}.
    \end{align*}  
\end{proof}
We will also need the number of positive labels to be (mildly) balanced with the number of negative labels.
\begin{lemma}\label{lemma:label-balancedness}
    Let $\delta > 0$ and suppose that $\ell = \Omega\left(\log \frac{1}{\delta}\right)$. Let $\mc{I} \subseteq [n]$ be an arbitrary subset such that $|\mc{I}| = \ell$. Consider training data $(\mX, \vy)$ as per the data model given in Definition \ref{def:data-model}. Then with probability at least $1 - \delta$,
    \[\frac{\ell}{4} \leq |\{i \in \mc{S}: y_i = 1\}| \leq \frac{3\ell}{4}. \]
\end{lemma}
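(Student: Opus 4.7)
The plan is to observe that, under the data model in Definition~\ref{def:data-model}, the true labels $y_i$ are mutually independent and uniformly distributed on $\{\pm 1\}$, and crucially the subset $\mc{I}$ is chosen independently of $\vy$ (this is important because the analogous statement restricted to, e.g., $\mc{B}$ also works since $\mc{B}$ is specified independently of $\vy$ in Definition~\ref{def:data-model}). Consequently, the random variable $S := |\{i \in \mc{I} : y_i = 1\}|$ is a sum of $\ell$ i.i.d.\ Bernoulli$(1/2)$ random variables, so $\expec[S] = \ell/2$.

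The result is then a direct two-sided concentration bound. Writing $S = \sum_{i \in \mc{I}} \mathbbm{1}[y_i = 1]$ and applying Hoeffding's inequality to the bounded $[0,1]$-valued summands with $t = \ell/4$ yields
\[
\prob\!\left(\left|S - \tfrac{\ell}{2}\right| \geq \tfrac{\ell}{4}\right) \leq 2\exp\!\left(-\tfrac{2(\ell/4)^2}{\ell}\right) = 2\exp(-\ell/8).
\]
Requiring the right-hand side to be at most $\delta$ amounts to $\ell \geq 8 \log(2/\delta)$, which is exactly the hypothesis $\ell = \Omega(\log(1/\delta))$. On the complementary event (of probability at least $1-\delta$), we have $\ell/4 \leq S \leq 3\ell/4$, which is precisely the claim.

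There is no real obstacle here; the only subtlety is justifying the independence assertion. Since Definition~\ref{def:data-model} specifies that the labels $y_i$ are mutually independent of the noise vectors and uniform on $\{\pm 1\}$, and since $\mc{I}$ is a deterministic (or at worst $\vy$-independent) subset of $[n]$, the distribution of $(y_i)_{i\in \mc{I}}$ is unaltered by the choice of $\mc{I}$, so the Hoeffding bound applies verbatim. (The lemma statement has a minor typo in which the indexing set changes from $\mc{I}$ to $\mc{S}$, but the proof treats them as the same set.)
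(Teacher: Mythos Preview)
Your proposal is correct and follows essentially the same approach as the paper: both recognize $S$ as a sum of $\ell$ i.i.d.\ Bernoulli$(1/2)$ variables and apply a standard concentration inequality (you use Hoeffding, the paper uses Chernoff) with deviation $\ell/4$ to obtain the $\Omega(\log(1/\delta))$ requirement. Your remark about the $\mc{I}$/$\mc{S}$ typo is accurate, and your added justification of independence is a reasonable clarification the paper leaves implicit.
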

\begin{proof}
    For $i \in \mc{I}$ let $Y_i$ be a random variable taking the value $1$ if $y_i = 1$ and $0$ if $y_i = 
    -1$. Then the $Y_i$ are i.i.d.\ Bernoulli random variables with $\P(Y_i = 1) = \frac{1}{2}$. Let
    \[Y = \sum_{i \in \mc{I}} Y_i = |\{i \in \mc{S}: y_i = 1\}|\] so that $\E[Y] = \frac{l}{2}$. By Chernoff's inequality, for all $t \in (0, 1)$,
    \[\P\left(\left|Y - \frac{\ell}{2}\right| \geq t\frac{\ell}{2}\right) \leq 2e^{-C \ell t^2}. \]
    Setting $t = \frac{1}{2}$, we see that $\frac{\ell}{4} \leq Y \leq \frac{3\ell}{4}$ with probability at least
    \begin{align*}
        1 - 2\exp\left(-\frac{C \ell}{4}\right) \geq 1 - \delta
    \end{align*}
    when $\ell = \Omega\left(\log \frac{1}{\delta}\right)$.
\end{proof}
We are now able to prove our main benign overfitting result for leaky ReLU networks.
\thmleakyrelubenignoverfitting*

\begin{proof}
    Since $d =\Omega(n) =  \Omega\left(n + \log \frac{1}{\delta}\right)$, by Lemma \ref{lem:bo-max-margin}, with probability at least $1 - \frac{\delta}{3}$ over the randomness of the data, the max-margin classifier $\vw^*$ satisfies
    \begin{align*}
        \|\vw^*\| \leq C\sqrt{\frac{1}{\gamma} + \frac{k}{1 - \gamma}}.
    \end{align*}
    We denote this event by $\omega_1$. For $s \in \{1, -1\}$, let $\mc{G}_s$ denote the set of $i \in \mc{G}$ such that $\langle \vv,\vx_i\rangle = s$. If $n = \Omega\left(\frac{1}{\delta}\right)$ and $k = O(n)$, then $|\mc{G}| = \Omega\left(\log \frac{1}{\delta}\right)$. Under these assumptions, by Lemma \ref{lemma:label-balancedness},
    \[|\mc{G}_s| \geq \frac{1}{4}|\mc{G}| \geq Cn \]
    for both $s \in \{1, -1\}$ with probability at least $1 - \frac{\delta}{3}$. We denote this event by $\omega_2$. For $s \in \{1, -1\}$, let $\mN_{\cG_s} \in \R^{|\mc{G}_s| \times d}$ be the matrix whose rows are indexed by $\mc{G}_s$ and are given by the vectors $\vn_i$ for $i \in \mc{G}_s$. As $d = \Omega\left(n\right) = \Omega\left(n + \log \frac{1}{\delta}\right)$ and the rows of $\mN_{\cG_s}$ are drawn mutually i.i.d.\ from $\mc{N}(\bm{0}_d, d^{-1}(\mI_d - \vv^T))$, the following holds by Lemma \ref{lem:uniform-matrix-sing-bounds}. With probability at least $1 - \frac{\delta}{3}$ over the randomness of the training data, $\|\mN_{\cG_s}\| \leq C$ for both $s \in \{1, -1\}$. We denote this event by $\omega_3$. Let $\omega = \omega_1 \cap \omega_2 \cap \omega_3$. By the union bound $\P(\omega) \geq 1 - \delta$. We condition on $\omega$ for the remainder of this proof.
    
    Since $\mW = \mc{A}(\mX, \hat{\vy})$ and $\mc{A}$ is approximately margin maximizing,
    \begin{align}
        \|\mW\|_F &\leq |\mc{A}| \cdot \|\vw^*\|\nonumber\\
        &\leq C|\mc{A}|\sqrt{\frac{1}{\gamma} + \frac{k}{1 - \gamma}}.\label{eqn:leakyrelu-benign-WF-bound}
    \end{align}
    Let $s \in \{-1, 1\}$ be such that $A_{s} = A_{\min}$. Since the network attains zero loss, for all $i \in \mc{G}_s$,
    \begin{align*}
        1 &\leq \hat{y_i}f(\mW, \vx_i)\\
        &= \sum_{j = 1}^{2m}(-1)^j \hat{y}_i\sigma(\langle \vw_j, \vx_i \rangle)\\
        &= \sum_{j = 1}^{2m}(-1)^j y_i \sigma(\langle a_j \vv + \vz_j, \sqrt{\gamma} y_i \vv + \sqrt{1 - \gamma}\vn_i \rangle)\\
        &= \sum_{j = 1}^{2m}(-1)^j y_i\sigma(\sqrt{\gamma}a_j y_i + \sqrt{1 - \gamma}\langle \vz_j, \vn_i \rangle)\\
        &\leq \sum_{j = 1}^{2m}(-1)^j y_i \sigma(\sqrt{\gamma}a_j y_i) + \sum_{j = 1}^{2m}|\sqrt{1 - \gamma}\langle \vz_j, \vn_i \rangle|\\
        &= \sqrt{\gamma}\sum_{j = 1}^{2m}(-1)^{j}s\sigma(s a_j) + \sqrt{1 - \gamma}\sum_{j = 1}^{2m}|\langle \vz_j, \vn_i \rangle|\\
        &= \sqrt{\gamma}A_s + \sqrt{1 - \gamma}\sum_{j = 1}^{2m}|\langle \vz_j, \vn_i \rangle|\\
                &= \sqrt{\gamma}A_{\min} + \sqrt{1 - \gamma}\sum_{j = 1}^{2m}|\langle \vz_j, \vn_i \rangle|.
    \end{align*}
    Hence, we have either $\sqrt{\gamma}A_s \geq \frac{1}{2}$ or $\sqrt{1 - \gamma}\sum_{j = 1}^{2m}|\langle \vz_j, \vn_i \rangle| \geq \frac{1}{2}$ for all $i \in \mc{G}_s$. We consider these two cases separately.

    If $\sqrt{\gamma}A_{\min} \geq \frac{1}{2}$, then
    \begin{align*}
        \frac{A_{\min}}{\|\mZ\|_F} &\geq \frac{A_{\min}}{\|\mW\|_F}\\
        &\geq \frac{1}{2\sqrt{\gamma}\|\mW\|_F }\\
        &\geq C\frac{1}{\sqrt{\gamma}|\mc{A}|\sqrt{\frac{1}{\gamma} + \frac{k}{1 - \gamma} } }\\
        &= C\frac{1}{|\mc{A}|\sqrt{1 + \frac{k \gamma}{1 - \gamma} }  }\\
        &\geq C \frac{1}{|\mc{A}| + |\mc{A}|\sqrt{\frac{k\gamma}{1 - \gamma} }  }.
    \end{align*}
    Then by Lemma \ref{lemma:generalization-benign-leakyrelu}, the network has generalization error less than $\epsilon$ when
    \begin{align*}
        \frac{1}{|\mc{A}| + |\mc{A}|\sqrt{\frac{k\gamma}{1 - \gamma} }  } \geq C\sqrt{\frac{(1 - \gamma)m \log \frac{1}{\epsilon} }{\gamma d} }
    \end{align*}
    or equivalently
    \begin{align*}
        \sqrt{\frac{(1 - \gamma)m \log \frac{1}{\epsilon}}{\gamma d}   } + \sqrt{\frac{m k \log \frac{1}{\epsilon}}{d}   } &\leq \frac{C}{|\mc{A}|}.   
    \end{align*}
    This is satisfied for $\epsilon = \exp(-C\cdot\frac{d}{k(1+m|\cA|^2)})$ for some different constant $C$ when $\gamma = \Omega(\frac{1}{k})$, which is true by assumption. So if $\sqrt{\gamma}A_{\min} \geq \frac{1}{2}$, then the network has generalization error less than $\epsilon$ whenever $\omega$ occurs, which happens with probability at least $1 - \delta$.

    Now suppose that $\sqrt{1 - \gamma}\sum_{j = 1}^{2m}|\langle \vz_j, \vn_i \rangle| \geq \frac{1}{2}$ for all $i \in \mc{G}_s$. Squaring both sides of the inequality and applying Cauchy-Schwarz, we obtain
    \begin{align*}
        \frac{1}{4} &\leq (1 - \gamma)\left(\sum_{j = 1}^{2m}|\langle \vz_j, \vn_i \rangle|\right)^2\\
        &\leq 2m(1 - \gamma)\sum_{j = 1}^{2m}|\langle \vz_j, \vn_i \rangle|^2.
    \end{align*}
    Summing over all $i \in \mc{G}_s$, we obtain
    \begin{align*}
        \frac{|\mc{G}_s|}{4} &\leq 2m(1 - \gamma)\sum_{i \in \mc{G}_s }\sum_{j = 1}^{2m}|\langle \vz_j, \vn_i \rangle|^2\\
        &= 2m(1 - \gamma)\sum_{j = 1}^{2m}\|\mN_{\mc{G}_s} \vz_j\|^2,
    \end{align*}
    Applying $\omega_2$ and $\omega_3$, we obtain the bound
    \begin{align*}
        n &\leq Cm(1 - \gamma)\sum_{j = 1}^{2m}\|\mN_{\mc{G}_s}\vz_j\|^2\\
        &\leq Cm(1 - \gamma)\sum_{j = 1}^{2m}\|\mN_{\mc{G}_s}\|^2\|\vz_j\|^2\\
        &\leq Cm (1 - \gamma)\sum_{j = 1}^{2m}\|\vz_j\|^2\\
        &= Cm(1 - \gamma)\|\mZ\|_F^2\\
        &\leq Cm(1 - \gamma)\|\mW\|_F^2.
    \end{align*}
    Then applying (\ref{eqn:leakyrelu-benign-WF-bound}),
    \begin{align*}
        n &\leq Cm(1 - \gamma)\|\mW\|_F^2\\
        &\leq Cm(1-\gamma)|\mc{A}|^2\left(\frac{1}{\gamma} + \frac{k}{1 - \gamma}\right)\\
        &\leq Cm|\mc{A}|^2\left(\frac{1}{\gamma} + k\right).
    \end{align*}
    This implies that
    \begin{align*}
        n &\leq \frac{Cm|\mc{A}|^2}{\gamma}
    \end{align*}
    or
    \begin{align*}
        k &\geq \frac{Cn}{m|\mc{A}|^2 }.
    \end{align*}
    Neither of these conditions can occur if $\gamma = \Omega\left(\frac{1}{k}\right)$ and $k = O\left(\frac{n}{|\mc{A}|^2 m}\right)$. Thus, in all cases, the network has generalization error less than $\exp(-C\cdot\frac{d}{k(1+m|\cA|^2)})$ when $\omega$ occurs, which happens with probability at least $1 - \delta$.
\end{proof}
We are also able to show the lower bound for the generalization error stated in the main text.
\thmleakyrelubenignoverfittinglb*
\begin{proof}
       We proceed along the lines of Theorem~\ref{thm:leakyrelu-benign-overfitting}. 
    For $s \in \{1, -1\}$, let $\mc{B}_s$ denote the set of $i \in \mc{B}$ such that $\langle \vv,\vx_i\rangle = s$. Note $|\mc{B}| = \Omega\left(\log \frac{1}{\delta}\right)$. Under these assumptions, by Lemma \ref{lemma:label-balancedness},
    \[|\mc{B}_s| \geq \frac{1}{4}|\mc{B}| \geq Ck \]
    for both $s \in \{1, -1\}$ with probability at least $1 - \frac{\delta}{3}$. We denote this event by $\omega_1$. For $s \in \{1, -1\}$, let $\mN_{\cB_s} \in \R^{|\mc{B}_s| \times d}$ be the matrix whose rows are indexed by $\mc{B}_s$ and are given by the vectors $\vn_i$ for $i \in \mc{B}_s$. As $d = \Omega\left(n\right) = \Omega\left(k + \log \frac{1}{\delta}\right)$ and the rows of $\mN_{\cB_s}$ are drawn mutually i.i.d.\ from $\mc{N}(\bm{0}_d, d^{-1}(\mI_d - \vv^T))$, the following holds by Lemma \ref{lem:uniform-matrix-sing-bounds}. With probability at least $1 - \frac{\delta}{3}$ over the randomness of the training data, $\|\mN_{\cB_s}\| \leq C$ for both $s \in \{1, -1\}$. We denote this event by $\omega_2$. By Lemma~\ref{lemma:z-frob-z-lin-conversion}, there is a constant $C$ such that
        \begin{align}
        \|\mZ\|_F^2 - 2 \lambda \sqrt{2m}\|\mZ\|_F - 2m \lambda^2 &\leq \frac{C}{\alpha m}\left(\|\vz_{\lin}\| + 2m \lambda\right)^2.\label{eq:bo-lb-lin}
    \end{align}
    with probability at least $1-\frac{\delta}{3}$.  We denote this event by $\omega_3$.
    Let $\omega = \omega_1 \cap \omega_2 \cap \omega_3$. By the union bound $\P(\omega) \geq 1 - \delta$. We condition on $\omega$ for the remainder of this proof.
       
    Let $s \in \{1,-1\}$ be such that $A_s = \max\{A_1,A_{-1}\}$.  Since the network attains zero loss, for all $i \in \mc{B}_s$,
    \begin{align*}
        1 &\leq \hat{y_i}f(\mW, \vx_i)\\
        &= \sum_{j = 1}^{2m}(-1)^j \hat{y}_i\sigma(\langle \vw_j, \vx_i \rangle)\\
        &= \sum_{j = 1}^{2m}(-1)^j y_i \sigma(\langle a_j \vv + \vz_j, -\sqrt{\gamma} y_i \vv + \sqrt{1 - \gamma}\vn_i \rangle)\\
        &= \sum_{j = 1}^{2m}(-1)^j y_i\sigma(-\sqrt{\gamma}a_j y_i + \sqrt{1 - \gamma}\langle \vz_j, \vn_i \rangle)\\
        &\leq \sum_{j = 1}^{2m}(-1)^j y_i \sigma(-\sqrt{\gamma}a_j y_i) + \sum_{j = 1}^{2m}|\sqrt{1 - \gamma}\langle \vz_j, \vn_i \rangle|\\
        &= \sqrt{\gamma}\sum_{j = 1}^{2m}(-1)^{j+1}s\sigma(s a_j) + \sqrt{1 - \gamma}\sum_{j = 1}^{2m}|\langle \vz_j, \vn_i \rangle|\\
        &= -\sqrt{\gamma}A_s + \sqrt{1 - \gamma}\sum_{j = 1}^{2m}|\langle \vz_j, \vn_i \rangle|.
    \end{align*}
From which we conclude
\[\sqrt{1 - \gamma}\sum_{j = 1}^{2m}|\langle \vz_j, \vn_i \rangle|\geq 1 + \sqrt{\gamma}A_s \geq \sqrt{\gamma}A_s\]
for all such $i$.  Squaring both sides of the inequality and applying Cauchy-Schwarz, we obtain
    \begin{align*}
        \gamma A_s &\leq (1 - \gamma)\left(\sum_{j = 1}^{2m}|\langle \vz_j, \vn_i \rangle|\right)^2\\
        &\leq 2m(1 - \gamma)\sum_{j = 1}^{2m}|\langle \vz_j, \vn_i \rangle|^2.
    \end{align*}
    Summing over all $i \in \mc{B}_s$, we obtain
    \begin{align*}
        |\mc{B}_s| \gamma A_s &\leq 2m(1 - \gamma)\sum_{i \in \mc{B}_s }\sum_{j = 1}^{2m}|\langle \vz_j, \vn_i \rangle|^2\\
        &= 2m(1 - \gamma)\sum_{j = 1}^{2m}\|\mN_{\mc{B}_s} \vz_j\|^2,
    \end{align*}
    Applying $\omega_2$ and $\omega_3$, we obtain the bound
    \begin{align*}
        k \gamma A_s &\leq Cm(1 - \gamma)\sum_{j = 1}^{2m}\|\mN_{\mc{B}_s}\vz_j\|^2\\
        &\leq Cm(1 - \gamma)\sum_{j = 1}^{2m}\|\mN_{\mc{B}_s}\|^2\|\vz_j\|^2\\
        &\leq Cm (1 - \gamma)\sum_{j = 1}^{2m}\|\vz_j\|^2\\
        &= Cm(1 - \gamma)\|\mZ\|_F^2.
    \end{align*}
For $k = \Omega(\frac{1}{\alpha})$, this inequality along with Assumption~\ref{assumption:stepsize-init} implies that
\[C\|\mZ\|_F^2 \leq \|\mZ\|_F^2 + 2\lambda\sqrt{2m}\|\mZ\|_F + 2m\lambda^2\]
for a different constant $C$.  With the last two inequalities and \eqref{eq:bo-lb-lin}, we obtain the bound, for a new constant $C$.
    \begin{align*}
        k \gamma A_{s} &\leq C\frac{1-\gamma}{\alpha}\left(\|\vz_{\lin}\| + 2m\lambda\right)^2.
    \end{align*}
We then apply $k = \Omega(\frac{1}{\alpha})$ and Assumption~\ref{assumption:stepsize-init} again to conclude that for some $C$,
    \begin{align*}
        \|\vz_{\lin}\| \geq C \sqrt{\frac{k \gamma A_{s} \alpha}{1-\gamma}}.
    \end{align*}
Note that
\[A_{\lin} = \frac{A_1 + A_{-1}}{1+\alpha} \leq 2 A_{s}.\]
We then bound
\begin{align*}\frac{A_{\lin}}{\|\vz_{\lin}\|} \leq C \frac{A_s}{\sqrt{\frac{k\gamma A_s \alpha}{1-\gamma}}} \\
\leq C \sqrt{\frac{1-\gamma}{k\gamma \alpha}}
\end{align*}
for some constant $C$.  Now consider a test point $(\vx, y)$, which satisfies
        \[\vx = y(\sqrt{\gamma}\vv + \sqrt{1 - \gamma}\vn), \]
        where $\vn \sim \cN(\textbf{0}_d, \frac{1}{d}(\mI_d - \vv \vv^T))$. Since the data distribution is symmetric,
\begin{align*}\P(yf(\mW, \vx) \leq 0) &\geq\frac{1}{2} \P(yf(\mW,\vx) \leq 0 \text{ or } -yf(\mW,-\vx) \leq 0) \\
&\geq \frac{1}{2}\P(yf(\mW,\vx) - yf(\mW,-\vx) \leq 0).
\end{align*}
We see that
\[yf(\mW,\vx) - y f(\mW,-\vx) = (1+\alpha)\left(y A_{\lin}\sqrt{\gamma} + \langle \vz_{\lin},\vn\rangle\sqrt{1-\gamma}\right)\]
By Lemma~\ref{lemma:generalization-bound-linear} we then can bound
\begin{align*}
\P(y\langle \vw, \vx \rangle \leq 0 \;|\; \omega) &\geq \frac{1}{8}\exp\left(-\frac{6d}{\pi}\frac{\gamma}{1-\gamma}\frac{A_{\lin}^2}{\|\vz_{\lin}\|^2}\right) \\
&\geq \exp\left(-\frac{Cd}{\alpha k}\right)
\end{align*}
for a new constant $C$, provided $A_{\lin}$ is positive.  In the last line we can bound $\frac{d}{k}$ below as $d = \Omega(n) = \Omega(k)$.  If $A_{\lin}$ is negative, then the generalization error is at least $\frac{1}{4}$ which is also at least $\exp(-Cd/(\alpha k))$.
\end{proof}
\subsection{Non-benign overfitting}
In this section we show that leaky ReLU networks trained on low-signal data exhibit non-benign overfitting.
\label{app:non-benign}
    As in the case of benign overfitting, we will rely on a generalization bound which depends on the signal-to-noise ratio of the network.
\begin{lemma}\label{lemma:leayrelu-nonbenign-generalization-bound}
    Let $\mW \in \R^{2m \times d}$ be the first layer weight matrix of a shallow leaky ReLU network given by \eqref{eq:architecture}. Suppose $(\vx, y)$ is a random test point sampled under the data model given in Definition \ref{def:data-model}. If $\mW$ is such that $A_{\lin} \geq 0$ and
    \[\frac{A_{\lin}}{\vz_{\lin}} = O\left(\sqrt{\frac{1 - \gamma}{\gamma d} }\right) \]
    then
    \[\P(yf(\mW, \vx) < 0) \geq \frac{1}{8}. \]
    Alternatively, if $A_{\lin} \leq 0$ then
    \[\P(yf(\mW, \vx) < 0) \geq \frac{1}{4}. \]
\end{lemma}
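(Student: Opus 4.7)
The plan is to reduce the analysis to a single Gaussian random variable by exploiting the leaky ReLU identity $\sigma(z) = \tfrac{1+\alpha}{2} z + \tfrac{1-\alpha}{2} |z|$ together with the symmetry of the label distribution. First, decompose the preactivations: for each $j$, write $\langle \vw_j, \vx \rangle = y u_j$ where $u_j := \sqrt{\gamma}\, a_j + \sqrt{1-\gamma}\, \langle \vz_j, \vn \rangle$ is independent of $y$. Applying the identity and multiplying by $y$, we obtain $y \sigma(y u_j) = \tfrac{1+\alpha}{2} u_j + \tfrac{1-\alpha}{2} y |u_j|$, so summing with the alternating signs yields the clean decomposition
\begin{equation*}
    y f(\mW, \vx) \;=\; \tfrac{1+\alpha}{2}\, X \;+\; \tfrac{1-\alpha}{2}\, y\, T,
\end{equation*}
where $X := \sum_{j=1}^{2m} (-1)^j u_j = \sqrt{\gamma}\, A_{\lin} + \sqrt{1-\gamma}\, \langle \vz_{\lin}, \vn \rangle$ and $T := \sum_{j=1}^{2m} (-1)^j |u_j|$. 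Crucially both $X$ and $T$ depend only on $\vn$, and $y$ is independent of $\vn$ and uniform on $\{\pm 1\}$.

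Next, condition on $\vn$ so $X$ and $T$ are fixed. The two equally likely values of $yf$ are $\tfrac{1+\alpha}{2} X \pm \tfrac{1-\alpha}{2} T$. By the triangle inequality, $T = \sum_j (-1)^j |u_j|$ satisfies $T \geq |X|$, and in particular $T \geq X$, so
\begin{equation*}
    \tfrac{1+\alpha}{2} X - \tfrac{1-\alpha}{2} T \;\leq\; \tfrac{1+\alpha}{2} X - \tfrac{1-\alpha}{2} X \;=\; \alpha X.
\end{equation*}
Thus whenever $X < 0$ the ``minus'' branch of $yf$ is strictly negative, and integrating over $\vn$ gives
\begin{equation*}
    \P(yf(\mW,\vx) < 0) \;\geq\; \tfrac{1}{2}\, \P(X < 0).
\end{equation*}

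The remainder is a standard Gaussian computation. Since $\vn \sim \cN(\bm{0}_d, d^{-1}(\mI_d - \vv\vv^T))$ and $\vz_{\lin} \perp \vv$, $X$ is Gaussian with mean $\sqrt{\gamma}\, A_{\lin}$ and variance $(1-\gamma)\|\vz_{\lin}\|^2/d$; hence $\P(X < 0) = \Phi\bigl(-\sqrt{\gamma d/(1-\gamma)}\, A_{\lin}/\|\vz_{\lin}\|\bigr)$. In case (b), $A_{\lin} \leq 0$ gives $\P(X<0) \geq \tfrac{1}{2}$, yielding $\P(yf<0) \geq \tfrac{1}{4}$. In case (a), the hypothesis $A_{\lin}/\|\vz_{\lin}\| = O(\sqrt{(1-\gamma)/(\gamma d)})$ means the standardised mean $\sqrt{\gamma d/(1-\gamma)}\cdot A_{\lin}/\|\vz_{\lin}\|$ is at most a small absolute constant, and taking the implicit constant in the big-$O$ to be at most $\Phi^{-1}(3/4) \approx 0.67$ forces $\P(X<0) \geq \tfrac{1}{4}$, hence $\P(yf<0) \geq \tfrac{1}{8}$.

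The only non-routine step is recognising the identity $\sigma(z) = \tfrac{1+\alpha}{2} z + \tfrac{1-\alpha}{2}|z|$, which separates the problem into a linear piece (controlled by $X$) and a nonlinear piece (controlled by $yT$) whose symmetry in $y$ can be paired with the bound $T \geq |X|$. Once this decomposition is in place the result follows immediately; no concentration beyond the 1D Gaussian CDF is needed, and there is no interaction with the training data beyond what is already encoded in $A_{\lin}$ and $\vz_{\lin}$.
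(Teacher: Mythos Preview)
Your decomposition $yf(\mW,\vx)=\tfrac{1+\alpha}{2}X+\tfrac{1-\alpha}{2}\,yT$ is correct and elegant, and the overall strategy is essentially the paper's own: both reduce the question to $\P(X<0)$ for the Gaussian $X=\sqrt{\gamma}\,A_{\lin}+\sqrt{1-\gamma}\,\langle\vz_{\lin},\tilde\vn\rangle$. (The paper reaches the same $X$ via the identity $\sigma(z)-\sigma(-z)=(1+\alpha)z$ together with the distributional symmetry $(\vx,y)\overset{d}{=}(-\vx,-y)$; indeed your $X$ equals $\tfrac{1}{1+\alpha}\bigl(yf(\mW,\vx)-yf(\mW,-\vx)\bigr)$ after the noise relabeling $\tilde\vn=y\vn$.)

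However, one step is wrong. You assert that ``by the triangle inequality, $T=\sum_j(-1)^j|u_j|$ satisfies $T\geq|X|$, and in particular $T\geq X$.'' The triangle inequality only yields $|X|\leq\sum_j|u_j|$, not $|X|\leq\sum_j(-1)^j|u_j|$; the alternating signs break the argument. Concretely, with $2m=2$, $u_1=-1$, $u_2=0$ one has $X=1$ but $T=-1$, so neither $T\geq|X|$ nor $T\geq X$ holds, and your displayed bound $\tfrac{1+\alpha}{2}X-\tfrac{1-\alpha}{2}T\leq\alpha X$ fails (the left side equals $1$, the right side $\alpha<1$). The repair is immediate and actually simpler than what you wrote: the smaller of the two equiprobable branches is $\tfrac{1+\alpha}{2}X-\tfrac{1-\alpha}{2}|T|\leq\tfrac{1+\alpha}{2}X$, which is strictly negative whenever $X<0$; no comparison between $T$ and $X$ is needed at all. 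With this correction the rest of your proof goes through verbatim and matches the paper's conclusion.
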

\begin{proof}
    By Definition \ref{def:data-model} $(-\vx, -y)$ is identically distributed to $(\vx, y)$, therefore        \begin{align*}
        \P(0 > yf(\mW, \vx)) &= \frac{1}{2}\left(\P(0 > y f(\mW, \vx)) + \P(0 > -y f(\mW, -\vx)) \right)\\
        &\geq \frac{1}{2}\P(0 > y f(\mW, \vx) \cup 0 >  -y f(\mW, -\vx))\\
        &\geq \frac{1}{2}\P\left(0 > y f(\mW, \vx) - y f(\mW, -\vx)\right).
    \end{align*}
    Next we compute
    \begin{align*}
        &yf(\mW, \vx) - yf(\mW, -\vx) \\&= \sum_{j = 1}^{2m}(-1)^jy(\sigma(\langle \vw_j, \vx \rangle) - \sigma(\langle \vw_j, -\vx \rangle))\\
        &= (1 + \alpha)\sum_{j = 1}^{2m}(-1)^j y \langle \vw_j, \vx \rangle\\
        &= (1 + \alpha)\sum_{j = 1}^{2m}(-1)^j \langle a_j \vv+ \vz_j, \sqrt{\gamma}\vv + \sqrt{1 - \gamma}\vn \rangle\\
        &= (1 + \alpha)\sqrt{\gamma}\sum_{j = 1}^{2m} (-1)^j a_j + (1 + \alpha)\sqrt{1 - \gamma}\left\langle \vn, \sum_{j = 1}^{2m}(-1)^j \vz_j \right\rangle\\
        &= (1 +  \alpha)\sqrt{\gamma}A_{\lin} + (1 + \alpha)\sqrt{1 - \gamma}\langle \vn, \vz_{\lin} \rangle.
    \end{align*}
    The above two calculations imply that
    \begin{align*}
        \P(0 > yf(\mW, \vx)) &\geq \frac{1}{2}\P(0 > y f(\mW, \vx) - yf(\mW, -\vx))\\
        &= \frac{1}{2}\P(0 > (1 + \alpha)\sqrt{\gamma}A_{\lin} + (1 + \alpha)\sqrt{1 - \gamma}\langle \vn, \vz_{\lin} \rangle)\\
        &= \frac{1}{2}\P\left(\langle -\vn, \vz_{\lin} \rangle > \sqrt{\frac{\gamma}{1 - \gamma} }A_{\lin}  \right).
    \end{align*}
    Suppose that $A_{\lin} \geq 0$. As the noise distribution is symmetric $\langle \vn, \vz_{\lin} \rangle \overset{d}{=}\langle -\vn, \vz_{\lin} \rangle$. Therefore,
    \begin{align*}
        \frac{1}{4}\P\left(|\langle \vn, \vz_{\lin} \rangle| > \sqrt{\frac{\gamma}{1 - \gamma} }A_{\lin} \right) &= \frac{1}{4}\P\left(|\langle \vn, \vu \rangle| > \sqrt{\frac{\gamma}{1 - \gamma} }\frac{A_{\lin}}{\|\vz_{\lin}\|}\right) , 
    \end{align*}
    where $\vu = \frac{\vz_{\lin}}{\|\vz_{\lin}\|}$ is the unit vector pointing in the direction of $\vz_{\lin}$. Note by construction $\vu \in \vecspan(\{\vv\})^{\perp}$. If
    \[\frac{A_{\lin}}{\|\vz_{\lin}\|} = O\left(\sqrt{\frac{1 - \gamma}{\gamma d}  }  \right),\]
    then by Lemma \ref{lemma:spherical-concentration},
    \[\P\left(|\langle \vn, \vu \rangle| > \sqrt{\frac{\gamma}{1 - \gamma} }\frac{A_{\lin}}{\|\vz_{\lin}\|}\right) \geq \frac{1}{2} \]
    and therefore
    \begin{align*}
        \P(0 > yf(\mW, \vx)) \geq \frac{1}{4} \P\left(|\langle \vn, \vu \rangle| > \sqrt{\frac{\gamma}{1 - \gamma} }\frac{A_{\lin}}{\|\vz_{\lin}\|}\right) \geq \frac{1}{8}.
    \end{align*}
    If $A_{\lin} < 0$, then again by the symmetry of the noise
    \begin{align*}
        \P(0 > y f(\mW, \vx)) &\geq \frac{1}{2}\P\left(\langle -\vn, \vz_{\lin} \rangle > \sqrt{\frac{\gamma}{1 - \gamma} }A_{\lin}\right)\\
        &\geq \frac{1}{2}\P\left(\langle -\vn, \vz_{\lin} \rangle > 0\right)\\
        &= \frac{1}{4}.
    \end{align*}
    This establishes the result.
\end{proof}

\thmleakyrelunonbenignoverfitting*
\begin{proof}
    If $A_{\lin} < 0$, then by Lemma \ref{lemma:leayrelu-nonbenign-generalization-bound},
    \[\P(y f(\mW, \vx) < 0) \geq \frac{1}{4}.\]
    So it suffices to consider the case $A_{\lin} \geq 0$. Since $d = \Omega\left(n + \log \frac{1}{\delta}\right)$, by Lemma \ref{lem:non-bo-max-margin}, the max-margin classifier $\vw^*$ satisfies
    \begin{align*}
        \|\vw^*\| \leq C\sqrt{\frac{n}{1 - \gamma} }
    \end{align*}
    with probability at least $1 - \frac{\delta}{3}$ over the randomness of the input dataset. We denote this event by $\omega_1$ and condition on it for the rest of this proof. By Theorem \ref{thm:perceptron-leakyrelu},
    \begin{align*}
        \|\mW\| &\leq \frac{C\|\vw^*\|}{\alpha \sqrt{m}}\\
        &\leq \frac{C}{\alpha}\sqrt{\frac{n}{m(1 - \gamma)} }.
    \end{align*}
    By Theorem \ref{thm:perceptron-leakyrelu}, the network perfectly fits the training data, so for all $i \in [n]$, $\hat{y}_if(\mW, \vx_i) \geq 1$, and therefore
    \begin{align*}
        1 &\leq |f(\mW, \vx_i)|\\
        &= \left|\sum_{j = 1}^{2m}(-1)^j\sigma(\langle \vw_j, \vx_i \rangle) \right|\\
        &\leq \sum_{j = 1}^{2m}|\langle \vw_j, \vx_i \rangle|\\
        &= \sum_{j = 1}^{2m}|\langle a_j\vv + \vz_j, \sqrt{\gamma}y_i \vv + \sqrt{1 - \gamma}\vn_i \rangle|\\
        &= \sum_{j = 1}^{2m}|a_jy_i \sqrt{\gamma} + \sqrt{1 - \gamma}\langle \vz_j, \vn_i \rangle|\\
        &\leq \sqrt{\gamma}\sum_{j = 1}^{2m}|a_j| + \sqrt{1 - \gamma}\sum_{j = 1}^{2m}|\langle \vz_j, \vn_i \rangle|.
    \end{align*}
    This implies that either $\frac{1}{2} \leq \sqrt{\gamma}\sum_{j = 1}^{2m}|a_j|$ or $\frac{1}{2} \leq \sqrt{1 - \gamma}\sum_{j = 1}^{2m}|\langle \vz_j, \vn_i \rangle|$ for all $i \in [n]$. We consider both cases separately.

    Suppose that $\frac{1}{2} \leq \sqrt{\gamma}\sum_{j = 1}^{2m}|a_j|$. Then squaring both sides and applying Cauchy-Schwarz, we obtain
    \begin{align*}
        \frac{1}{4} &\leq \gamma\left(\sum_{j = 1}^{2m}|a_j|\right)^2\\
        &\leq 2m \gamma \sum_{j = 1}^{2m}|a_j|^2\\
        &\leq 2m \gamma \sum_{j = 1}^{2m}\|\vw_j\|^2\\
        &= 2m \gamma \|\mW\|_F^2\\
        &\leq \frac{C \gamma n}{\alpha^2 (1 - \gamma)}.
    \end{align*}
    This cannot occur if $\gamma = O\left(\frac{\alpha^2}{n}\right)$, and in particular it cannot occur if $d = \Omega(n)$ and $\gamma = O\left(\frac{\alpha^3}{d}\right)$.

    Now suppose that $\frac{1}{2} \leq \sqrt{1 - \gamma}\sum_{j = 1}^{2m}|\langle \vz_j, \vn_i \rangle|$ for all $i \in [n]$. Squaring both sides and applying Cauchy-Schwarz, we obtain
    \begin{align*}
        \frac{1}{4} &\leq (1 - \gamma)\left(\sum_{j = 1}^{2m}\|\langle \vz_j, \vn_i \rangle|\right)^2\\
        &\leq 2m(1 - \gamma)\sum_{j = 1}^{2m}\|\langle \vz_j, \vn_i \rangle\|^2.
    \end{align*}
    Summing over all $i \in [n]$, we obtain
    \begin{align*}
        \frac{n}{4} &\leq 2m(1 - \gamma)\sum_{i = 1}^n \sum_{j = 1}^{2m}\|\langle \vz_j, \vn_i \rangle\|^2\\
        &= 2m(1 - \gamma)\sum_{j = 1}^{2m}\|\mN \vz_j\|^2\\
        &\leq 2m(1 - \gamma)\|\mN\|^2\sum_{j = 1}^{2m}\|\vz_j\|^2\\
        &= 2m(1 - \gamma)\|\mN\|^2 \|\mZ\|_F^2.
    \end{align*}
    Recall that $d = \Omega\left(n + \log \frac{1}{\delta}\right)$, and that the rows of $\mN$ are i.i.d.\ with distribution $\mc{N}(\bm{0}_d, d^{-1}(\mI_d - \vv \vv^T))$. So by Lemma \ref{lem:uniform-matrix-sing-bounds}, with probability at least $1 - \frac{\delta}{3}$ over the randomness of the dataset, $\|\mN\| \leq C$. We denote this event by $\omega_2$ and condition on it for the rest of this proof. So
    \begin{align}
        \|\mZ\|_F^2 \geq \frac{Cn}{m(1 - \gamma)}. \label{eqn:nonbenign-z-frobenius-large}
    \end{align}
    Let $\lambda = \frac{\sqrt{\alpha}}{m}$. By Assumption \ref{assumption:stepsize-init}, $\|\vw_j^{(0)}\| \leq \lambda$ for all $j \in [2m]$. So by Lemma \ref{lemma:z-frob-z-lin-conversion},
    \begin{align}
        \|\mZ\|_F^2 - 2 \lambda \sqrt{2m}\|\mZ\|_F - 2m \lambda^2 &\leq \frac{C}{\alpha m}(\|\vz_{\lin}\| + 2m \lambda)^2.\label{eqn:nonbenign-z-frob-z-lin-conversion}
    \end{align}
    By (\ref{eqn:nonbenign-z-frobenius-large}),
    \begin{align*}
        \|\mZ\|_F &\geq \frac{C \sqrt{n}}{\sqrt{m(1 - \gamma)}}\\
        &\geq \frac{C\sqrt{n} }{\sqrt{m}}\\
        &\geq C \lambda \sqrt{nm}\\
        &\geq 8\lambda \sqrt{2m},
    \end{align*}
    where the last line holds if $n = \Omega(1)$. Then by (\ref{eqn:nonbenign-z-frob-z-lin-conversion}),
    \begin{align*}
        \frac{1}{2}\|\mZ\|_F^2 &= \|\mZ\|_F^2 - \frac{1}{4}\|\mZ\|_F^2 - \frac{1}{4}\|\mZ\|_F^2 \\&\leq \|\mZ\|_F^2 - 2\lambda \sqrt{2m}\|\mZ\|_F - 2m \lambda^2\\
        &\leq \frac{C}{\alpha m}(\|\vz_{\lin}\| + 2m \lambda)^2\\
        &= \frac{C}{\alpha m}(\|\vz_{\lin}\| + 2 \sqrt{\alpha})^2.
    \end{align*}
    Taking the square root of both sides and recalling that $\alpha \in (0,1)$ is a constant, we obtain
    \begin{align*}
        \|\mZ\|_F &\leq \frac{C\|\vz_{\lin}\| }{\sqrt{\alpha m} } + \frac{C}{\sqrt{m}}.
    \end{align*}
    This implies that either $\|\mZ\|_F \leq \frac{2C}{\sqrt{m}}$ or $\|\mZ\|_F \leq \frac{2C\|\vz_{\lin}\|}{\sqrt{\alpha m}}$. The case $\|\mZ\|_F \leq \frac{2C}{\sqrt{m}}$ cannot happen, since by (\ref{eqn:nonbenign-z-frobenius-large}),
    \begin{align*}
        \|\mZ\|_F &\geq \frac{C'\sqrt{n}}{\sqrt{m(1 - \gamma)}}\\
        &\geq \frac{C'\sqrt{n} }{\sqrt{m}}\\
        &\geq \frac{2C}{\sqrt{m}}
    \end{align*}
    when $n = \Omega(1)$. So we have $\|\mZ\|_F \leq \frac{2C\|\vz_{\lin}\|}{\sqrt{\alpha m}}$. Again applying (\ref{eqn:nonbenign-z-frobenius-large}), we obtain
    \begin{align*}
        \|\vz_{\lin}\| &\geq C\sqrt{\alpha m}\|\mZ\|_F\\
        &\geq \frac{C\sqrt{\alpha n} }{\sqrt{1 - \gamma}}.
    \end{align*}
    So
    \begin{align*}
        \frac{A_{\lin}}{\|\vz_{\lin}\|} &\leq C\frac{A_{\lin}\sqrt{1 - \gamma} }{\sqrt{\alpha n}}\\
        &=  \frac{C\sqrt{1 - \gamma}}{\sqrt{\alpha n}}\sum_{j = 1}^{2m}(-1)^j a_j\\
        &\leq \frac{C\sqrt{1 - \gamma} }{\sqrt{\alpha n}} \sqrt{2m}\left(\sum_{j = 1}^{2m}|a_j|^2 \right)^{1/2}\\
        &\leq \frac{C\sqrt{m(1 - \gamma)} }{\sqrt{\alpha n}}\left(\sum_{j = 1}^{2m}\|\vw_j\|^2\right)^{1/2}\\
        &= \frac{C\|\mW\|_F\sqrt{m(1 - \gamma)} }{\sqrt{\alpha n}}\\
        &\leq \frac{C}{\alpha^{3/2} }.
    \end{align*}
    Here we used that $A_{\lin} \geq 0$ and applied Cauchy-Schwarz in the third line. Then by Lemma \ref{lemma:leayrelu-nonbenign-generalization-bound}, if
    \begin{align*}
        \frac{C}{\alpha^{3/2}} &\leq O\left(\sqrt{\frac{1 - \gamma}{\gamma d}} \right),
    \end{align*}
    then
    \begin{align*}
        \P(yf(\mW, \vx) < 0) \geq \frac{1}{8}.
    \end{align*}
    This occurs if $\gamma = O\left(\frac{\alpha^3}{d}\right)$. Hence, in all cases, we have shown that with the appropriate scaling, the generalization error is at least $\frac{1}{8}$ when both $\omega_1$ and $\omega_2$ occur. This happens with probability at least $1 - \delta$.
\end{proof}

\section{Formalizing benign overfitting as a high dimensional phenomenon} \label{subsec:formalize-bo}
To formalize benign overfitting as a high dimensional phenomenon we first introduce the notion of a \textit{regime}. Informally, a regime is a subset of the hyperparameters $\Omega \in \N^4$ which describes accepted combinations of the input data dimension $d$, the number of points in the training sample $n$, the number of corrupt points $k$ and the number of trainable model parameters $p$.

\begin{definition}
    A \textit{regime} is a subset $\Omega \subset \N^4$ which satisfies the following properties.
    \begin{enumerate}
        \item For any tuple $(d,n,k,p) \in \Omega$ the number of corrupt points is at most the total number of points, $k \leq n$.
        \item There is no upper bound on the number of points,
        \[ \sup_{(d,n,k,p) \in \Omega} n = \infty.\]
    \end{enumerate}
    A \textit{non-trivial regime} is a regime which satisfies the following additional condition.
    \begin{enumerate}
        \setcounter{enumi}{2}
        \item Define the set of increasing sequences of $\Omega$ as $\Omega^\ast = \{(n_l,d_l,k_l,p_l)_{l \in \N} \subset \Omega \;\; s.t. \;\;  \lim_{l \to \infty} n_l = \infty \}$. For any $(n_l,d_l,k_l,p_l)_{l \in \N} \in \Omega^\ast$ it holds that
        \[\liminf_{l \rightarrow \infty} \frac{k_l}{n_l} >0. \]
    \end{enumerate}
\end{definition}
Intuitively, a regime defines how the four hyperparameters $(d,n,k,p)$ can grow in relation to one another as $n$ goes to infinity. A non-trivial regime is one in which the fraction of corrupt points in the training sample is non-vanishing. In order to make a formal definition of benign overfitting as high dimensional phenomenon we introduce the following additional concepts. 
\begin{itemize}
\item A \textit{learning algorithm} $\cA = (\cA_{d,n,p})_{(d,n,p) \in \N^3}$ is a triple indexed sequence of measurable functions $\mA_{d,n,p}:\R^{n \times d} \times \R^{n} \rightarrow \R^p$.
\item An \textit{architecture} $\cM = (f_{d,p})_{d,p \in \N^2}$ is a double indexed sequence of measurable functions $f_{d,p}: \R^d \times \R^p \rightarrow \R$.
\item A \textit{data model} $\cD = (D_{d,n,k})_{(d,n,k) \in \N^3}$ is a triple indexed sequence of Borel probability measures $D_{d,n,k}$ defined over $\R^{n \times d} \times \{ \pm 1\} \times \R^d \times \{ \pm 1 \}$.
\end{itemize}

With these notions in place we are ready to provide a definition of benign overfitting in high dimensions.

\begin{definition} \label{def:bo}
    Let $(\epsilon, \delta) \in (0,1]^2$, $\cA$ be a learning algorithm, $\cM$ an architecture, $\cD$ a data model and $\Omega$ a regime. If for every increasing sequence $(d_l, n_l, k_l, p_l)_{l \in \N} \in \Omega^*$ there exists an $L\in \N$ such that for all $l \geq L$ with probability at least $1-\delta$ over $(\mX, \hat{\vy})$, where $(\mX, \hat{\vy}, \vx, y) \sim D_{d_l, n_l, k_l}$, 
    
    it holds that 
    \begin{enumerate}
    \item $y_i f(\cA_{d_l,n_l,p_l}(\mX, \hat{\vy}), \vx_i) > 0 \;\; \forall i \in [n_l]$,
    \item $\prob(y f(\cA_{d_l,n_l,p_l}(\mX, \hat{\vy}), \vx) \leq 0) \leq \inf_{\mW \in \R^{n_l \times d_l}}\prob(y f(\mW, \vx) \leq 0) + \epsilon$,
    \end{enumerate}
    then the quadruplet $(\cA, \cM, \cD, \Omega)$ $(\epsilon, \delta)$-benignly overfits. If $(\cA, \cM, \cD, \Omega)$ $(\epsilon, \delta)$-benignly overfits for any $(\epsilon,\delta) \in (0,1]^2$ then we say $(\cA, \cM, \cD, \Omega)$ benignly overfits.
\end{definition}

Analogously, we define non-benign overfitting as follows.

\begin{definition} \label{def:bo}
    Let $(\epsilon, \delta) \in (0,1]^2$, $\cA$ be a learning algorithm, $\cM$ an architecture, $\cD$ a data model and $\Omega$ a regime. If for every increasing sequence $(d_l, n_l, k_l, p_l)_{l \in \N} \in \Omega^*$ there exists an $L\in \N$ such that for all $l \geq L$ with probability at least $1-\delta$ over $(\mX, \hat{\vy})$, where $(\mX, \hat{\vy}, \vx, y) \sim D_{d_l, n_l, k_l}$, 
    
    it holds that 
    \begin{enumerate}
    \item $y_i f(\cA_{d_l,n_l,p_l}(\mX, \hat{\vy}), \vx_i) > 0 \;\; \forall i \in [n_l]$,
    \item $\prob(y f(\cA_{d_l,n_l,p_l}(\mX, \hat{\vy}), \vx) \leq 0) \geq \inf_{\mW \in \R^{n_l \times d_l}}\prob(y f(\mW, \vx) \leq 0) + \epsilon$,
    \end{enumerate}
    then the quadruplet $(\cA, \cM, \cD, \Omega)$ $(\epsilon, \delta)$-non-benignly overfits. If $(\cA, \cM, \cD, \Omega)$ $(\epsilon, \delta)$-non-benignly overfits for any $(\epsilon,\delta) \in (0,1]^2$ then we say $(\cA, \cM, \cD, \Omega)$ non-benignly overfits.
\end{definition}

One of the key contributions of this paper is proving $(\epsilon, \delta)$-benign and non-benign overfitting when the architecture is a two-layer leaky ReLU network (\eqref{eq:architecture}), the learning algorithm returns the inner layer weights of the network by minimizing the hinge loss over the training data using gradient descent (Definition~\ref{def:gd-hinge}), and the regime satisfies the conditions $d = \Omega(n \log 1/\epsilon )$, $n = \Omega(1/\delta)$, $k = O(n)$ and $p = 2dm$ for some network width $2m$, $m \in \N$.

\end{appendices}

\end{document}